\newtheorem{theorem}{Theorem}
\newtheorem{lemma}[theorem]{Lemma}
\newtheorem{proposition}[theorem]{Proposition}
\newtheorem{corollary}[theorem]{Corollary}
\theoremstyle{definition}
\newtheorem{example}[theorem]{Example}
\newtheorem{definition}[theorem]{Definition}
\renewcommand{\P}{\mathbb{P}}
\newcommand{\sgn}{\operatorname{sgn}}
\newcommand{\mc}{\mathcal}
\newcommand{\conv}{\operatorname{conv}} 
\DeclareMathOperator{\rank}{rank}
\def\eqref#1{equation~\ref{#1}}
\def\1{\bm{1}}
\DeclareMathAlphabet{\mathsfit}{\encodingdefault}{\sfdefault}{m}{sl}
\SetMathAlphabet{\mathsfit}{bold}{\encodingdefault}{\sfdefault}{bx}{n}
\newcommand{\E}{\mathbb{E}}
\newcommand{\R}{\mathbb{R}}
\title{Mildly Overparameterized ReLU Networks Have a Favorable Loss Landscape}
\author{\name Kedar Karhadkar \email kedar@math.ucla.edu \\
      \addr 
      University of California, Los Angeles
      \AND
      \name Michael Murray \email mmurray@math.ucla.edu\\
      \addr 
      University of California, Los Angeles
      \AND
      \name Hanna Tseran \email hanna.tseran@gmail.com \\
      \addr University of Tokyo
      \AND
      \name Guido Mont\'ufar \email montufar@math.ucla.edu \\
      \addr University of California, Los Angeles\\Max Planck Institute for Mathematics in the Sciences
      }
\begin{document}

\maketitle

\begin{abstract}
   We study the loss landscape of both shallow and deep, mildly overparameterized ReLU neural networks on a generic finite input dataset for the squared error loss. We show both by count and volume that most activation patterns correspond to parameter regions with no bad local minima. Furthermore, for one-dimensional input data, we show most activation regions realizable by the network contain a high dimensional set of global minima and no bad local minima. We experimentally confirm these results by finding a phase transition from most regions having full rank Jacobian to many regions having deficient rank depending on the amount of overparameterization. 
    \end{abstract}
    

  
  \section{Introduction}
  The optimization landscape of neural networks has been a topic of enormous interest over the years. 
  A particularly puzzling question is why bad local minima do not seem to be a problem for training. 
  In this context, an important observation is that overparameterization can yield a more benevolent optimization landscapes. While this type of observation can be traced back at least to the works of \citet{155333,107014}, it has increasingly become a focus of investigation in recent years. 
  In this article we follow this line of work and describe the optimization landscape of a moderately overparameterized 
  ReLU network with a view on the different activation regions of the parameter space. 
  %

  %
  
   Before going into the details of our results we first provide some context. The existence of non-global local minima 
  has been documented in numerous works. 
  This is the case even for networks without hidden units \citep{Sontag1989BackpropagationCG} and single units \citep{NIPS1995_3806734b}. 
  For shallow networks, \citet{FUKUMIZU2000317} showed how local minima and plateaus can arise from the hierarchical structure of the model for general types of targets, loss functions and activation functions. 
  Other works have also constructed concrete examples of non-global local minima for several architectures \citep{swirszcz2017local}. 
  While these works considered finite datasets, 
  \citet{safran2018spurious} observed that spurious minima are also common for the student-teacher population loss of a two-layer ReLU network with unit output weights. 
  In fact, for ReLU networks \citet{yun2018small,Goldblum2020Truth} show that if linear models cannot perfectly fit the data, one can construct local minima that are not global. 
  \citet{He2020Piecewise} shows the existence of spurious minima for arbitrary piecewise linear (non-linear) activations and arbitrary continuously differentiable loss functions. 
  
  A number of works suggest overparameterized networks have a more benevolent optimization landscape than underparameterized ones. 
  For instance, \citet{soudry2016no} show for mildly overparameterized networks with leaky ReLUs that for almost every dataset every differentiable local minimum of the squared error training loss is a zero-loss global minimum, provided a certain type of dropout noise is used. In addition, for the case of one hidden layer they show this is the case whenever the number of weights in the first layer matches the number of training samples, $d_0d_1\geq n$. 
  %
  %
  In another work, \cite{pmlr-v48-safran16} used a computer aided approach to study the existence of spurious local minima, arriving at the key finding that when the student and teacher networks are equal in width spurious local minima are commonly encountered. However, under only mild overparameterization, 
  no spurious local minima were identified, implying again that overparameterization leads to a more benign loss landscape. 
  In the work of \cite{analytical_formula_tian}, the student and teacher networks are assumed to be equal in width and it is shown that if the dimension of the data is sufficiently large relative to the width, then the critical points 
  lying outside the span of the ground truth weight vectors of the teacher network form manifolds. 
  Another related line of works studies the connectivity of sublevel sets \citep{pmlr-v97-nguyen19a} and bounds on the overparameterization that ensures existence of descent paths \citep{Sharifnassab2020Bounds}. 
  
  The key objects in our investigation will be the rank of the Jacobian of the parametrization map over a finite input dataset, and the combinatorics of the corresponding subdivision of parameter space into pieces where the loss is differentiable. 
  The Jacobian map captures the local dependency of the functions represented over the dataset on their parameters. 
  The notion of connecting parameters and functions is prevalent in old and new studies of neural networks: for instance, in  
  discussions of parameter symmetries \citep{6796044}, 
  functional equivalence \citep{phuong2020functional},
  functional dimension \citep{grigsby2022functional}, 
  the question of when a ReLU network is a universal approximator over a finite input dataset \citep{NEURIPS2019_dbea3d0e}, as well as in studies involving the neural tangent kernel \citep{NEURIPS2018_5a4be1fa}. 
  
  We highlight a few works that take a geometric or combinatorial perspective to discuss the optimization landscape. 
  Using dimension counting arguments, 
  \citet{cooper2018loss} showed that, under suitable overparameterization and smoothness assumptions, the set of zero-training loss parameters has the expected codimension, equal to the number of training data points. 
  In the case of ReLU networks, \citet{pmlr-v80-laurent18b} used the piecewise multilinear structure of the parameterization map to describe the location of the minimizers of the hinge loss. 
  %
  Further, for piecewise linear activation functions \cite{zhou2018critical}  
  partition the parameter space into cones corresponding to different activation patterns of the network  
  to show that, while linear networks have no spurious minima, shallow ReLU networks do. 
  In a similar vein, \cite{LIU2021106923} study one hidden layer ReLU networks and show for a convex loss that 
  differentiable local minima in an activation region are global minima within the region, as well as providing 
  conditions for the existence of differentiable local minima, saddles and non differentiable local minima. 
  Considering the parameter symmetries, \cite{pmlr-v139-simsek21a} 
  show that the level of overparameterization changes the relative number of subspaces that make up the set of global minima. 
  As a result, 
  overparameterization implies a larger set of global minima relative to the set of critical points, and underparameterization the reverse. 
  \citet{wang2022the} studied the optimization landscape in two-layer ReLU networks and showed that all optimal solutions of the non-convex loss can be found via the optimal solutions of a convex program.

  %



  

  \subsection{Contributions}
  For both shallow and deep neural networks we show most linear regions of parameter space have no bad local minima and often contain a high-dimensional space of global minima. We examine the loss landscape for various scalings of the input dimension $d_0$, hidden dimensions $d_l$, and number of data points $n$.
  
  \begin{itemize}[leftmargin=*]
  \item Theorem \ref{possibly-empty-regions} shows for two-layer networks that if $d_0d_1 \geq n$ and $d_1 = \Omega(\log(\frac{n}{\epsilon d_0}))$, then all activation regions, except for a fraction of size at most $\epsilon$, have no bad local minima. We establish this by studying the rank of the Jacobian 
  with respect to the parameters. By appealing to results from random matrix theory on binary matrices, we show for most choices of activation patterns that the Jacobian will have full rank. Given this, all local minima within a region will be zero-loss global minima. For generic high-dimensional input data $d_0 \geq n$, this implies most \emph{non-empty} activation regions will have no bad local minima as all activation regions are non-empty for generic data in high dimensions. We extend these results to the deep case in Theorem \ref{thm:loss-landscape-deep}. 
  
  \item In Theorem \ref{one-dimension-no-bad-minima}, we specialize to the case of one-dimensional input data $d_0 = 1$, and consider two-layer networks with a bias term. We show that if $d_1 = \Omega(n \log(\frac{n}{\epsilon}))$, all but at most a fraction $\epsilon$ of \emph{non-empty} linear regions in parameter space will have no bad local minima. We remark that this includes the non-differentiable local minima on the boundary between activation regions as per Theorem \ref{thm:nonsmooth-landscape}. Further, in contrast to Theorem~\ref{possibly-empty-regions} which looks at all binary matrices of potential activation patterns, in the one-dimensional case we are able to explicitly enumerate the binary matrices which correspond to non-empty activation regions. 
  
  \item Theorem \ref{one-dim-global-minima} continues our investigation of one-dimensional input data, this time concerning the existence of global minima within a region. Suppose that the output head $v$ has $d_+$ positive weights and $d_-$ negative weights and $d_+ + d_- = d_1$. We show that if $d_+, d_- = \Omega(n \log(\frac{n}{\epsilon}))$, then all but at most a fraction $\epsilon$ of non-empty linear regions in parameter space will have global minima. Moreover, the regions with global minima will contain an affine set of global minima of codimension $n$.
  
  \item In addition to counting the number of activation regions with bad local minima, Proposition~\ref{prop:neuron-volume} and Theorem~\ref{thm:anticoncentrated-volume-bound} provide bounds on the fraction of regions with bad local minima by volume under additional assumptions on the data, notably anti-concentratedness. These results imply that mild overparameterization suffices again to ensure that the `size' of activation regions with bad local minima is small not only as measured by number but also by volume. 
  \end{itemize} 
  
  \subsection{Relation to prior works} 
  As indicated above, several works have studied sets of local and global critical points of two-layer ReLU networks, so it is in order to give a comparison. 
  We take a different approach by identifying the \emph{number of regions} which have a 
  favorable optimization landscape, and as a result are able to avoid having to make certain assumptions about the dataset or network. Since we are able to exclude pathological regions with many dead neurons, we can formulate our results for ReLU activations rather than LeakyReLU or other smooth activation functions. In contrast to \cite{soudry2016no}, we do not assume dropout noise on the outputs of the network, and as a result global minima in our setting typically 
  attain zero loss. 
  Unlike \cite{safran2018spurious}, we do not assume any particular distribution on our datasets; our results hold for almost all datasets (a set of full Lebesgue measure). Extremely overparameterized  networks (with $d_1 = \Omega(n^2)$) are known to follow lazy training \citep[see][]{chizat2019lazy}; 
  our theorems hold under more realistic assumptions of mild overparameterization $d_1 = \Omega(n \log n)$ or even $d_1 = \Omega(1)$ for high-dimensional inputs. We are able to avoid excessive overparameterization by emphasizing \emph{qualitative} aspects of the loss landscape, using only the rank of the Jacobian rather than the smallest eigenvalue of the neural tangent kernel, for instance.

  \section{Preliminaries}

  Before specializing to specific neural network architectures, we introduce general definitions which encompass all of the models we will study. For any $d\in\mathbb{N}$ we will write $[d]:=\{1,\ldots, d\}$. We will write $\boldsymbol{1}_d$ for a vector of $d$ ones, and drop the subscript when the dimension is clear from the context. 
  The Hadamard (entry-wise) product of two matrices $A$ and $B$ of the same dimension is defined as $A\odot B := (A_{ij}\cdot B_{ij})$. The Kronecker product of two vectors $u\in \mathbb{R}^{n}$ and $v\in \mathbb{R}^{m}$ is defined as $u\otimes v := (u_i v_j)\in\mathbb{R}^{n m}$. 
  
  Let $\R[x_1, \ldots, x_n]$ denote the set of polynomials in variables $x_1, \ldots, x_n$ with real coefficients. We say that a set $\mc{V} \subseteq \mathbb{R}^n$ is an \emph{algebraic set} if there exist $f_1, \ldots, f_m \in \R[x_1, \ldots, x_n]$ such that $\mc{V}$ is the zero locus of the polynomials $f_i$; that is, 
  \begin{align*}
      \mc{V} = \{x \in \R^n: f_i(x) = 0 \text{ for all } i \in [m] \}. 
  \end{align*} 
  Clearly, $\emptyset$ and $\R^n$ are algebraic, being zero sets of $f = 1$ and $f = 0$ respectively. A finite union of algebraic subsets is algebraic, as well as an arbitrary intersection of algebraic subsets. In other words, algebraic sets form a \emph{topological space}. Its topology is known as the \emph{Zariski topology}. The following lemma is a basic fact of algebraic geometry, and follows from subdividing algebraic sets into submanifolds of $\R^n$. 
  \begin{lemma}{\label{open-dense}}
  Let $\mathcal{V}$ be a proper algebraic subset of $\R^n$. Then $\mathcal{V}$ has Lebesgue measure $0$. 
  \end{lemma}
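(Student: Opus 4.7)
The plan is to reduce the statement to the zero set of a single nonzero polynomial, and then run an induction on the ambient dimension $n$ using Fubini's theorem.

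First I would observe that since $\mathcal{V}$ is a proper algebraic subset, it is the common zero locus of some polynomials $f_1,\ldots,f_m$, at least one of which, call it $f$, is not the zero polynomial (otherwise $\mathcal{V}$ would be all of $\R^n$). Since $\mathcal{V}\subseteq\{x\in\R^n : f(x)=0\}$, it suffices to show that the zero set $Z(f)$ of a single nonzero polynomial $f\in\R[x_1,\ldots,x_n]$ has Lebesgue measure zero. This reduction is the clean conceptual move; after it the problem becomes purely about one polynomial.

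Next I would induct on $n$. The base case $n=1$ is immediate: a nonzero univariate polynomial has finitely many real roots, hence $Z(f)$ is a finite subset of $\R$, which has measure zero. For the inductive step, I would write
\[
f(x_1,\ldots,x_n) \;=\; \sum_{k=0}^{D} g_k(x_1,\ldots,x_{n-1})\, x_n^k,
\]
where each $g_k\in\R[x_1,\ldots,x_{n-1}]$ and at least one $g_k$ is not identically zero (since $f\neq 0$). Let $A:=\bigcap_{k=0}^{D}\{y\in\R^{n-1}: g_k(y)=0\}$; this is a proper algebraic subset of $\R^{n-1}$ because the nonzero $g_k$ rules it out of being all of $\R^{n-1}$. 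By the inductive hypothesis, $A$ has measure zero in $\R^{n-1}$.

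The final step is a Fubini slicing argument. For any $y\notin A$, the univariate polynomial $x_n\mapsto f(y,x_n)$ is nonzero, so its zero set in $\R$ is finite and has one-dimensional measure zero. For $y\in A$ the slice is all of $\R$, but this contributes nothing because $A$ has $(n-1)$-dimensional measure zero. Applying Fubini to the indicator of $Z(f)$ then yields that $Z(f)$ has Lebesgue measure zero in $\R^n$, completing the induction. I do not anticipate a serious obstacle here; the only point requiring a little care is verifying that $A$ is indeed a proper algebraic subset so that the inductive hypothesis applies, and that the set $\{(y,x_n): y\notin A,\ f(y,x_n)=0\}$ is measurable (which follows from $Z(f)$ being closed in the Euclidean topology).
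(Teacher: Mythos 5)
Your proof is correct, but it takes a genuinely different route from the paper's. The paper offers essentially no self-contained argument: it records the lemma as a basic fact that ``follows from subdividing algebraic sets into submanifolds of $\R^n$,'' i.e., from the stratification theorem of real algebraic geometry (a proper algebraic set decomposes into finitely many smooth submanifolds, each of dimension at most $n-1$, and a submanifold of positive codimension has Lebesgue measure zero), with a pointer to \cite{harris2013algebraic}. You instead give the elementary measure-theoretic induction: reduce to the zero set $Z(f)$ of a single nonzero polynomial, expand $f$ in powers of $x_n$, and apply Tonelli--Fubini, using that almost every slice is the root set of a nonzero univariate polynomial and hence finite. Your route buys self-containedness---nothing beyond Fubini and the finiteness of roots of a univariate polynomial---so it could stand as a complete proof inside the paper; the paper's route is a one-line appeal to heavier machinery that additionally yields geometric information (dimension bounds and manifold structure on the strata), in the spirit of its later dimension-counting arguments. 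One cosmetic repair to your write-up: you apply the inductive hypothesis to $A=\bigcap_{k}Z(g_k)$ as a ``proper algebraic subset of $\R^{n-1}$,'' justifying properness by the existence of a nonzero $g_k$; strictly this invokes the fact that a nonzero polynomial over $\R$ does not vanish identically as a function, which deserves a word (it holds over any infinite field). Cleaner is to run the induction on the statement ``the zero set of a nonzero polynomial in $n$ variables has Lebesgue measure zero'': then $A\subseteq Z(g_k)$ has measure zero directly by the hypothesis applied to the nonzero polynomial $g_k$, and no properness claim about $A$ is needed at all.
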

  For more details on the above facts, we refer the reader to \cite{harris2013algebraic}. Justified by Lemma \ref{open-dense}, we say that a property $P$ depending on $x \in \R^n$ holds \emph{for generic} $x$ if there exists a proper algebraic set $\mc{V}$ such that $P$ holds whenever $x \notin \mc{V}$. So if $P$ is a property that holds for generic $x$, then in particular it holds for a set of full Lebesgue measure.
  
  We consider input data
  \[X = (x^{(1)}, \ldots, x^{(n)}) \in \R^{d \times n} \]
  and output data
  \[y = (y^{(1)}, \ldots, y^{(n)}) \in \R^{1 \times n}.\]
  A \emph{parameterized model} with parameter space $\R^m$ is a mapping
  \[F: \R^m \times \R^{d} \to \R.\]
  We overload notation and also define $F$ as a map from $\R^m \times \R^{d \times n}$ to $\R^{1 \times n}$ by
  \[F(\theta, (x^{(1)}, \ldots, x^{(n)})) := (F(\theta, x^{(1)}), F(\theta, x^{(2)}), \ldots, F(\theta, x^{(n)})) . 
  \]
  
  %
  Whether we are thinking of $F$ as a mapping on individual data points or on datasets will be clear from the context. 
  We define the mean squared error loss $L\colon \R^m \times \R^{d \times n} \times \R^{1 \times n} \to \R^{1}$ by
  \begin{align}
      L(\theta, X, y) :=& \frac{1}{2}\sum_{i=1}^n (F(\theta, x^{(i)}) - y^{(i)})^2 \nonumber \\ 
                       =& \frac{1}{2}\|F(\theta, X) - y\|^2. \label{eq:squareloss}
  \end{align}
  For a fixed dataset $(X, y)$, let $\mathcal{G}_{X, y} \subseteq \R^m$ denote the set of global minima of the loss $L$; that is, 
  \[
  \mathcal{G}_{X, y} = \left\{\theta \in \R^m: L(\theta, X, y) = \inf_{\phi \in \R^m} L(\phi, X, y)\right\}.
  \]
  If there exists $\theta^* \in \R^m$ such that $F(\theta^*, X) = y$, then $L(\theta^*, X, y) = 0$, so $\theta^*$ is a global minimum. 
  In such a case, 
  \begin{align*}
      \mathcal{G}_{X, y} &= \left\{\theta \in \R^m: F(\theta, X) = y\right\} . 
  \end{align*}
  For a dataset $(X, y)$, we say that $\theta \in \R^m$ is a \emph{local minimum} if there exists an open set $\mc{U} \subseteq \R^m$ containing $\theta$ such that $L(\theta, X, y) \leq L(\phi, X, y)$ for all $\phi \in \mc{U}$. We say that $\theta$ is a \emph{bad local minimum} if it is a local minimum but not a global minimum.

   A key indicator of the local optimization landscape of a neural network is the rank of the Jacobian of the map $F$ with respect to the parameters $\theta$. We will use the following observation. 
  
  \begin{lemma}\label{full-rank-global}
      Fix a dataset $(X, y) \in \R^{d\times n} \times \R^{1 \times n}$. 
      Let $F$ be a parameterized model and let $\theta\in \R^m$ be a differentiable critical point of the squared error loss \eqref{eq:squareloss}. 
      If $\rank(\nabla_{\theta}F(\theta, X)) = n$, then $\theta$ is a global minimizer. 
  \end{lemma}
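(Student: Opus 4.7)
The plan is to exploit the first-order optimality condition directly using the chain rule. Since $L(\theta, X, y) = \tfrac{1}{2}\|F(\theta, X) - y\|^2$, differentiating with respect to $\theta$ gives
\[
\nabla_\theta L(\theta, X, y) \;=\; \nabla_\theta F(\theta, X)^\top \bigl(F(\theta, X) - y\bigr)^\top,
\]
where $\nabla_\theta F(\theta, X) \in \R^{n \times m}$ is the Jacobian of the prediction vector with respect to the parameters. Because $\theta$ is assumed to be a differentiable critical point, the left-hand side vanishes, so the residual vector $r := (F(\theta, X) - y)^\top \in \R^n$ lies in the kernel of $\nabla_\theta F(\theta, X)^\top$.

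The hypothesis $\rank(\nabla_\theta F(\theta, X)) = n$ says that $\nabla_\theta F(\theta, X)$ has full row rank, which is equivalent to $\nabla_\theta F(\theta, X)^\top \in \R^{m \times n}$ having trivial kernel. Combining this with the previous display forces $r = 0$, i.e., $F(\theta, X) = y$. Hence $L(\theta, X, y) = 0$, and since $L \geq 0$ everywhere, $\theta$ attains the infimum and is a global minimizer. (In fact, $\theta \in \mathcal{G}_{X,y}$ with the simpler description given in the excerpt, $F(\theta, X) = y$.)

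There is essentially no obstacle here beyond being careful about matrix shapes and the rank-nullity bookkeeping; the content of the lemma is just that a full-rank Jacobian turns the critical point equation $J^\top r = 0$ into $r = 0$. I would keep the write-up to a few lines, explicitly noting (i) the chain rule computation, (ii) the translation of full row rank into injectivity of the transpose, and (iii) the non-negativity of the squared loss to upgrade ``zero residual'' to ``global minimizer.'' No additional assumptions on $F$ beyond differentiability at $\theta$ are needed, which is exactly why this lemma will serve as the workhorse for later Jacobian-rank arguments in the paper.
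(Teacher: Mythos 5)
Your proof is correct and follows essentially the same route as the paper's: both derive the critical point equation $\nabla_\theta F(\theta, X)\,(F(\theta,X)-y)=0$ via the chain rule and use the rank-$n$ hypothesis to conclude the residual vanishes, hence zero loss and global optimality. Your extra care with transposes and the explicit injectivity remark are fine but add nothing beyond the paper's argument.
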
 
  
  \begin{proof}
      Suppose that $\theta \in \R^m$ is a differentiable critical point of $L$. Then
      \begin{align*}
          0 &= \nabla_{\theta}L(\theta, X, y)\\
          &= \nabla_{\theta}F(\theta, X) \cdot (F(\theta, X) - y).
      \end{align*}
      Since $\rank(\nabla_{\theta}F(\theta, X, y)) = n$, this implies that $F(\theta, X) - y = 0$. In other words, $\theta$ is a global minimizer.
  \end{proof}

  Finally, in what follows given data $X$ we study \textit{activation regions} in parameter space. Informally, these are sets of parameters which give rise to particular activation patterns of the neurons of a network over the dataset. 
  We will define this notion formally for each setting we study in the subsequent sections.
  
  
  \section{Shallow ReLU networks: counting activation regions with bad local minima}
  \label{sec:counting-bad}
  Here we focus on a two-layer network with $d_0$ inputs, one hidden layer of $d_1$ ReLUs, and an output layer. The key takeaway of this section is that moderate overparameterization is sufficient to ensure that bad local minima are scarce. With regard to setup, our parameter space is the input weight matrix in $\R^{d_1 \times d_0}$. Our input dataset will be an element $X \in \R^{d_0 \times n}$ and our output dataset an element $y \in \R^{1 \times n}$, where $n$ is the cardinality of the dataset.
  The model $F: (\R^{d_1 \times d_0} \times \R^{d_1})  \times \R^{d_0} \to \R$ is defined by
  \begin{align*}
      F(W, v, x) &= v^T\sigma(Wx), 
  \end{align*}
  where $\sigma$ is the ReLU activation function $s\mapsto \max\{0,s\}$ applied componentwise. We write $W = (w^{(1)}, \ldots, w^{(d_1)})^T$,
  where $w^{(i)} \in \R^{d_0}$ is the $i$th row of $W$. Since $\sigma$ is piecewise linear, for any finite input dataset $X$ we may split the parameter space into a finite number of regions on which $F$ is linear in $W$ (and linear in $v$). For any binary matrix $A \in \{0,1\}^{d_1 \times n}$ and input dataset $X \in \R^{d_0 \times n}$, we define a corresponding \emph{activation region} in parameter space by 
  \begin{align*}
      \mathcal{S}^A_X := \left\{W \in \R^{d_1 \times d_0}: (2A_{ij}-1) \langle w^{(i)}, x^{(j)}\rangle
      >  0 \text{ for all } i \in [d_1], j \in [n]\right\}. 
  \end{align*} 
  This is a polyhedral cone defined by linear inequalities for each $w^{(i)}$. 
  For each $A \in \{0, 1\}^{d_1 \times n}$ and $X \in \R^{d_0 \times n}$, we have $F(W,v,X) = v^T (A \odot (W X))$ for all $W\in \mathcal{S}^A_X$, which is linear in $W$. The Jacobian with respect to 
  $v$ is $A\odot (WX)$ and with respect to 
  $W$ is 
  $$
  \nabla_\theta F(W,X) 
  = [v_i A_{ij} x^{(j)}]_{ij} 
  = [(v\odot a^{(j)}) \otimes x^{(j)} ]_j 
  ,\quad \text{for all $W\in \mathcal{S}^A_X$, $A\in\{0,1\}^{d_1\times n}$}, 
  $$
  where $a^{(j)}$ denotes the $j$th column of $A$. 
  To show that the Jacobian $\nabla_{\theta}F$ has rank $n$, we need to ensure that the activation matrix $A$ does not have too much linear dependence between its rows. The following result, due to \cite{bourgain2010singularity}, establishes this
  for most choices of $A$. 
  \begin{theorem}\label{binarymatrix}
    Let $A$ be a $d \times d$ matrix whose entries are iid random variables sampled uniformly from $\{0, 1\}$.
    Then $A$ is singular with probability at most
    \[
    \left(\frac{1}{\sqrt{2}} + o(1)\right)^d. 
    \]
  \end{theorem}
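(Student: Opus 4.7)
The plan is to bound the singularity probability by exposing the rows of $A$ one at a time and reducing to an anti-concentration estimate on a random linear combination. Let $R_1,\ldots,R_d \in \{0,1\}^d$ be the rows of $A$, and let $E_{d-1}$ denote the event that $R_1,\ldots,R_{d-1}$ are linearly independent. I would start from the decomposition
\[
\Pr[A\ \text{singular}] \leq \Pr[\overline{E_{d-1}}] + \Pr\bigl[E_{d-1} \text{ and } R_d \in \mathrm{span}(R_1,\ldots,R_{d-1})\bigr],
\]
handling the first term inductively on $d$. For the second term, condition on $R_1,\ldots,R_{d-1}$ realizing $E_{d-1}$ and let $v = v(R_1,\ldots,R_{d-1})$ be any nonzero generator of the one-dimensional orthogonal complement of their span. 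The event $R_d \in \mathrm{span}(R_1,\ldots,R_{d-1})$ becomes $\langle R_d, v\rangle = 0$, so the task reduces to uniformly controlling $\sup_{t} \Pr[\langle R_d, v\rangle = t]$ in terms of the arithmetic structure of the (random) normal $v$.

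The main tool is the Erd\H{o}s--Littlewood--Offord inequality, which gives $\Pr[\langle R_d, v\rangle = t] = O(1/\sqrt{k})$ when $v$ has $k$ nonzero entries, together with the Inverse Littlewood--Offord theorem of Tao and Vu, which forces $v$ to lie in a short generalized arithmetic progression whenever this anti-concentration probability is unusually large. I would then split into two regimes according to the richness of $v$. For unstructured normals, Littlewood--Offord alone already yields exponential decay in $d$. For structured normals, the probability that the random row span even admits such a constrained normal is itself a small-dimensional singularity-type event, to be handled by the induction hypothesis combined with a union bound over the (polynomially many) admissible progressions.

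The hard part will be pushing the base of the exponent all the way down to $1/\sqrt{2}$. The original arguments of Kahn, Koml\'os, and Szemer\'edi yield weaker bases such as $3/4$, and closing the remaining gap requires the swapping refinement of Bourgain, Vu, and Wood: one couples $A$ with a matrix $A'$ obtained by resampling a carefully chosen pair of entries, and uses the joint distribution of the two singularity events to extract an additional $\sqrt{2}$ factor from the Littlewood--Offord bound. Implementing this coupling, while simultaneously bookkeeping the trivial low-rank configurations (zero rows, coincident rows, etc.) so that their contributions align with the target exponent, is where the real technical work sits; once those pieces are in place the induction closes and the $(\tfrac{1}{\sqrt{2}}+o(1))^d$ bound follows.
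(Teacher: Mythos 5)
You should first be aware that the paper contains no proof of this statement: Theorem~\ref{binarymatrix} is imported verbatim from \citet{bourgain2010singularity} and is used as a black box, so the ``paper's own proof'' is a citation, and the intended justification is simply to invoke that work. Measured against that, your proposal is not a competing proof but a roadmap of the very literature being cited, and as a proof it has a genuine gap: the two decisive steps are named and then deferred. You explicitly postpone both the structured-normal analysis and the improvement of the base to $1/\sqrt{2}$ (``where the real technical work sits''), but these are not routine bookkeeping --- they constitute essentially the entire content of Kahn--Koml\'os--Szemer\'edi, Tao--Vu, and Bourgain--Vu--Wood, each a substantial research paper. Moreover, the one place where you do commit to a mechanism, the claim fails as stated: the structured normals do \emph{not} lie in ``polynomially many admissible progressions.'' The number of candidate structured vectors is exponentially large, and the Tao--Vu argument instead balances this exponential count against an exponentially small probability that any fixed structured vector is orthogonal to all $d-1$ exposed rows; arranging that tradeoff so the induction closes with a subexponential loss is precisely the hard part, and a union bound over a polynomial family would not even address it.

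Two further inaccuracies are worth flagging. First, Kahn--Koml\'os--Szemer\'edi obtained a base of roughly $0.999$, not $3/4$; the $(3/4+o(1))^d$ bound is due to Tao and Vu. Second, your description of the Bourgain--Vu--Wood refinement is garbled: their gain does not come from resampling a carefully chosen pair of entries of $A$ and coupling the two singularity events, but from comparing the singularity probability under the original row distribution with that under auxiliary ``lazier'' row distributions (entries vanishing with higher probability, which have better Fourier-analytic behavior), i.e., the swapping happens at the level of row distributions. Note also that for $\{0,1\}$ entries one cannot simply quote the $\pm 1$ result; handling general discrete distributions, including this one, is part of what Bourgain--Vu--Wood prove. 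In short: your outline points at the right technology, but it neither executes it nor could be completed as written without repairing the union-bound step, and for the purposes of this paper the correct move is the one the authors make --- cite the result.
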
 
  The next lemma shows that the specific values of $v$ are not relevant to the rank of the Jacobian. 
  \begin{lemma}\label{cancel-v}
      Let $a^{(j)} \in \R^{d_1}$, $x^{(j)} \in \R^{d_0}$ for $j \in [n]$ and $v \in \R^{d_1}$ be vectors, with $v_i \neq 0$ for all $i \in [d_1]$. Then
      \begin{align*}
          \rank(\{(v \odot a^{(j)}) \otimes x^{(j)}: j \in n\}) = \rank(\{a^{(j)} \otimes x^{(j)}: j \in [n]\}). 
      \end{align*}
  \end{lemma}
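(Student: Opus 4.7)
The plan is to exhibit an invertible linear transformation that maps the set $\{a^{(j)} \otimes x^{(j)} : j \in [n]\}$ to the set $\{(v \odot a^{(j)}) \otimes x^{(j)} : j \in [n]\}$. Since invertible linear maps preserve the rank of any collection of vectors, this immediately yields the equality of ranks.

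Concretely, I would first establish the pointwise identity
\begin{equation*}
(v \odot a) \otimes x \;=\; (v \otimes \boldsymbol{1}_{d_0}) \odot (a \otimes x)
\end{equation*}
by comparing entries: in the indexing where coordinate $(i,k) \in [d_1] \times [d_0]$ corresponds to position $(i-1)d_0 + k$ of the Kronecker product, both sides have $(i,k)$-entry equal to $v_i a_i x_k$. Let $D_v := \operatorname{diag}(v \otimes \boldsymbol{1}_{d_0}) \in \R^{d_0 d_1 \times d_0 d_1}$. The identity above can then be rewritten as
\begin{equation*}
(v \odot a^{(j)}) \otimes x^{(j)} \;=\; D_v \bigl( a^{(j)} \otimes x^{(j)} \bigr)
\qquad \text{for every } j \in [n].
\end{equation*}

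Because the hypothesis $v_i \neq 0$ for all $i \in [d_1]$ guarantees that every diagonal entry of $D_v$ is nonzero, the matrix $D_v$ is invertible. Thus left multiplication by $D_v$ is a linear isomorphism of $\R^{d_0 d_1}$, and hence it preserves the dimension of the linear span of any finite subset of vectors. Applying this to the spans of the two sets yields
\begin{equation*}
\rank\bigl(\{(v \odot a^{(j)}) \otimes x^{(j)} : j \in [n]\}\bigr) \;=\; \rank\bigl(\{a^{(j)} \otimes x^{(j)} : j \in [n]\}\bigr),
\end{equation*}
which is the claim.

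The only genuinely delicate step is verifying the Hadamard/Kronecker identity with the correct indexing convention; this is routine bookkeeping rather than a real obstacle, and once it is in place the proof is essentially a one-line invocation of the fact that invertible maps preserve rank. No dependence on $X$, on the particular values of the $a^{(j)}$, or on the size of $n$ relative to $d_0 d_1$ enters the argument.
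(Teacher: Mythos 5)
Your proof is correct and rests on the same essential idea as the paper's: the hypothesis $v_i \neq 0$ lets one cancel the entries of $v$, so the two families of vectors have identical linear dependence relations. The paper carries this out coordinatewise (showing a combination $\sum_j \lambda^{(j)} (v \odot a^{(j)}) \otimes x^{(j)} = 0$ holds iff $\sum_j \lambda^{(j)} a^{(j)} \otimes x^{(j)} = 0$, i.e.\ the two kernels coincide), whereas you package the same cancellation as left multiplication by the invertible diagonal matrix $D_v = \operatorname{diag}(v \otimes \boldsymbol{1}_{d_0})$ --- a slightly more structural phrasing of the identical argument.
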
 
  
  Using algebraic techniques, we show that for generic $X$, the rank of the Jacobian is determined by $A$. 
  Then by the above results, 
  Lemma~\ref{full-rank-global} concludes that for most activation regions the smooth critical points are global minima (see full details in Appendix~\ref{app:sec:counting-bad}):  
  
  \begin{theorem}\label{possibly-empty-regions}
    Let $\epsilon > 0$. If
    \[
    d_1 \geq \max\left(\frac{n}{d_0}, \Omega\left(\log \left(\frac{n}{\epsilon d_0}\right)\right) \right),
    \]
    then for generic datasets $(X, y)$, the following holds. 
    In all but at most $\lceil \epsilon 2^{nd_1} \rceil$ activation regions (i.e., an $\epsilon$ fraction of all regions), 
    every differentiable critical point of $L$ with nonzero entries for $v$ is a global minimum.
  \end{theorem}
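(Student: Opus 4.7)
The plan is to apply Lemma~\ref{full-rank-global} region by region, reducing the theorem to a statement about the ranks of random binary matrices that can be attacked via Theorem~\ref{binarymatrix}. On the activation region $\mathcal{S}^A_X$ the Jacobian $\nabla_W F(W,X)$ is constant, and by Lemma~\ref{cancel-v} its rank equals that of
\[
M(X,A) := [a^{(1)}\otimes x^{(1)},\; \ldots,\; a^{(n)}\otimes x^{(n)}] \in \R^{d_0 d_1 \times n},
\]
independent of $v$. Thus the goal is to bound, for generic $X$, the number of $A\in\{0,1\}^{d_1\times n}$ for which $\rank M(X,A)<n$: for any other $A$, every differentiable critical point of $L$ in $\mathcal{S}^A_X$ is automatically a global minimizer.

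I would then proceed in two steps. \textbf{Algebraic step.} For each fixed $A$, the condition $\rank M(X,A)=n$ is the non-vanishing of at least one $n\times n$ minor of $M$, each of which is a polynomial in the entries of $X$. Hence either rank equals $n$ on a Zariski-open dense set of $X$, or rank is $<n$ identically. Since there are only finitely many $A\in\{0,1\}^{d_1\times n}$, the union of the exceptional proper algebraic sets over all $A$ is still a proper algebraic subset, of measure zero by Lemma~\ref{open-dense}; thus for generic $X$ it is enough to exhibit, for each ``good'' $A$, one witness $X^\star$ with $\rank M(X^\star,A)=n$. \textbf{Block construction.} Since $d_0 d_1 \geq n$, partition $[n]$ into $k = \lceil n/d_1\rceil \leq d_0$ groups $I_1,\ldots,I_k$ of size $r_i \leq d_1$, and set $x^{\star(j)} := e_{g(j)} \in \R^{d_0}$, where $g(j)$ indexes the group containing $j$. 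Then $a^{(j)}\otimes e_{g(j)}$ is supported in the $g(j)$-th block of $d_1$ consecutive coordinates, so after a row permutation $M(X^\star,A)$ is block diagonal with blocks $A^{(i)}\in\{0,1\}^{d_1\times r_i}$ formed by disjoint column submatrices of $A$. Hence $\rank M(X^\star,A)=n$ iff every $A^{(i)}$ has full column rank. When $A$ is drawn uniformly from $\{0,1\}^{d_1\times n}$, Theorem~\ref{binarymatrix} applied to any $r_i\times r_i$ subblock of rows bounds the probability that $A^{(i)}$ has deficient column rank by $(\tfrac{1}{\sqrt{2}}+o(1))^{r_i}$; a union bound over the $k\leq d_0$ blocks, calibrated against the hypothesis $d_1 = \Omega(\log(n/(\epsilon d_0)))$, pushes the fraction of bad $A$ below~$\epsilon$.

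The main obstacle I anticipate is the ``generic $X$'' reduction, because the witness $X^\star$ depends on $A$: one must confirm that the union over all $A$ of the per-$A$ exceptional algebraic sets is still a \emph{proper} algebraic set. This holds because there are only finitely many $A$ and, for each good $A$, the exceptional locus is a proper subvariety (witnessed by $X^\star$). A secondary, easier technical point is that Theorem~\ref{binarymatrix} is stated for square binary matrices whereas the blocks $A^{(i)}$ are rectangular $d_1\times r_i$; this is bridged by restricting to a top $r_i\times r_i$ subblock of rows, whose entries remain i.i.d.\ uniform in $\{0,1\}$, so the square-matrix bound applies directly and only affects constants in the scaling of $d_1$.
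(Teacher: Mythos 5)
Your proposal reproduces the paper's proof skeleton almost exactly: the reduction via Lemmas~\ref{full-rank-global} and~\ref{cancel-v}, the per-$A$ dichotomy that the Khatri--Rao matrix either has full rank generically or deficient rank identically (so that a single witness $X^\star$ suffices, with a finite union of proper algebraic exceptional sets handled by Lemma~\ref{open-dense}), the witness built from standard basis vectors over a partition of $[n]$ into $\lceil n/d_1\rceil\leq d_0$ blocks, and the probabilistic count over uniform $A$ via Theorem~\ref{binarymatrix}. However, the step you dismiss as a ``secondary, easier technical point'' --- bridging rectangular blocks to the square-matrix theorem by passing to a top $r_i\times r_i$ row-subblock --- is a genuine gap, and it is exactly the place where the paper does something different. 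Your bridge gives a per-block failure probability of $\left(\tfrac{1}{\sqrt{2}}+o(1)\right)^{r_i}$, exponentially small in the \emph{block size} $r_i$, not in $d_1$. But the theorem covers regimes where $r_i \ll d_1$: whenever $n < d_1$ (e.g.\ $d_0\geq n$, where the hypothesis only requires $d_1=\Omega(\log(n/(\epsilon d_0)))$ and $d_1$ may be arbitrarily large relative to $n$) there is a single block with $r_1=n$, and your bound is stuck at the constant $\left(\tfrac{1}{\sqrt{2}}+o(1)\right)^{n}$ no matter how large $d_1$ is --- for $n=2$ it is roughly $1/2$, so it can never be calibrated below a small $\epsilon$, even though the true fraction of rank-deficient patterns decays exponentially in $d_1$. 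The same problem appears for residual blocks of size $n \bmod d_1$ unless you balance the partition, and in addition Theorem~\ref{binarymatrix} is an asymptotic statement, so invoking it at constant dimensions $r_i=1,2$ is vacuous; so your claim that this ``only affects constants in the scaling of $d_1$'' fails.

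The paper's fix goes in the opposite direction: instead of shrinking rows, it \emph{extends columns}. Each $d_1\times r_i$ block $M$ (with iid Bernoulli$(1/2)$ entries under uniform $A$) is augmented with fresh iid columns to a square $d_1\times d_1$ matrix $\tilde{M}$; nonsingularity of $\tilde{M}$ implies linear independence of the original $r_i$ columns, and Theorem~\ref{binarymatrix} applied at dimension $d_1$ gives $\P(\tilde{M}\text{ singular})\leq C_1\cdot 0.72^{d_1}$ --- exponentially small in $d_1$, uniformly over block sizes. A union bound over the $r=\lceil n/d_1\rceil$ blocks then yields failure probability at most $rC_1\cdot 0.72^{d_1}$, which is what makes the calibration against $d_1=\Omega\left(\log\left(\frac{n}{\epsilon d_0}\right)\right)$ possible. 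With your row-restriction replaced by this column-extension, the rest of your argument goes through as in the paper.
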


   The takeaway of Theorem \ref{possibly-empty-regions} is that most activation regions do no contain local minima. However, for a given dataset $X$ not all activation regions, each encoded by a binary matrix $A$, may be realizable by the network. Equivalently, $A$ is not realizable if and only if $\mathcal{S}^A_X = \emptyset$ and we call such activation regions \emph{empty}. We therefore turn our attention to evaluating the number of non-empty activation regions. 
   Different activation regions in parameter space are separated by the hyperplanes $\{W \colon \langle w^{(i)}, x^{(j)}\rangle =0\}$, $i\in[d_1]$, $j\in[n]$. We say that a set of vectors in a $d$-dimensional space is in general position if any $k\leq d$ of them are linearly independent, which is a generic property. Standard results on hyperplane arrangements 
  give the following proposition. 
   
  
  \begin{proposition}[Number of non-empty regions] 
  \label{prop:number-non-empty-regions}
  Consider a network with one layer of $d_1$ ReLUs. 
  If the columns of $X$ are in general position in a $d$-dimensional linear space, then the number of non-empty activation regions in the parameter space is $(2\sum_{k=0}^{d-1}{n-1\choose k})^{d_1}$. 
  \end{proposition}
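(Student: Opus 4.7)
The strategy is to exploit two features of the setup: (i) the defining inequalities of $\mathcal{S}^A_X$ couple only row $i$ of $W$ with row $i$ of $A$, yielding a row-by-row factorization, and (ii) the resulting per-row count is the number of chambers of a central hyperplane arrangement, to which a classical formula applies.

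First I would observe that $W \in \mathcal{S}^A_X$ holds if and only if, for every $i \in [d_1]$, the row $w^{(i)} \in \R^{d_0}$ lies in
\[
\mathcal{R}_X^{\alpha} := \{w \in \R^{d_0} : (2\alpha_j - 1)\langle w, x^{(j)}\rangle > 0 \text{ for all } j \in [n]\},
\]
where $\alpha \in \{0,1\}^n$ is the $i$th row of $A$. Thus $\mathcal{S}^A_X$ is a Cartesian product of $d_1$ sets of the form $\mathcal{R}_X^{\alpha}$, and it is non-empty precisely when each of these factors is. The number of non-empty $\mathcal{S}^A_X$ as $A$ ranges over $\{0,1\}^{d_1 \times n}$ therefore equals $r^{d_1}$, where $r$ is the number of $\alpha \in \{0,1\}^n$ with $\mathcal{R}_X^{\alpha} \neq \emptyset$.

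Next I would identify $r$ with a chamber count. Each non-empty $\mathcal{R}_X^{\alpha}$ is an open chamber of the central hyperplane arrangement $\mathcal{A} = \{H_j\}_{j \in [n]}$ in $\R^{d_0}$ with $H_j = \{w : \langle w, x^{(j)}\rangle = 0\}$, and distinct $\alpha$'s correspond to distinct chambers. If $V \subseteq \R^{d_0}$ is the $d$-dimensional linear space containing the columns of $X$, then every $H_j$ contains $V^\perp$, so $\mathcal{A}$ is cylindrical over its restriction to $V \cong \R^d$, and chambers of $\mathcal{A}$ biject with chambers of the restricted arrangement. Moreover, general position of the $x^{(j)}$ within $V$ (any $k \leq d$ of them linearly independent) is precisely the general position condition for the induced central arrangement in $\R^d$.

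Finally I would invoke the classical Schl\"afli counting formula for central hyperplane arrangements: $n$ hyperplanes through the origin in $\R^d$ in general position partition $\R^d$ into exactly $2\sum_{k=0}^{d-1}\binom{n-1}{k}$ open chambers, yielding $r = 2\sum_{k=0}^{d-1}\binom{n-1}{k}$ and, by the factorization above, the claimed total of $\bigl(2\sum_{k=0}^{d-1}\binom{n-1}{k}\bigr)^{d_1}$. The main potential obstacle is making the chamber-counting step self-contained; I would either cite a standard reference or give a short induction on $n$ by deletion--restriction, using that adjoining a hyperplane $H_n$ in general position to a central arrangement of $n-1$ hyperplanes in $\R^d$ adds twice the number of chambers of the induced central arrangement on $H_n \cong \R^{d-1}$, which by the inductive hypothesis is $2\sum_{k=0}^{d-2}\binom{n-2}{k}$, and the Pascal identity $\binom{n-1}{k} = \binom{n-2}{k} + \binom{n-2}{k-1}$ closes the induction.
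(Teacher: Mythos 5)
Your proposal is correct and follows essentially the same route as the paper: factor the count over the $d_1$ units (the paper phrases this as a product arrangement, citing Zaslavsky), identify the per-unit count with the chambers of the central arrangement with normals $x^{(j)}$, and invoke the classical Schl\"afli--Cover formula $2\sum_{k=0}^{d-1}\binom{n-1}{k}$; your explicit cylindrical reduction to the $d$-dimensional span of the data is a detail the paper leaves implicit. One small slip in your optional deletion--restriction sketch: adding $H_n$ increases the chamber count by exactly the number of chambers of the restricted arrangement on $H_n \cong \R^{d-1}$ (which is $2\sum_{k=0}^{d-2}\binom{n-2}{k}$ by the inductive hypothesis), not by twice that number --- with this correction the Pascal identity closes the induction as you say.
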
 
  
  The formula provided above in Proposition \ref{prop:number-non-empty-regions} is equal to $2^{n d_1}$ if and only if $n \leq d$, and is  $O(n^{d d_1})$ if $n>d$. We therefore observe that if $d$ is large in relation to $n$ and the data is in general position then by Proposition~\ref{prop:number-non-empty-regions} most activation regions are non-empty. Thus we obtain the following corollary of Theorem~\ref{possibly-empty-regions}. 
  
  \begin{corollary} 
  \label{cor:non-empty-good}
  Under the same assumptions as Theorem~\ref{possibly-empty-regions}, if $d\geq n$, then for $X$ in general position and arbitrary $y$, the following holds. In all but at most an $\epsilon$ fraction of all non-empty activation regions, every differentiable critical point of $L$ with nonzero entries for $v$ is a zero loss global minimum. 
  \end{corollary} 
  More generally, for an arbitrary dataset that is not necessarily in general position the regions can be enumerated using a celebrated formula by \citet{zaslavsky1975facing}, in terms of the intersection poset of the hyperplanes. Moreover, importantly one can show that the maximal number of non-empty regions is attained when the dataset is in general position. 
  
  From the above discussion we conclude we have a relatively good understanding of \emph{how many} activation regions are non-empty. Notably, the number is the same for any dataset that is in general position. However, it is worth reflecting on the fact that the \emph{identity} of the non-empty regions depends more closely on the specific dataset and is harder to catalogue. For a given dataset $X$ the non-empty regions correspond to the vertices of the Minkowski sum of the line segments with end points $0, x^{(j)}$, for $j\in[n]$, as can be inferred from results in tropical geometry \citep[see][]{ETC}. 
  
  \begin{proposition}[Identity of non-empty regions] 
  \label{prop:identity-non-empty}
  Let $A \in\{0,1\}^{d_1\times n}$. 
  The corresponding activation region is non-empty if and only if $\sum_{j: A_{ij}=1} x^{(j)}$ is a vertex of $\sum_{j\in[n]} \conv\{0, x^{(j)}\}$ for all $i\in[d_1]$. 
  \end{proposition}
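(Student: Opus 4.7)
The plan is to reduce the non-emptiness of $\mathcal{S}^A_X$ to a row-by-row sign realizability question, and then match that realizability condition with the standard vertex description of a zonotope.

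First I would observe that the constraints defining $\mathcal{S}^A_X$ decouple across the rows of $W$: the inequalities $(2A_{ij}-1)\langle w^{(i)}, x^{(j)}\rangle > 0$ for $j\in[n]$ involve only the single row $w^{(i)}$. Hence $\mathcal{S}^A_X$ is non-empty if and only if, for every $i\in[d_1]$, there exists $w^{(i)}\in\R^{d_0}$ with $\langle w^{(i)}, x^{(j)}\rangle > 0$ for all $j$ with $A_{ij}=1$ and $\langle w^{(i)}, x^{(j)}\rangle < 0$ for all $j$ with $A_{ij}=0$. It therefore suffices to prove the single-row statement: for a binary vector $a\in\{0,1\}^n$ with support $S_a=\{j: a_j=1\}$, such a separating $w$ exists if and only if $\sum_{j\in S_a} x^{(j)}$ is a vertex of the zonotope $Z := \sum_{j\in[n]} \conv\{0, x^{(j)}\}$.

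Next I would invoke the standard characterization of vertices via linear functionals: a point $p$ of a polytope $P$ is a vertex if and only if there exists $w$ such that $\langle w,\cdot\rangle$ attains its maximum over $P$ uniquely at $p$. Specializing to the Minkowski sum $Z$, for any $w$ we have
\[
\max_{z\in Z}\langle w,z\rangle \;=\; \sum_{j\in[n]} \max_{z_j\in\conv\{0,x^{(j)}\}} \langle w,z_j\rangle,
\]
and the $j$-th inner maximum is attained uniquely at $x^{(j)}$ when $\langle w,x^{(j)}\rangle > 0$, uniquely at $0$ when $\langle w,x^{(j)}\rangle < 0$, and non-uniquely along the whole segment $\conv\{0,x^{(j)}\}$ when $\langle w,x^{(j)}\rangle = 0$. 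Consequently the global maximizer over $Z$ is unique precisely when $\langle w,x^{(j)}\rangle\neq 0$ for every $j$, in which case it equals $\sum_{j: \langle w,x^{(j)}\rangle>0} x^{(j)}$.

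Combining these pieces gives both directions. If $w$ separates as in the row condition, then $w$ has no zero inner products with the $x^{(j)}$, so $w$ uniquely maximizes the linear functional over $Z$ at $\sum_{j\in S_a} x^{(j)}$, making that point a vertex. Conversely, if $\sum_{j\in S_a} x^{(j)}$ is a vertex of $Z$, some $w$ uniquely maximizes $\langle w,\cdot\rangle$ there; by the uniqueness analysis above, $\langle w,x^{(j)}\rangle > 0$ for $j\in S_a$ and $\langle w,x^{(j)}\rangle < 0$ for $j\notin S_a$, which is exactly the desired separating vector.

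The main (and only substantive) obstacle is the Minkowski-sum vertex analysis in the middle step, and in particular being careful about the degenerate case where $\langle w, x^{(j)}\rangle = 0$ for some $j$: one must verify that such a $w$ cannot witness a vertex because the maximum is attained along a full segment, so uniqueness fails. Once that is handled cleanly the equivalence is immediate, and applying it row-by-row yields the proposition.
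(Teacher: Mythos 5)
Your proof is correct in substance and follows essentially the same route as the paper's: both reduce to a single row, and both identify the sign condition $\sgn\langle w, x^{(j)}\rangle = 2a_j-1$ with the statement that $w$ uniquely maximizes the linear functional over the zonotope at the subset sum, i.e., that the activation region is the interior of the normal cone at that point. The difference is organizational: the paper first introduces the polytope $P=\conv\{\sum_{j\in S}x^{(j)}\colon S\subseteq[n]\}$ as the Newton polytope of $\psi(w)=\sum_j\max\{0,\langle w,x^{(j)}\rangle\}$ and then proves separately that $P$ equals the Minkowski sum of the segments $\conv\{0,x^{(j)}\}$, whereas you take the Minkowski sum as the primary object and use the decomposition of the maximum (and of the argmax) over a Minkowski sum. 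This buys you a slightly shorter path, at the cost of having to do the tie analysis you correctly flag: a maximizer is unique iff $\langle w,x^{(j)}\rangle\neq 0$ for every $j$, which is where your argument does the work that the paper's explicit convex-combination computation of $P=\sum_j\conv\{0,x^{(j)}\}$ does.

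One step in your converse needs an extra line. The uniqueness analysis yields that the unique maximizer is $\sum_{j\in S'}x^{(j)}$ with $S'=\{j\colon \langle w,x^{(j)}\rangle>0\}$, and that this point equals $\sum_{j\in S_a}x^{(j)}$; but equality of subset sums does not by itself give $S'=S_a$, since distinct subsets can have the same sum (e.g., $x^{(1)}+x^{(2)}=x^{(3)}$). To close this, pair $w$ against the identity $\sum_{j\in S_a\setminus S'}x^{(j)}=\sum_{j\in S'\setminus S_a}x^{(j)}$: every term on the left has $\langle w,x^{(j)}\rangle<0$ and every term on the right has $\langle w,x^{(j)}\rangle>0$, so both index sets must be empty and $S'=S_a$. (The paper's proof elides the same point when it identifies $\mathcal{S}^{a}_X$ with the interior of the normal cone at the vertex, so this is a shared, easily repaired gloss rather than a flaw specific to your approach.) Finally, note that both arguments implicitly assume $x^{(j)}\neq 0$ for all $j$: if some $x^{(j)}=0$, no strict sign condition is realizable for that coordinate while the vertex condition can still hold, so the stated equivalence silently relies on this mild nondegeneracy.
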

  
  This provides a sense of which activation regions are non-empty, depending on $X$. 
  The explicit list of non-empty regions 
  is known in the literature as the list of maximal covectors of an oriented matroid \citep[see][]{orientedmatroids}, which can be interpreted as a combinatorial type of the dataset.

  \section{Shallow univariate ReLU networks: activation regions with global vs local minima}
  \label{sec:non-empty-activation-regions} 
  In Section~\ref{sec:counting-bad} we showed that mild overparameterization suffices to ensure that most activation regions do not contain bad local minima. This result however does not discuss the relative scarcity of activation regions with global versus local minima. To this end in this section we take a closer look at the case of a single input dimension, $d_0=1$. Importantly, in the univariate setting we are able to entirely characterize the realizable (non-empty) activation regions. 
  
  Consider a two-layer ReLU network with input dimension $d_0 = 1$, hidden dimension $d_1$, and a dataset consisting of $n$ data points. We suppose the network has a bias term $b \in \R^{d_1}$. The model $F: (\R^{d_1} \times \R^{d_1} \times \R^{d_1}) \times \R \to \R$ is given by
  \[
  F(w, b, v, x) = v^T\sigma(wx + b). 
  \]
  Since we include a bias term here, we define the activation region $\mc{S}^A_X$ by
  \[
  \mathcal{S}^A_X := \left\{(w, b) \in \R^{d_1} \times \R^{d_1}: (2A_{ij}-1)(w^{(i)}x^{(j)} + b^{(i)})
      >  0 \text{ for all } i \in [d_1], j \in [n]\right\}. 
      \]
  In this one-dimensional case, we first obtain new bounds on the fraction of favorable activation regions to show that most \emph{non-empty} activation regions have no bad differentiable local minima. We begin with a characterization of which activation regions are non-empty. For $k \in [n+1]$, we introduce the \emph{step vectors} $\xi^{(k, 0)}, \xi^{(k,1)} \in \R^{n}$, defined by
  \begin{gather*}
      (\xi^{(k,0)})_i = \begin{cases} 1 & \text{ if } i < k,\\
      0 & \text{ if } i \geq k
      \end{cases}, \quad \text{and} \quad 
      (\xi^{(k,1)})_i = \begin{cases} 0 & \text{ if } i < k,\\
      1 & \text{ if } i \geq k
      \end{cases}.
  \end{gather*}
  Note that $\xi^{(1, 0)} = \xi^{(n + 1, 1)} = 0$ and $\xi^{(n+1, 0)} = \xi^{(1, 1)} = 1$. There are $2n$ step vectors in total. Intuitively, step vectors describe activation regions  because all data points on one side of a threshold value activate the neuron. The following lemma makes this notion precise.
  \begin{lemma} \label{lemma:one2one-act-step}
      Fix a dataset $(X, y)$ with $x^{(1)} < x^{(2)} < \cdots < x^{(n)}$. 
      Let $A \in \{0, 1\}^{d_1 \times n}$ be a binary matrix. Then $\mathcal{S}^A_X$ is non-empty if and only if the rows of $A$ are step vectors. 
      In particular, there are exactly $(2n)^{d_1}$ non-empty activation regions. 
  \end{lemma}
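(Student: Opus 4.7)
The proof plan is to exploit the fact that the defining inequalities for $\mathcal{S}^A_X$ decouple across the rows of $A$. Specifically, for each $i \in [d_1]$ the conditions $(2A_{ij}-1)(w^{(i)} x^{(j)} + b^{(i)}) > 0$ involve only the $i$th row $a = (A_{i1},\ldots,A_{in})$ and the $i$th pair $(w^{(i)}, b^{(i)})$. Thus $\mathcal{S}^A_X$ is non-empty if and only if for every row $a$ of $A$ there exist scalars $(w,b) \in \R\times\R$ such that $w x^{(j)} + b > 0$ whenever $a_j = 1$ and $w x^{(j)} + b < 0$ whenever $a_j = 0$. It suffices to characterize exactly those $a \in \{0,1\}^n$ which admit such a realization, and then multiply counts.

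For the forward direction, fix such a realization $(w, b)$ and consider the affine function $j \mapsto w x^{(j)} + b$. If $a = 0$ we just need $b < 0$ with $w = 0$, and if $a = \boldsymbol{1}$ we take $w = 0, b > 0$; in both cases $a$ is a step vector. Otherwise $a$ takes both values $0$ and $1$, so both strict inequalities occur, forcing $w \neq 0$. Since $x^{(1)} < \cdots < x^{(n)}$, the sequence $j \mapsto wx^{(j)}+b$ is strictly monotone in $j$. Hence $\{j : a_j = 1\}$ is either an initial or a final contiguous block of $[n]$, depending on the sign of $w$, so $a$ equals either $\xi^{(k,1)}$ or $\xi^{(k,0)}$ for a uniquely determined $k$.

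For the reverse direction, I would exhibit realizations explicitly. For $a = \xi^{(k,1)}$ with $2 \leq k \leq n$, pick $w = 1$ and any $b \in (-x^{(k)}, -x^{(k-1)})$; then $w x^{(j)}+b > 0$ iff $j \geq k$. The boundary cases $\xi^{(1,1)} = \boldsymbol{1}$ and $\xi^{(n+1,1)} = 0$ are handled by taking $w = 1$ with $b$ large positive or large negative, respectively. The vectors $\xi^{(k,0)}$ are realized symmetrically with $w = -1$. This shows every step vector arises, completing the forward/backward equivalence for each row.

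Finally, for the count, I would enumerate distinct step vectors. The list $\{\xi^{(k,0)} : k \in [n+1]\}$ contains $n+1$ vectors, as does $\{\xi^{(k,1)} : k \in [n+1]\}$, but the two families overlap in exactly the two extreme vectors $0$ and $\boldsymbol{1}$ (namely $\xi^{(1,0)} = \xi^{(n+1,1)} = 0$ and $\xi^{(n+1,0)} = \xi^{(1,1)} = \boldsymbol{1}$), leaving $2(n+1) - 2 = 2n$ distinct admissible rows. Since the rows of $A$ may be chosen independently from this pool of size $2n$ and each such $A$ gives a non-empty $\mathcal{S}^A_X$, the total is $(2n)^{d_1}$. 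There is no substantial obstacle here; the only subtle point is careful bookkeeping of the two duplicate step vectors in the count, and ensuring the strict inequalities in the definition of $\mathcal{S}^A_X$ are respected in the explicit constructions above (which is why we pick $b$ in an open interval).
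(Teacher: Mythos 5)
Your proof is correct and takes essentially the same approach as the paper's: both decouple the conditions row by row, prove necessity from the ordering $x^{(1)} < \cdots < x^{(n)}$ (you via strict monotonicity of $j \mapsto w x^{(j)} + b$, the paper via a contradiction between two inequalities at the minimal index where the row switches value), prove sufficiency by explicitly placing a threshold between $x^{(k-1)}$ and $x^{(k)}$ (your open interval for $b$ versus the paper's midpoint choice), and count $2(n+1) - 2 = 2n$ distinct step vectors. There is no gap.
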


  Using this characterization of the non-empty activation regions, we 
   show that most activation patterns corresponding to these regions yield full-rank Jacobian matrices, and hence the regions have no bad local minima. 
  \begin{theorem}\label{one-dimension-no-bad-minima}
      Let $\epsilon \in (0, 1)$. Suppose that $X$ consists of distinct data points, and
      \[d_1 \geq 2n \log\left(\frac{n}{\epsilon}\right). \]
      Then in all but at most a fraction $\epsilon$ of non-empty activation regions, $\nabla_{\theta}F$ is full rank and every differentiable critical point of $L$ where $v$ has nonzero entries is a global minimum. 
  \end{theorem}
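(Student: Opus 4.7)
The plan is to reduce the statement to a coupon-collector style estimate using the combinatorial description of non-empty regions in Lemma~\ref{lemma:one2one-act-step}. After relabeling so that $x^{(1)}<\cdots<x^{(n)}$, that lemma identifies the non-empty regions with the $(2n)^{d_1}$ matrices $A\in\{0,1\}^{d_1\times n}$ whose rows are step vectors; in particular, counting non-empty regions with respect to the uniform distribution is the same as drawing each row of $A$ independently and uniformly from the set of $2n$ step vectors.

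Fix such an $A$. Inside $\mathcal{S}^A_X$ the model is affine in $(w,b)$, so $\nabla_\theta F(w,b,X)$ is constant over the region. Writing $a_i$ for the $i$-th row of $A$ and $x=(x^{(1)},\ldots,x^{(n)})$, the row of the Jacobian indexed by $b_i$ equals $v_i a_i$ and the row indexed by $w_i$ equals $v_i(a_i\odot x)$. Since every $v_i$ is nonzero, the row span of $\nabla_\theta F$ coincides with $\mathrm{span}\{a_i,\,a_i\odot x:i\in[d_1]\}$, and once this subspace equals $\R^n$ Lemma~\ref{full-rank-global} forces every differentiable critical point in the region to be a global minimizer.

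The key observation is that the $n$ staircase step vectors $\xi^{(2,0)},\xi^{(3,0)},\ldots,\xi^{(n+1,0)}$ already form a basis of $\R^n$ (they are the rows of a lower-triangular $0/1$ matrix with $1$'s on and below the diagonal). Hence if each of these $n$ specific step vectors occurs among $\{a_1,\ldots,a_{d_1}\}$, the bias-derivative rows alone span $\R^n$ and the Jacobian has rank $n$, \emph{independently} of $X$. Thus it suffices to bound the probability, under the uniform distribution on non-empty regions, that at least one of these $n$ step vectors fails to appear as a row of $A$.

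Under that distribution each row is uniform on the $2n$ step vectors, so for any fixed step vector $\xi$ we have $\P[\xi\notin\{a_1,\ldots,a_{d_1}\}]=(1-1/(2n))^{d_1}\le e^{-d_1/(2n)}$. A union bound over the $n$ staircase vectors gives a failure probability of at most $n\,e^{-d_1/(2n)}$, which is $\le\epsilon$ exactly when $d_1\ge 2n\log(n/\epsilon)$, as assumed. The remaining non-empty regions have full-rank Jacobian and hence only global minimizers as differentiable critical points. The main obstacle is really only the selection of the right ``sufficient'' set of step vectors: one wants a subset that already spans $\R^n$ (so the $a_i\odot x$ rows are not required and $X$ plays no role beyond distinctness), is as small as possible to keep the union bound tight, and matches the threshold $2n\log(n/\epsilon)$; the staircase basis does all three. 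A direct appeal to Theorem~\ref{binarymatrix} would be awkward since the rows of $A$ are restricted to step vectors rather than iid Bernoulli entries, which is why the coupon-collector route is the natural one.
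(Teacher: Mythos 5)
Your proposal is correct and follows essentially the same route as the paper: reduce to step-vector matrices via Lemma~\ref{lemma:one2one-act-step}, observe that the bias rows $v_i a_i$ alone can span $\R^n$ (so $X$ plays no role beyond distinctness), and apply a coupon-collector/union bound with per-coupon probability $\frac{1}{2n}$ to obtain the threshold $2n\log\left(\frac{n}{\epsilon}\right)$. The only cosmetic difference is the target set: you collect the fixed triangular basis $\xi^{(2,0)},\ldots,\xi^{(n+1,0)}$ directly, whereas the paper's ``diverse'' condition accepts either $\xi^{(k,0)}$ or $\xi^{(k,1)}$ for each $k$ (plus the all-ones row) and recovers the basis via $\xi^{(k,1)} = \mathbf{1} - \xi^{(k,0)}$ in the row span; both yield the same bound.
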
 
  Our strategy for proving Theorem \ref{one-dimension-no-bad-minima} hinges on the following observation. For the sake of example, consider the step vectors $\xi^{(1, 1)}, \xi^{(2, 1)}, \ldots, \xi^{(n, 1)}$. This set of vectors forms a basis of $\R^n$, so if each of these vectors was a row of the activation matrix $A$, it would have full rank. This observation generalizes to cases where some of the step vectors are taken to be $\xi^{(k, 0)}$ instead of $\xi^{(k, 1)}$. If enough step vectors are ``collected'' by rows of the activation matrix, it will be of full rank. We can interpret this condition in a probabilistic way. Suppose that the rows of $A$ are sampled randomly from the set of step vectors. We wish to determine the probability that after $d_1$ samples, enough step vectors have been sampled to cover a certain set. We use the following lemma. 
  
  \begin{lemma}[Coupon collector's problem]
  \label{coupon-collector} 
      Let $\epsilon \in (0, 1)$, and let $n \leq m$ be positive integers. Let $C_1, C_2, \ldots, C_d \in [m]$ be iid random variables such that for all $j \in [n]$ one has $\P(C_1 = j) \geq \delta$. 
      If
      \[
      d \geq \frac{1}{\delta} \log\left(\frac{n}{\epsilon}\right),
      \]
      then $[n] \subseteq \{C_1, \ldots, C_d\}$ with probability at least $1 - \epsilon$. 
  \end{lemma}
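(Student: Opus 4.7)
The plan is to give a standard union bound argument. For each fixed $j \in [n]$, the event that $j$ never appears among $C_1, \ldots, C_d$ is the intersection of the $d$ independent events $\{C_i \neq j\}$, each of probability at most $1 - \delta$. So I would first write
\[
\P(j \notin \{C_1, \ldots, C_d\}) = \prod_{i=1}^d \P(C_i \neq j) \leq (1-\delta)^d,
\]
and then apply the elementary inequality $1 - \delta \leq e^{-\delta}$ to bound this by $e^{-\delta d}$.

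Next I would take a union bound over $j \in [n]$:
\[
\P\bigl([n] \not\subseteq \{C_1, \ldots, C_d\}\bigr) \leq \sum_{j=1}^n \P(j \notin \{C_1, \ldots, C_d\}) \leq n\, e^{-\delta d}.
\]
It remains to check that the hypothesis $d \geq \tfrac{1}{\delta}\log(n/\epsilon)$ forces $n e^{-\delta d} \leq \epsilon$; indeed, the bound rearranges to $e^{-\delta d} \leq \epsilon/n$, i.e., $\delta d \geq \log(n/\epsilon)$, which is precisely the assumption. Taking complements yields $\P([n]\subseteq\{C_1,\ldots,C_d\}) \geq 1 - \epsilon$.

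There is no real obstacle here; the only minor point is verifying that the probability-per-index hypothesis $\P(C_1 = j) \geq \delta$ is used only for $j \in [n]$, so the assumption $n \leq m$ is what allows the union bound to be taken over $[n]$ rather than all of $[m]$. The two ingredients $(1-\delta)^d \leq e^{-\delta d}$ and the union bound are the entire content of the argument, so the write-up should be just a few lines.
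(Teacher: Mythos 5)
Your proposal is correct and follows essentially the same route as the paper's proof: a union bound over $j \in [n]$ combined with the bounds $\P(j \notin \{C_1,\ldots,C_d\}) \leq (1-\delta)^d \leq e^{-\delta d}$, giving $n e^{-\delta d} \leq \epsilon$ under the stated hypothesis on $d$. If anything, your write-up is slightly more careful than the paper's, which writes an equality where an inequality $\P(j \notin \{C_1,\ldots,C_d\}) \leq (1-\delta)^d$ is what the hypothesis $\P(C_1 = j) \geq \delta$ actually delivers.
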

  This gives us a bound for the probability that a randomly sampled region is of full rank. We finally convert this into a combinatorial statement to obtain Theorem \ref{one-dimension-no-bad-minima}. For the details and a complete proof, see Appendix~\ref{app:sec:non-empty-no-bad-minima}.
  
  In one-dimensional input space, the existence of global minima within a region requires similar conditions to the region having a full rank Jacobian. Both of them depend on having many different step vectors represented among the rows of the activation matrix. The condition we need to check for the existence of global minima is slightly more stringent, and depends on there being enough step vectors for both the positive and negative entries of $v$.
  
  \begin{theorem}[Fraction of regions with global minima]
  \label{one-dim-global-minima}
      Let $\epsilon \in (0, 1)$ and let $v \in \R^d_1$ have nonzero entries. Suppose that $X$ consists of distinct data points,
      \[|\{i \in [d_1]: v^{(i)} > 0\}| \geq 2n \log\left(\frac{2n}{\epsilon}\right),\]
      and
      \[|\{i \in [d_1]: v^{(i)} < 0\}| \geq 2n \log\left(\frac{2n}{\epsilon}\right).\]
      Then in all but at most an $\epsilon$ fraction of non-empty activation regions $\mc{S}^A_X$, the subset of global minimizers in $(w, b)$, $\mc{G}_{X, y} \cap \mc{S}^A_X$,  is a non-empty affine set of codimension $n$. Moreover, all global minima of $L$ have zero loss.
  \end{theorem}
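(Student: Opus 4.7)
The plan is to follow the same template as the proof of Theorem \ref{one-dimension-no-bad-minima}, but to strengthen the conclusion from a rank statement about the Jacobian to a genuine surjectivity statement for $F$ restricted to the open cone $\mc{S}^A_X$. By Lemma \ref{lemma:one2one-act-step}, non-empty activation regions are in bijection with matrices whose rows are step vectors, so the fraction of non-empty regions on which a row-dependent property holds coincides with the probability that the property holds when the rows of $A$ are sampled independently and uniformly from the $2n$ step vectors. I would split the neurons into $I_+ := \{i \in [d_1]: v^{(i)} > 0\}$ and $I_- := \{i \in [d_1]: v^{(i)} < 0\}$ and apply Lemma \ref{coupon-collector} twice, with coupon distribution uniform on the $2n$ step vectors (so $\delta = 1/(2n)$), target set equal to the basis $\{\xi^{(1, 1)}, \ldots, \xi^{(n, 1)}\}$ of $\R^n$, and failure probability $\epsilon/2$. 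The hypothesis $|I_\pm| \geq 2n \log(2n/\epsilon)$ together with a union bound then yields that, in all but a fraction $\epsilon$ of non-empty activation regions, the following covering property holds: for every $k \in [n]$, the step vector $\xi^{(k, 1)}$ appears as $A_{i,:}$ for some $i \in I_+$ and as $A_{i,:}$ for some $i \in I_-$.

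For such a favorable $A$, the map $\theta = (w, b) \mapsto F(\theta, X) = \sum_i v^{(i)} (w^{(i)} X + b^{(i)} \boldsymbol{1}) \odot A_{i,:}$ is linear on the open cone $\mc{S}^A_X$. The Jacobian columns for the $b^{(i)}$ derivatives are $v^{(i)} A_{i,:}$, and by the covering property these columns include a basis $\xi^{(1,1)}, \ldots, \xi^{(n, 1)}$ of $\R^n$, so the Jacobian has rank $n$. By Lemma \ref{full-rank-global}, the zero set $\{\theta: F(\theta, X) = y\}$ is an affine subspace of codimension $n$ in $\R^{2d_1}$, and it remains to prove that this preimage actually meets $\mc{S}^A_X$.

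To show $F(\mc{S}^A_X) = \R^n$, for each $k \in [n]$ I fix $i_+ \in I_+$ and $i_- \in I_-$ with $A_{i_+,:} = A_{i_-,:} = \xi := \xi^{(k, 1)}$. As $(w^{(i_\pm)}, b^{(i_\pm)})$ varies over the open 2-dimensional cone $P_\xi \subset \R^2$ cut out by the sign constraints of row $\xi$, the contribution of neuron $i_\pm$ to $F(\theta, X)$ traces out $v^{(i_\pm)} L(P_\xi)$, where $L(w, b) := (wX + b\boldsymbol{1}) \odot \xi$ takes values in the linear subspace $V_\xi := \operatorname{span}(X \odot \xi, \xi) \subset \R^n$. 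Because $v^{(i_+)}$ and $v^{(i_-)}$ have opposite signs, these two cones are $L(P_\xi)$ and $-L(P_\xi)$, and their Minkowski sum equals $L(P_\xi) - L(P_\xi) = V_\xi$: indeed, $L(P_\xi)$ is a nonempty open convex cone whose linear span is $V_\xi$, and for any convex cone $K$ with full linear span $V$ one has $K + (-K) = V$. Summing over $k$, the subspaces $V_{\xi^{(k, 1)}}$ contain the basis $\xi^{(1, 1)}, \ldots, \xi^{(n, 1)}$ and therefore sum to $\R^n$. Adding the Minkowski contributions from the remaining neurons (a nonempty set) preserves this, giving $F(\mc{S}^A_X) = \R^n$. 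In particular $y \in F(\mc{S}^A_X)$, so $\mc{G}_{X, y} \cap \mc{S}^A_X$ is a non-empty affine set of codimension $n$; since this intersection attains zero loss, $\inf_\theta L(\theta, X, y) = 0$ and all global minima of $L$ have zero loss.

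The main obstacle is the surjectivity step: a rank-$n$ Jacobian only guarantees that $F$ is a submersion and hence an open map, whereas what is actually needed is that the image of the open cone $\mc{S}^A_X$ (not all of $\R^{2d_1}$) under $F$ covers $\R^n$. The pairing of positive and negative neurons sharing a step vector row is the key mechanism: it supplies the two-sided sign flexibility that allows each pair to contribute the full linear subspace $V_\xi$ rather than merely a one-sided cone in it.
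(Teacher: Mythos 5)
Your proposal is correct, and at the combinatorial level it matches the paper exactly: the paper calls your covering property \emph{completeness} (for each of the $n$ basis step vectors $\xi^{(k,1)}$, a row with that pattern among the positive-weight neurons and one among the negative-weight neurons), establishes it for all but an $\epsilon$ fraction of non-empty regions via Lemma~\ref{lemma:one2one-act-step}, two applications of Lemma~\ref{coupon-collector} with $\delta = \frac{1}{2n}$ and failure probability $\epsilon/2$ each, and a union bound, and then obtains the codimension-$n$ statement by rank--nullity for the linearization $\varphi$ of $F$ on $\mc{S}^A_X$. Where you genuinely diverge is the technical core, surjectivity of $F$ restricted to the open cone (the paper's Lemma~\ref{surjective-within-region}): the paper proves this by an explicit recursive construction --- it fixes the slack neurons arbitrarily, forms residuals $z^{(j)}$, builds a piecewise linear interpolant $g^{(n-1)}$ with breakpoints between consecutive data points, and distributes each term over the matched positive/negative pair using the positive multipliers $\frac{3 + (2\beta^{(k)}-1)(2\beta-1)}{2}$ so that all preactivation signs stay consistent with $A$ --- whereas you argue abstractly that the image of $\mc{S}^A_X$ is the Minkowski sum of per-neuron convex cones, that a matched pair contributes $K + (-K) = \operatorname{span}(K) = V_\xi$ (valid since $L(P_\xi)$ is a nonempty convex cone), that these subspaces contain the basis vectors $\xi^{(k,1)}$ and hence sum to $\R^n$, and that the slack neurons are then absorbed by $\R^n + S = \R^n$. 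Your route is shorter, avoids the delicate sign and indexing bookkeeping of the recursion, and isolates the real mechanism (a single neuron only yields a one-sided cone; the opposite output signs supply the reflection needed to fill out the span), and it generalizes readily to any family of patterns whose subspaces $V_\xi$ sum to $\R^n$ with both signs present; the paper's construction buys explicitness, exhibiting concrete interpolating parameters that can be verified by direct computation. One small repair: your citation of Lemma~\ref{full-rank-global} for ``the zero set is an affine subspace of codimension $n$'' is a misattribution --- that lemma says differentiable critical points with full-rank Jacobian are global minima; the affine statement instead follows, as in the paper, from rank--nullity applied to the surjective linear map $\varphi$ (whose rank-$n$ bias block you already exhibited), together with the observation that $F = \varphi$ on $\mc{S}^A_X$, so $\mc{G}_{X,y} \cap \mc{S}^A_X = \varphi^{-1}(y) \cap \mc{S}^A_X$.
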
 
  We provide the proof of this statement in Appendix \ref{app:sec:non-empty-no-bad-minima}. To conclude, for shallow ReLU networks and univariate data we have shown that mild overparameterization suffices to ensure that the loss landscape is favorable in the sense that most activation regions contain a set of global minima of significant codimension and do not contain any local minima. Furthermore, considering both Sections \ref{sec:counting-bad} and \ref{sec:non-empty-activation-regions}, we have shown in both the one-dimensional and higher-dimensional cases that most activation regions have a full rank Jacobian and contain no bad local minima. This suggests that a large fraction of parameter space by volume should also have a full rank Jacobian. This indeed turns out to be the case and is discussed in Section \ref{sec:volume}. Finally, we highlight to the reader that we provide a discussion of the function space perspective of this setting in Appendix \ref{app:sec:function-space}.
  
\subsection{Nonsmooth critical points}
Our results in this section considered the local minima in the interior of a given activation region. Here we extend our analysis to handle points on the boundaries between regions where the loss is non-differentiable. We consider a network on univariate data $F(w, b, v, x)$ as in Section \ref{sec:non-empty-activation-regions}. Let $\sgn: \R \to \{0, 1\}$ be the unit step function:
\begin{align*}
    \sgn(z) := \begin{cases}
        0 & \text{ if } z \leq 0\\
        1 & \text{ if } z > 0
    \end{cases}.
\end{align*}
Here we define the activation regions to include the boundaries:
\[\mc{S}^A_X := \left\{(w, b) \in \R^{d_1} \times \R^{d_1}: \sgn(w^{(i)}x^{(j)} + b^{(i)}) = A_{ij} \text{ for all } i \in [d_1], j \in [n].  \right\} \]
We assume that the input dataset $X$ consists of distinct data points $x^{(1)} < x^{(2)} < \cdots < x^{(n)}$. By the same argument as Lemma \ref{lemma:one2one-act-step}, an activation region $\mc{S}^A_X$ is non-empty if and only if the rows of $A$ are step vectors and we derive the following result.

\begin{theorem}\label{thm:nonsmooth-landscape}
    Let $\epsilon \in (0, 1)$. If
    \[d_1 \geq 2n \log\left(\frac{n}{\epsilon}\right), \]
    then in all but at most a fraction $\epsilon$ of non-empty activation regions $A$, every local minimum of $L$ in $\mc{S}^A_X \times \R^{d_1}$ is a global minimum.
\end{theorem}


\section{Extension to deep networks} 
\label{sec:deep-case}

While the results presented so far focused on shallow networks, they admit generalizations to deep networks. These generalizations are achieved by considering the Jacobian with respect to parameters of individual layers of the network. In this section, we demonstrate this technique and prove that in deeper networks, most activation regions have no spurious critical points. We consider fully connected deep networks with $L$ layers, where layer $l$ maps from $\R^{d_{l - 1}}$ to $\R^{d_l}$, and $d_L = 1$. The parameter space consists of tuples of matrices 
\[
W = (W_1, W_2, \ldots, W_L),  
\]
where $W_l \in \R^{d_l \times d_{l-1}}$. We identify the vector space of all such tuples with $\R^m$, where $m = \sum_{l = 1}^L d_l d_{l - 1}$. For $l \in \{0, \ldots, L\}$, we define the $l$-th layer $f_l\colon \R^m \times \R^{d_{l - 1}} \to \R^{d_l}$ recursively by
\[f_0(W, x) := x, \]
\[f_l(W, x) := \sigma(W_lf_{l - 1}(\theta, x)) \]
if $l \in [L - 1]$, and
\[f_L(W, x) := v^Tf_{l - 1}(\theta, x), \]
  where $v \in \R^{d_{l - 1}}$ is a fixed vector whose entries are nonzero. 
  Then $f_L$ is the final layer output of the network, so the model is given by $F := f_L$. 
  We denote the $j$-th row of $W_l$ by $w_l^{(j)}$. The activation patterns of a deep network are given by tuples
  \[A = (A_1, A_2, \cdots, A_{L - 1}), \]
  where for each $l \in [L - 1]$, $A_l \in \{0, 1\}^{d_l \times n}$. For an activation pattern $A$, let $\mc{S}^A_X$ denote the subset of parameter space corresponding to $A$. More precisely,
  \begin{align*}
      \mc{S}^A_X = \{W \in \R^m: (2[A_l]_{ij} - 1)\langle w_l^{(i)}, f_{l - 1}(W, x^{(j)}) \rangle > 0 \text{ for all } l \in [L - 1], i \in [d_l], j \in [n]  \}.
  \end{align*}
  Assuming linear scaling of the second-to-last layer of the network in the number of data points, along with mild restrictions on the other layers, we prove that most activation regions for deep networks have a full rank Jacobian.
      \begin{theorem}\label{thm:loss-landscape-deep}
          Let $X \in \R^{d_0 \times n}$ be an input dataset with distinct points. Suppose that for all $l \in [L - 2]$,
          \[d_l = \Omega\left(\log \frac{n}{\epsilon L}\right), \]
          and that
          \[d_{L - 1} = n + \Omega\left( \log \frac{1}{\epsilon}\right). \]
          Then for at least a $(1 - \epsilon)$ fraction of all activation patterns $A$, the following holds. For all $W \in \mc{S}^A_X$, $\nabla_W F(W, X)$ has rank $n$.
      \end{theorem}
        We provide a proof of this statement in Appendix~\ref{app:deep-case}.

    \section{Volumes of activation regions}\label{sec:volume}
Our main results so far have concerned the number of activation regions containing bad local versus global minima. Here we compute bounds for the volume of the union of these bad versus good activation regions, which recall are subsets of parameter space. 
Proofs of all results in this section are provided in Appendix~\ref{app:volume-of-regions}.

\subsection{One-dimensional input data} 

Consider the setting of Section~\ref{sec:non-empty-activation-regions}, where we have a network $F(w, b, v, x)$ on one-dimensional input data. Recall that for $A \in \{0, 1\}^{d_1 \times n}$, we defined the activation region $\mc{S}^A_X$ by
\[
\mc{S}^A_X := \{(w, b) \in \R^{d_1} \times \R^{d_1}: (2A_{ij} - 1)(w^{(i)}x^{(j)} + b^{(i)}) > 0 \text{ for all } i \in [d_1], j \in [n]\}.
\] 
For $k \in [n + 1]$, $\beta \in \{0, 1\}$ and corresponding step vectors $\xi^{(k,\beta)}$, we also define the individual neuron activation regions $\mc{N}_{k, \beta}$ by 
\[
\mc{N}_{k, \beta} := \{(w, b) \in \R \times \R: (2\xi^{(k, \beta)}_j - 1)(wx^{(j)} + b) > 0 \text{ for all } j \in [n] \}. 
\]

Let $\mu$ denote the Lebesgue measure. First, we compute an exact formula for the volume of the activation regions intersected with the unit interval. That is, we compute the Lebesgue measure of the set $\mc{N}_{k, \beta} \cap ([-1, 1] \times [-1, 1])$.
\begin{proposition}\label{prop:neuron-volume}
    Let $\psi: [-\infty, \infty] \to [0, 2]$ be defined by
    \[\psi(x) := \begin{cases}
            -\frac{1}{2x} & \text{if } x \leq -1\\
            1 + \frac{x}{2} & \text{if } -1 < x \leq 1\\
            2 - \frac{1}{2x} & \text{if } x > 1
        \end{cases}. 
        \]
Consider data $x^{(1)}\leq \cdots \leq x^{(n)}$.  
    Then for $k \in [n + 1]$ and $\beta \in \{0, 1\}$,
    \[\mu(\mc{N}_{k, \beta} \cap ([-1, 1] \times [-1, 1])) = \psi(x^{(k)}) - \psi(x^{(k - 1)}). \]
    Here we define $x^{(0)} := -\infty$, $x^{(n+1)} := \infty$.
\end{proposition}
We apply this result to bound the volume of activation regions with full rank Jacobian in terms of the amount of separation between the data points. For overparameterized networks with sufficient separation, the activation regions with full rank Jacobian fill up most of the parameter space by volume.
\begin{proposition}\label{prop:volume-full-rank-regions}
    Let $n \geq 2$. Suppose that the entries of $v$ are nonzero. Suppose that for all $j, k \in [n]$ with $j \neq k$, we have $|x^{(j)}| \leq 1$ and $|x^{(j)} - x^{(k)}| \geq \phi$. If
    \[d_1 \geq \frac{4}{\phi}\log\left(\frac{n}{\epsilon}\right), \]
    then
    \begin{align*}
        \mu(\cup \{\mc{S}^A_X \cap [-1, 1]^{d_1} \times [-1, 1]^{d_1}: \nabla_{w, b}F \textnormal{ has full rank on } \mc{S}^A_X\}) \geq (1 - \epsilon)2^{2d_1}.
    \end{align*}
\end{proposition} 

\subsection{Arbitrary dimension input data} 
We now consider the setting of Section \ref{sec:counting-bad}. Our model $(F\colon \R^{d_1 \times d_0} \times \R^{d_1}) \times \R^{d_0} \to \R$ is defined by
\[
F(W, v, x) = v^T \sigma(Wx). 
\] 
We consider the volume of the set of points $(W, v)$ such that $\nabla_{(W, v)}F(W, v, X)$ has rank $n$. We can formulate this problem probabilistically. Suppose that the entries of $W$ and $v$ are sampled from the standard normal distribution $\mc{N}(0, 1)$. We wish to compute the probability that the Jacobian has full rank.

Let $\sgn: \R \to \{0, 1\}$ denote the step function:
\[
\sgn(z) = \begin{cases}
    0 & \text{ if } z \leq 0\\
    1 & \text{ if } z > 0
\end{cases}.
\] 
For an input dataset $X \in \R^{d_0 \times n}$, consider the random variable
\[
\sgn(X^Tw),
\] 
where $w \sim \mc{N}(0, I)$, and $\sgn$ is defined entrywise. This defines the distribution of activation patterns on the dataset, which we denote by $\mc{D}_X$.
We say that an input dataset $X \in \R^{d_0 \times n}$ is $\gamma$-anticoncentrated if for all nonzero $u \in \R^n$, 
\begin{align*}
    \P_{a \sim \mc{D}_X}(u^Ta = 0) \leq 1 - \gamma. 
\end{align*}
We can interpret this as a condition on the amount of separation between data points. For example, suppose that two data points $x^{(j)}$ and $x^{(k)}$ are highly correlated: $\|x^{(j)}\| = \|x^{(k)}\| = 1$ and $\langle x^{(j)}, x^{(k)} \rangle \geq \rho$. Let us take $u \in \R^n$ to be defined by $u_i = 1$, $u_k = -1$, and $u_{l} = 0$ for $l \neq j, k$. Then
\begin{align*}
    \P_{a \sim \mc{D}_X}(u^Ta = 0) &= \P_{a \sim \mc{D}_X}(a_i = a_k)\\
    &= \P_{w \sim \mc{N}(0, I)}(\sgn(w^Tx^{(j)}) = \sgn(w^T x^{(k)}))\\
    &= 1 - \frac{1}{\pi}\arccos(\langle x^{(j)}, x^{(k)} \rangle)\\
    &\leq 1 - \frac{\arccos(\rho)}{\pi}.
\end{align*}
So in this case, the dataset is not $\gamma$-anticoncentrated for $\gamma =  \frac{\arccos(\rho)}{\pi}$. At the other extreme, suppose that the dataset is uncorrelated: $\langle x^{(j)}, x^{(k)} \rangle = 0$ for all $j, k \in [n]$. Then for all $z \in \R$,
\begin{align*}
    \P_{a \sim \mc{D}_X}(a_n = 0 \mid a_1, \cdots, a_{n - 1}) &= \P_{w \sim \mc{N}(0, I)}(\langle w, x^{(n)} \rangle \leq 0 \mid \langle w, x^{(1)} \rangle, \cdots, \langle w, x^{(n - 1)} \rangle)\\
    &= \P_{w \sim \mc{N}(0, I)}(\langle w, x^{(n)} \rangle = 0)\\
    &= \frac{1}{2}.
\end{align*}
For $u \in \R^n$ with $u_n \neq 0$,
\begin{align*}
    \P_{a \sim \mc{D}_X}(u^T a = 0) &= \E[\P(u^Ta = 0 \mid a_1, \cdots, a_{n - 1})]\\
    &= \E\left[\P\left(u_na_n = -\sum_{j = 1}^{n - 1} u_ja_j \middle| a_1, \cdots, a_{n - 1}\right)\right]\\
    &\leq \E[1/2]\\
    &= 1/2.
\end{align*}
So in this case, the dataset is $\gamma$-anticoncentrated with $\gamma = 1/2$. In order to prove that the Jacobian has full rank with high probability, we must impose a separation condition such as this one -- as data points get closer together, it becomes harder for the network to distinguish between them and the Jacobian drops rank. Once we impose $\gamma$-anticoncentration, a mildly overparameterized network will attain full rank at randomly selected parameters with high probability. 

\begin{theorem}\label{thm:anticoncentrated-volume-bound}
    Let $\epsilon, \gamma \in (0, 1)$. Suppose that $X \in \R^{d_0 \times n}$ is generic and $\gamma$-anticoncentrated. If
    \[d_1 \geq \frac{8}{\gamma^2}\log\left(\frac{d_0}{\epsilon}\right) + \frac{2}{\gamma}\left(\frac{n}{d_0} + 1\right),\]
    then with probability at least $1 - \epsilon$, $\nabla_{(W, v)}F(W, v, X)$ has rank $n$.
\end{theorem}

  \section{Experiments}
  \label{sec:experiments}
  
  
  
  
  Here we empirically demonstrate that most regions of parameter space have a good optimization landscape by computing the rank of the Jacobian for two-layer neural networks. We initialize our network with random weights and biases sampled iid uniformly on $\left[-\frac{1}{\sqrt{d_1}}, \frac{1}{\sqrt{d_1}}\right]$. We evaluate the network on a random dataset $X \in \R^{d_0 \times n}$ whose entries are sampled iid Gaussian with mean 0 and variance 1. This gives us an activation region corresponding to the network evaluated at $X$, and we record the rank of the Jacobian of that matrix. For each choice of $d_0, d_1$, and $n$, we run 100 trials and record the fraction of them which resulted in a Jacobian of full rank. The results are shown in Figures \ref{fig:jacobian_rank} and \ref{fig:jacobian_rank_linear_scaling}.
  
  For different scalings of $n$ and $d_0$, we observe different minimal widths $d_1$ needed for the Jacobian to achieve full rank with high probability. Figure \ref{fig:jacobian_rank} suggests that the minimum value of $d_1$ needed to achieve full rank increases linearly in the dataset size $n$, and that the slope of this linear dependence decreases as the input dimension $d_0$ increases. This is exactly the behavior predicted by Theorem \ref{full-rank-global}, which finds full rank regions for $d_1 \gtrsim \frac{n}{d_0}$. Figure \ref{fig:jacobian_rank_linear_scaling} operates in the regime $d_0 \sim n$, and shows that the necessary hidden dimension $d_1$ 
  remains constant in the dataset size $n$. This is again consistent with Theorem~\ref{possibly-empty-regions}, whose bounds depend only on the ratio $\frac{n}{d_0}$. Further supporting experiments, including those involving real-world data, are provided in Appendix \ref{app:experiments}.
  
  \begin{figure}[h]
      \centering 
      \begin{subfigure}[c]{.24\textwidth}
          \includegraphics[clip=true, trim=1cm 0cm 1.5cm .5cm,width=\textwidth]{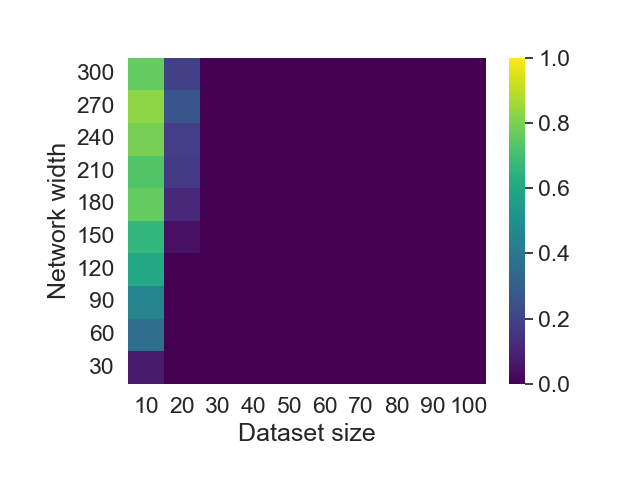}
          \caption{$d_0 = 1$.}    
      \end{subfigure}   
      \begin{subfigure}[c]{.24\textwidth}
          \includegraphics[clip=true, trim=1cm 0cm 1.5cm .5cm,width=\textwidth]{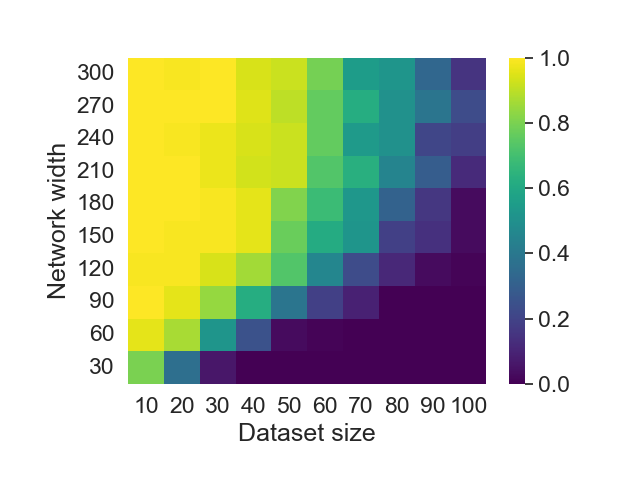}
          \caption{$d_0 = 2$.}    
      \end{subfigure}
      \begin{subfigure}[c]{.24\textwidth}
          \includegraphics[clip=true, trim=1cm 0cm 1.5cm .5cm,width=\textwidth]{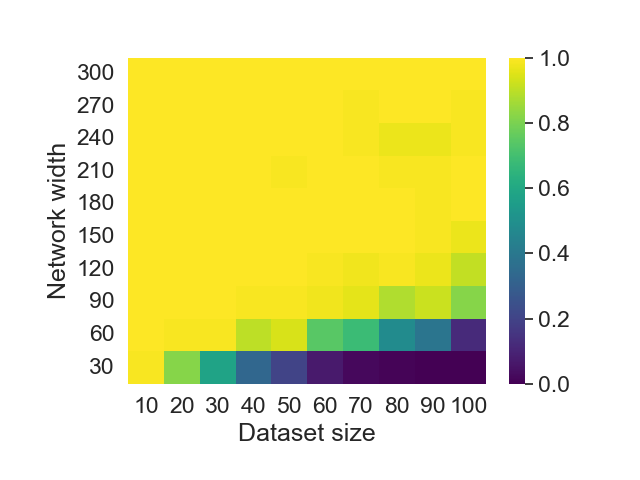}
          \caption{$d_0 = 3$.}    
      \end{subfigure}   
      \begin{subfigure}[c]{.24\textwidth}
          \includegraphics[clip=true, trim=1cm 0cm 1.5cm .5cm,width=\textwidth]{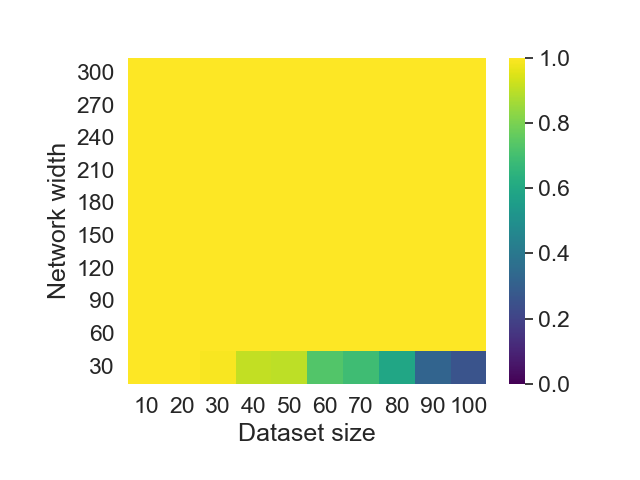}
          \caption{$d_0 = 5$.}    
      \end{subfigure} 
      \caption{The probability of the Jacobian being of full rank from a random initialization for various values of $d_1$ and $n$, where the input dimension $d_0$ is left fixed.} 
      \label{fig:jacobian_rank}
  \end{figure}
  
  \begin{figure}[h]
      \centering 
      \begin{subfigure}[c]{.24\textwidth}
          \includegraphics[clip=true, trim=1cm 0cm 1.5cm .5cm,width=\textwidth]{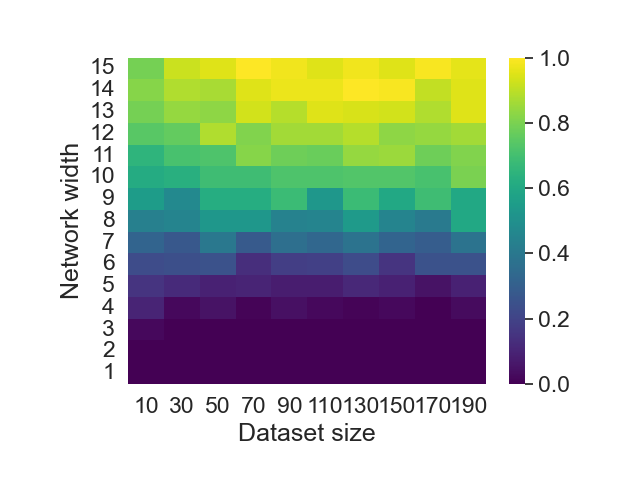}
          \caption{$d_0 = \lceil\frac{n}{4}\rceil$.}    
      \end{subfigure}   
      \begin{subfigure}[c]{.24\textwidth}
          \includegraphics[clip=true, trim=1cm 0cm 1.5cm .5cm,width=\textwidth]{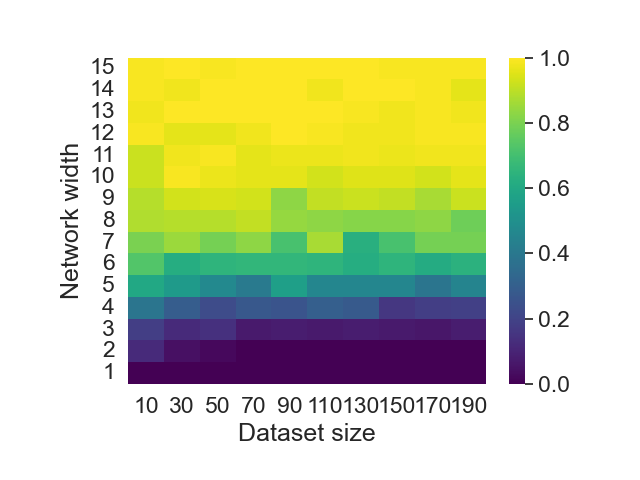}
          \caption{$d_0 = \lceil \frac{n}{2}\rceil$.}    
      \end{subfigure}
      \begin{subfigure}[c]{.24\textwidth}
          \includegraphics[clip=true, trim=1cm 0cm 1.5cm .5cm,width=\textwidth]{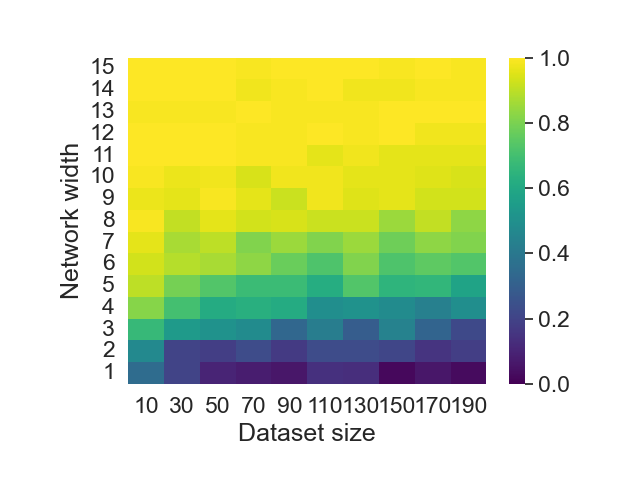}
          \caption{$d_0 = n$.}    
      \end{subfigure}   
      \begin{subfigure}[c]{.24\textwidth}
          \includegraphics[clip=true, trim=1cm 0cm 1.5cm .5cm,width=\textwidth]{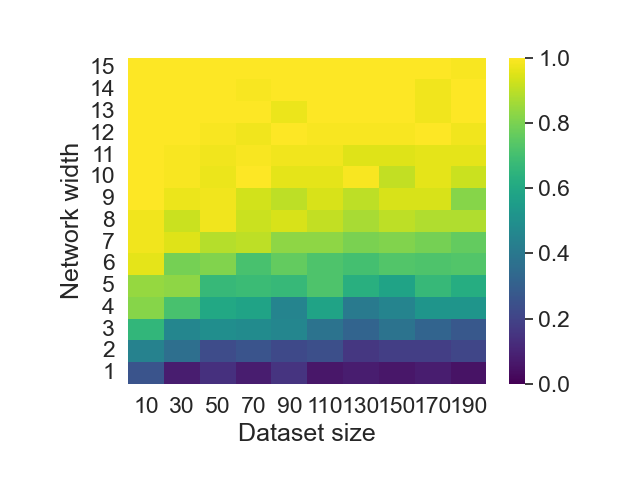}
          \caption{$d_0 = 2n$.}    
      \end{subfigure} 
      \caption{The probability of the Jacobian being of full rank for various values of $d_1$ and $n$, where the input dimension $d_0$ scales linearly in the number of samples $n$.} 
      \label{fig:jacobian_rank_linear_scaling}
  \end{figure}
  
  \FloatBarrier 
  
  \section{Conclusion}
  In this work we studied the loss landscape of both shallow and deep ReLU networks. The key takeaway is that mildly overparameterization in terms of network width suffices to ensure the loss landscape is favorable in the sense that bad local minima exist in only a small fraction of parameter space.
  In particular, using random matrix theory and combinatorial techniques we showed that most activation regions have no bad differentiable local minima  by determining which regions have a full rank Jacobian. 
  For univariate data we further proved that most regions contain a high-dimensional set of global minimizers and also showed illustrated that the same takeaway is true when also considering potential bad non-differentiable local minima on the boundaries between regions.
  Finally the combinatorial approach we adopted allowed us to prove results independent of the specific choice of initialization of parameters, or on the distribution of the dataset. There are a number of directions for improvement. Notably we obtain our strongest results for shallow one-dimensional input case, where we have a concrete grasp of the possible activation patterns; the case $1 < d_1 < n$ remains open. We leave strengthening our results concerning deep networks and high dimensional data to future work, and hope that our contributions inspire further exploration to address them. 
  
  \paragraph{Reproducibility statement} The computer implementation of the scripts needed to reproduce our experiments can be found at 
  \url{https://anonymous.4open.science/r/loss-landscape-4271}. 
  \subsection*{Acknowledgments} 
This project has been supported by NSF CAREER 2145630, NSF 2212520, DFG SPP~2298 grant 464109215, ERC Starting Grant 757983, and BMBF in DAAD project 57616814.

  
  
  \bibliography{main}
  \bibliographystyle{refs}

  \newpage 
  \appendix

\section{Details on counting activation regions with no bad local minima}
  \label{app:sec:counting-bad}
We provide the proofs of the results presented in Section~\ref{sec:counting-bad}.

  \begin{proof}[Proof of Lemma~\ref{cancel-v}] 
      Suppose that $\lambda^{(1)}, \ldots, \lambda^{(n)} \in \R$. Since the entries of $v$ are nonzero, the following are equivalent:
      \begin{align*}
          \sum_{i=1}^n \lambda^{(i)} (v \odot a^{(i)}) \otimes x^{(i)} &= 0\\
           \sum_{i = 1}^n \lambda^{(i)} (v \odot a^{(i)})_k x^{(i)}_j &= 0 \quad \forall j \in [d_0], k \in [d_1]\\
           \sum_{i=1}^n \lambda^{(i)}v_k a^{(i)}_k x_j^{(i)} &= 0 \quad \forall j \in [d_0], k \in [d_1]\\
           \sum_{i=1}^n \lambda^{(i)} a^{(i)}_k x_j^{(i)} &= 0 \quad \forall j \in [d_0], k \in [d_1]\\
           \sum_{i=1}^n \lambda^{(i)} a^{(i)} \otimes x^{(i)} &= 0.
      \end{align*}
      So the kernel of the matrix whose $i$-th row is $(v \odot a^{(i)}) \otimes x^{(i)}$ is equal to the kernel of the matrix whose $i$-th row is $a^{(i)} \otimes x^{(i)}$. It follows that
          \begin{align*}
          \rank(\{(v \odot a^{(i)}) \otimes x^{(i)}: i \in n\}) = \rank(\{a^{(i)} \otimes x^{(i)}: i \in [n]\}).
      \end{align*}
  \end{proof}

  \begin{proof}[Proof of Theorem~\ref{possibly-empty-regions}] 
  Suppose that $d_0d_1 \geq n$. Consider a matrix $A\in\{0,1\}^{d_1\times n}$, which corresponds to one of the theoretically possible activation pattern of all $d_1$ units across all $n$ input examples, and denote its columns by $a^{(j)}\in\{0,1\}^{d_1}, j=1,\ldots, n$. 
  %
  For any given input dataset $X$, the function $F$ is piecewise linear in $W$. 
  More specifically, on each activation region 
  $\mathcal{S}^A_X$, $F(\cdot,v,X)$ is a linear map in $W$ of the form 
  \begin{align*}
      F(W, v,x^{(j)}) &= \sum_{i=1}^{d_1}\sum_{k=1}^{d_0} v_i A_{ij}W_{ik}x_k^{(j)} & (j \in [n]). 
  \end{align*} 
  So on $\mathcal{S}^A_X$ the Jacobian of $F(\cdot,X)$ 
  is given by the Khatri-Rao (columnwise Kronecker) product 
  \begin{align}
      \nabla_{W}F(W,v, X) &=  
     ((v \odot a^{(1)}) \otimes x^{(1)}, \ldots, (v \odot a^{(n)}) \otimes x^{(n)}).
  \end{align}
  In particular, $\nabla_WF$ is of full rank on $\mathcal{S}^A_X$ exactly when the set 
  \[
  \{(v \odot a^{(i)}) \otimes x^{(i)}: i \in [n]\}
  \]
  consists of linearly independent elements of $\R^{d_1 \times d_0}$, since $d_0d_1 \geq n$. By Lemma \ref{cancel-v}, this is equivalent to the set
  \[
  \{a^{(i)} \otimes x^{(i)}: i \in [n]\}
  \]
  consisting of linearly independent vectors, or in another words, the Khatri-Rao product $A * X$ having full rank.
  %

  For a given $A \in \{0,1\}^{d_1 \times n}$ consider the set 
  \begin{align*}
  \mathcal{J}^A := \{X \in \R^{d_0 \times n}: A\ast X \text{ is of full rank}\}. 
  \end{align*} 
  %
  The expression $A* X$ corresponds to $\nabla_W F(W, v,X)$ in the case that $W \in \mathcal{S}^A_X$.  
  Suppose that $d_1$ is large enough that $d_1 \geq \frac{n}{d_0}$. 
  We will show that for most $A \in \{0, 1\}^{d_1 \times n}$, $\mathcal{J}^A$ is non-empty and in fact contains almost every $X$. 
  We may partition $[n]$ into $r := \lceil \frac{n}{d_1}\rceil$ subsets $S_1, S_2, \ldots, S_{r}$ such that $|S_k| \leq d_1$ for all $k \in [r]$ and partition the set of columns of $A$ accordingly into blocks $(a^{(s)})_{s \in S_k}$ for all $k\in [r]$. 
  Let us form a $d_1 \times |S_k|$ matrix $M$ whose columns are the $a^{(s)}, s\in S_k$. 
  We will use a probabilistic argument.
  For this, consider the entries of $M$ as being iid Bernoulli random variables with parameter $1/2$. 
  We may extend $M$ to a $d_1 \times d_1$ random matrix $\tilde{M}$ whose entries are iid Bernoulli random variables with parameter $1/2$. 
  By Theorem~\ref{binarymatrix}, $\tilde{M}$ 
  will be singular with probability at most
  \[\left(\frac{1}{\sqrt{2}} + o(1)\right)^{d_1} \leq C_1 \cdot 0.72^{d_1},\]
  where $C_1$ is a universal constant. 
  Whenever $\tilde{M}$ is nonsingular, the vectors $a^{(s)}, s \in S_k$ are linearly independent. 
  Using a simple union bound, we have 
  \begin{align*}
      \P \left((a^{(s)})_{s \in S_k} \text{ are linearly independent for all } k \in [r]\right) &\geq 1 - r C_1(0.72)^{d_1} . 
  \end{align*}
  Now suppose that $(a^{(s)})_{s\in S_k}$ are linearly independent for all $k \in [r]$. 
  Let $e_1, \ldots, e_{d_0}$ be the standard basis of $\R^{d_0}$. 
  Since 
  $r\leq d_0$, there exists $X \in \R^{d_0 \times n}$ such that $x^{(i)} = e_k$ whenever $i \in S_k$. For such an $X$, we claim that the set
  \[
  \{a^{(i)} \otimes x^{(i)}: i \in [n]\}
  \]
  consists of linearly independent elements of $\mathbb{R}^{d_1\times d_0}$. 
  To see this, suppose that there exist $\alpha_1, \ldots, \alpha_n$ such that 
  \begin{align*}
      \sum_{i=1}^n \alpha_i (a^{(i)} \otimes x^{(i)}) &= 0.
  \end{align*}
  Then
  \begin{align*}
      0 &= \sum_{k=1}^r\sum_{i \in S_k}\alpha_i (a^{(i)} \otimes x^{(i)})\\&= \sum_{k=1}^r \sum_{i \in S_k}\alpha_i(a^{(i)} \otimes e_k)\\&=
      \sum_{k=1}^r \left(\sum_{i \in S_k}\alpha_i a^{(i)} \right)\otimes e_k.
  \end{align*}
  The above equation can only hold if
  \[
  \sum_{i \in S_k}\alpha_i a^{(i)} = 0 ,\quad \text{for all $k \in [r]$.} 
  \]
  By the linear independence of $(a^{(i)})_{i \in S_k}$, this implies that $\alpha_i = 0$ for all $i \in [n]$. 
  This shows that the elements $a^{(i)} \otimes x^{(i)}$ are linearly independent for a particular $X$ whenever the 
  $(a^{(i)})_{i \in S_k}$ are all linearly independent. 
  In other words, $\mathcal{J}^A$ is non-empty with probability at least $1 -  C_1r(0.72^{d_1})$ when the activation region is chosen uniformly at random. 
  
  Let us define
  \[
  \mathcal{A} := \{A \in \{0, 1\}^{d_1 \times n}: \mathcal{J}^A \neq \emptyset \}. 
  \]
  We have shown that if $A$ is chosen uniformly at random from $\{0, 1\}^{d_1 \times n}$, then $A \in \mathcal{A}$ with high probability. Note that $\mathcal{J}^A$ is defined in terms of polynomials of $X$ not vanishing, 
  so $\mathcal{J}^A$ 
  is the complement of a Zariski-closed subset of $\R^{d_0 \times n}$.
  Let
  \begin{align*}
      \mathcal{J} := \cap_{A \in \mathcal{A}} \mathcal{J}_A.
  \end{align*}
  Then $\mathcal{J}$ is a Zariski-open set of full measure, since it is a finite intersection of non-empty Zariski-open sets (which are themselves of full measure by Lemma \ref{open-dense}). If $X \in \mathcal{J}$ and $A \in \mathcal{A}$, then $\nabla_W F(W, X)$ is of full rank for all $W \in \mathcal{S}_X^A$, and therefore all local minima in $\mathcal{S}_X^A$ will be global minima by Lemma \ref{full-rank-global}.
  So if we take $X \in \mathcal{J}$ and $d_1$ such that
  \[
  d_1 \geq \frac{\log\left(\frac{C_1(n+1)}{d_0\epsilon}\right)}{\log\left( \frac{1}{0.72}\right)}
  \]
  and choose $A \in \{0, 1 \}^{d_1 \times n}$ uniformly at random, then with probability at least
  \begin{align*}
      1 - C_1r(0.72^{d_1}) &\geq 1 - C_1 r\left(\frac{d_0\epsilon}{C_1(n + 1)}\right)\\
      &\geq 1 - \epsilon,
  \end{align*}
  $\mathcal{S}^A_X$ will have no bad local minima.
  We rephrase this as a combinatorial result: if $d_1$ satisfies the same bounds above, then for generic datasets $X$ we have the following: for all but at most $\lceil \epsilon 2^{d_0d_1} \rceil$ activation regions $\mathcal{S}^A_X$, $\mathcal{S}^A_X$ has no bad local minima. The theorem follows.
  \end{proof}
  The argument that we used to prove that the Jacobian is full rank for generic data can be applied to arbitrary binary matrices. The proof is exactly the same as the sub-argument in the proof of Theorem \ref{possibly-empty-regions}.
    \begin{lemma}\label{lemma:khatri-rao-rank}
    For generic datasets $X \in \R^{d_0 \times n}$, the following holds. Let $a^{(i)} \in \{0, 1\}^{d_1}$ for $i \in [n]$. Suppose that there exists a partition of $[n]$ into $r \leq d_0$ subsets $S_1, \cdots, S_r$ such that for all $k \in [r]$,
    \[\rank(\{a^{(i)}: i \in S_k\}) = |S_k|. \]
    Then
    \[\rank(\{a^{(i)} \otimes x^{(i)}: i \in [n]\}) = n.\]
    \end{lemma}

  \section{Details on counting non-empty activation regions}
  
  We provide the proofs of the results presented in Section~\ref{sec:counting-bad}. 
  The subdivisions of parameter space by activation properties of neurons is classically studied in VC-dimension computation \citep{4038449,NIPS1998_f18a6d1c,anthony_bartlett_1999}. This has similarities to the analysis of linear regions in input space for neural networks with piecewise linear activation functions \citep{NIPS2014_109d2dd3}. 
  
  \begin{proof}[Proof of Proposition~\ref{prop:number-non-empty-regions}]
  Consider the map $\mathbb{R}^{d_0}\to\mathbb{R}^n; w \mapsto [ w^T X ]_+$ that takes the input weights $w$ of a single ReLU to its activation values over $n$ input data points given by the columns of $X$. 
  This can equivalently be interpreted as a map taking an input vector $w$ to the activation values of $n$ ReLUs with input weights given by the columns of $X$. 
  The linear regions of the latter correspond to the (full-dimensional) regions of a central arrangement of $n$ hyperplanes in $\mathbb{R}^{d_0}$ with normals $x^{(1)},\ldots, x^{(n)}$. Denote the number of such regions by $N_X$. 
  If the columns of $X$ are in general position in a $d$-dimensional linear space, meaning that they are contained in a $d$-dimensional linear space and any $d$ columns are linearly independent, then 
  \begin{equation}
  N_{X} = N_{d,n} := 2\sum_{k=0}^{d-1}{n-1\choose k}. 
  \label{eq:regions-general-position}
  \end{equation}
  This is a classic result that can be traced back to the work of \citet{Schläfli1950}, and which also appeared in the discussion of linear threshold devices by \citet{Cover1964}. 
  %
  
  Now for a layer of $d_1$ ReLUs, each unit has its parameter space $\mathbb{R}^{d_0}$ subdivided by an equivalent hyperplane arrangement that is determined solely by $X$. 
  Since all units have individual parameters, the arrangement of each unit essentially lives in a different set of coordinates. 
  In turn, the overall arrangement in the parameter space $\mathbb{R}^{d_0\times d_1}$ of all units is a so-called product arrangement, and the number of regions is $(N_X)^{d_1}$. 
  This conclusion is sufficiently intuitive, but it can also be derived from \citet[][Lemma 4A3]{zaslavsky1975facing}. 
  %
  If the input data $X$ is in general position in a $d$-dimensional linear subspace of the input space, then we can substitute \eqref{eq:regions-general-position} into $(N_X)^{d_1}$ and obtain the number of regions stated in the proposition. 
  \end{proof}

  
  We are also interested in the specific identity of the non-empty regions; that is, the sign patterns that are associated with them. 
  As we have seen above, the set of sign patterns of a layer of units is the $d_1$-Cartesian power of the set of sign patterns of an individual unit. 
  Therefore, it is sufficient to understand the set of sign patterns of an individual unit; that is, the set of dichotomies that can be computed over the columns of $X$ by a bias-free simple perceptron $x\mapsto \sgn(w^T x)$. 
  Note that this subsumes networks with biases as the special case where the first row of $X$ consists of ones, in which case the first component of the weight vector can be regarded as the bias. 
  Let $L_X$ be the $N_X \times n$ matrix whose $N_X$ 
  rows are the different possible dichotomies $(\sgn w^T x^{(1)},\ldots, \sgn  w^T x^{(n)})\in\{-1,+1\}^n$. 
  If 
  we extend the definition of dichotomies to allow not only $+1$ and $-1$ but also zeros for the case that data points fall on the decision boundary, 
  then we obtain a matrix $L_X$ that is referred to as the \emph{oriented matroid} of the vector configuration $X$, and whose rows are referred to as the \emph{covectors} of $X$ \citep{orientedmatroids}. This is also known as the list of sign sequences of the faces of the hyperplane arrangement.

  
  %
  
  To provide more intuitions for Proposition~\ref{prop:identity-non-empty}, we give a self-contained proof below. 
  \begin{proof}[Proof of Proposition~\ref{prop:identity-non-empty}] 
  %
  For each unit, the parameter space $\mathbb{R}^{d_0}$ is subdivided by an arrangement of $n$ hyperplanes with normals given by $x^{(j)}$, $j\in[n]$. 
  A weight vector $w$ is in the interior of the activation region with pattern $a\in\{0,1\}^n$ if and only if $(2a_{j}-1) w^T x^{(j)}> 0$ for all $j\in[n]$. Equivalently, 
  \begin{align*}
  w^T x^{(j)}> w^T 0& \quad \text{for all $j$ with $a_j=+1$}\\
  w^T 0 > w^T x^{(j')}& \quad \text{for all $j'$ with $a_j'=0$}.  
  \end{align*} 
  This means that $w$ is a point where the function $w\mapsto w^T \sum_{j\colon a_j=1} x^{(j)}$ attains the maximum value among of all linear functions with gradients given by sums of $x^{(j)}$s, meaning that at $w$ this function attains the same value as 
  \begin{equation}
  \psi \colon w \mapsto 
  \sum_{j\in[n]} \max\{0, w^T x^{(j)}\} = \max_{S\subseteq [n]} w^T \sum_{j\in S} x^{(j)} . 
  \label{eq:tropical-polynomial}
  \end{equation}
  %
  Dually, the linear function $x\mapsto w^T x$ attains its maximum over the polytope 
  \begin{equation}
  P = \conv \{\sum_{j\in S} x^{(j)} \colon S\subseteq[n] \} 
  \label{eq:newton-polytope}
  \end{equation}
  precisely at $\sum_{j\colon a_j=1}x^{(j)}$. 
  In other words, $\sum_{j\colon a_j=1}x^{(j)}$ is an extreme point or vertex of $P$ with a supporting hyperplane with normal $w$. 
  For a polytope $P\subseteq \mathbb{R}^{d_0}$, the normal cone of $P$ at a point $x\in P$ is defined as the set of $w\in \mathbb{R}^{d_0}$ such that $w^Tx \geq w^T x'$ for all $x'\in P$. 
  For any $S\subseteq[n]$ let us denote by $\mathbf{1}_S\in\{0,1\}^{[n]}$ the vector with ones at components in $S$ and zeros otherwise. 
  Then the above discussion shows that the activation region $\mathcal{S}_X^a$ with $a = \mathbf{1}_S$ is the interior of the normal cone of $P$ at $\sum_{j\in S} x^{(j)}$. 
  In particular, the activation region is non empty if and only if $\sum_{j\in S} x^{(j)}$ is a vertex of $P$. 
  
  To conclude, we show that $P$ is a Minkowski sum of line segments, 
  $$
  P = \sum_{j\in[n]} P_j, \quad P_i = \conv \{ 0, x^{(j)}\}. 
  $$
  To see this note that $x\in P$ if and only if there exist $\alpha_S\geq0$, $\sum_S\alpha_S=1$ with 
  \begin{align*}
  x =& \sum_{S\subseteq [n]} \alpha_S \sum_{j\in S} x^{(j)}\\
  =& \sum_{j\in [n]} \sum_{S\colon j\in S} \alpha_S x^{(j)} \\
  =& \sum_{j\in [n]} [ (\sum_{S\colon j\not\in S} \alpha_S) 0 + (\sum_{S\colon j\in S} \alpha_S) x^{(j)} ]\\
  =& \sum_{j\in [n]} [ (1-\beta_j) 0  + \beta_j x^{(j)}] = \sum_{j\in [n]} z^{(j)} , 
  \end{align*}
  where $\beta_j = \sum_{S\colon j\in S} \alpha_S$ and $z^{(j)} = [ (1-\beta_j) 0  + \beta_j x^{(j)}]$. 
  Thus, $x\in P$ if and only if $x$ is a sum of points $z^{(j)} \in P_j = \conv\{0, x^{(j)}\}$, meaning that $P = \sum_j P_j$ as was claimed. 
  \end{proof}
  
  The polytope \eqref{eq:newton-polytope} may be regarded as the Newton polytope of the piecewise linear convex function \eqref{eq:tropical-polynomial} in the context of tropical geometry \citep{ETC}. This perspective has been used to study the linear regions in input space for ReLU \citep{zhang2018tropical} and maxout networks \citep{doi:10.1137/21M1413699}. 
  
  We note that each vertex of a Minkowski sum of polytopes $P=P_1+\cdots + P_n$ is a sum of vertices of the summands $P_j$, but not every sum of vertices of the $P_j$ results in a vertex of $P$. 
  Our polytope $P$ is a sum of line segments, which is a type of polytope known as a zonotope. 
  Each vertex of $P$ takes the form $v = \sum_{j\in[n]} v_j$, where each $v_j$ is a vertex of $P_j$, either the zero vertex $0$ or the nonzero vertex $x^{(j)}$, and is naturally labeled by a vector $\mathbf{1}_S \in \{0,1\}^n\cong 2^{[n]}$ that indicates the $j$s for which $v_j= x^{(j)}$. 
  %
  A zonotope can be interpreted as the image of a cube by a linear map; in our case it is the projection of the $n$-cube $\conv\{a \in \{0,1\}^n\}\subseteq\mathbb{R}^{n}$ into $\mathbb{R}^{d_0}$ by the matrix $X\in\mathbb{R}^{d_0\times n}$. 
  The situation is illustrated in Figure~\ref{fig:Newton-poly}. 
  
  \begin{example}[Non-empty activation regions for 1-dimensional inputs]
  In the case of one-dimensional inputs and units with biases, the parameter space of each unit is $\mathbb{R}^2$. We treat the data as points $x^{(1)}, \ldots, x^{(n)}\in 1\times \mathbb{R}$, where the first coordinate is to accommodate the bias. 
  For generic data (i.e., no data points on top of each other), the polytope $P=\sum_{j\in[d_1]}\conv\{0,x^{(j)}\}$ is a polygon with $2n$ vertices. 
  The vertices have labels $\mathbf{1}_S$ indicating, for each $k=0,\ldots, n$, the subsets $S\subseteq [n]$ containing the largest respectively smallest $k$ elements in the dataset with respect to the non-bias coordinate. 
  \end{example}
  
  \begin{figure}
  \centering 
  \begin{tikzpicture}[scale=1.5]
  \draw[gray, ->] (-1.5,0) -- (1.5,0); 
  \draw[gray, ->] (0,-.1) -- (0,3.2); 
  \draw[gray, dotted] (-1.5,1) -- (1.5,1); 
  \draw[gray, dotted] (-1.5,2) -- (1.5,2); 
  \draw[gray, dotted] (-1.5,3) -- (1.5,3); 
  
  \fill[blue!20, opacity=.5] (0,0) -- (-1.1,1) -- (-1,2) -- (-.3,3) -- (.8,2) -- (.7,1) -- (0,0);

  \node[circle,fill,inner sep=1pt, label=below:{\textcolor{black}{$000$}}] (l0) at (0,0) {}; 
  %
  \node[label=above left:{\textcolor{black}{$x^{(1)}$}}] (l11) at (-1.1,1)[circle,fill,inner sep=1pt]{};  
  \node[label=below left:{\textcolor{black}{$100$}}] (l11) at (-1.1,1)[circle,fill,inner sep=1pt]{};  
  
  \node[label=left:{\textcolor{black}{$x^{(2)}$}}] (l12) at (.1,1)[circle,fill,inner sep=1pt]{};  
  \node[label=above right:{\textcolor{black}{$x^{(3)}$}}] (l13) at (.7,1)[circle,fill,inner sep=1pt]{};  
  \node[label= below right:{\textcolor{black}{$001$}}] (l13) at (.7,1)[circle,fill,inner sep=1pt]{};  
  \node[, label=left:{\textcolor{black}{$110$}}] (l21) at (-1,2)[circle,fill,inner sep=1pt]{};  
  \node (l22) at (-.4,2)[circle,fill,inner sep=1pt]{};  
  \node[, label=right:{\textcolor{black}{$011$}}] (l23) at (.8,2)[circle,fill,inner sep=1pt]{};  
  \node[, label=above:{\textcolor{black}{$111$}}] (l3) at (-.3,3)[circle,fill,inner sep=1pt]{};  
  
  \draw[red, thick] (l0) -- (l12) ;
  \draw[black, thin] (l11) -- (l22) (l12) -- (l21) (l12) -- (l23) (l13) -- (l22) ;
  \draw[black, thin] (l22) -- (l3); 
  \draw[very thick] (l0) -- (l11) (l0) -- (l13) ; 
  \draw[thick, red] (l0) -- (l11) (l0) -- (l13) ; 
  \draw[very thick] (l11) -- (l21) (l13) -- (l23); 
  \draw[very thick] (l21) -- (l3)  (l23) -- (l3) ;
  
  \node at (1,2.75) {$P = P_1+P_2+P_3$}; 
  \end{tikzpicture}
  \caption{Illustration of Proposition~\ref{prop:identity-non-empty}. The polytope $P$ for a ReLU on three data points $x^{(1)}, x^{(2)}, x^{(3)}$ is the Minkowski sum of the line segments $P_i=\conv\{0,x^{(i)}\}$ highlighted in red. The activation regions in parameter space are the normal cones of $P$ at its different vertices. 
  Hence the vertices correspond to the non-empty activation regions. These are naturally labeled by vectors $\mathbf{1}_S$ that indicate which $x^{(i)}$ are added to produce the vertex and record the activation patterns. } 
  \label{fig:Newton-poly}
  \end{figure}
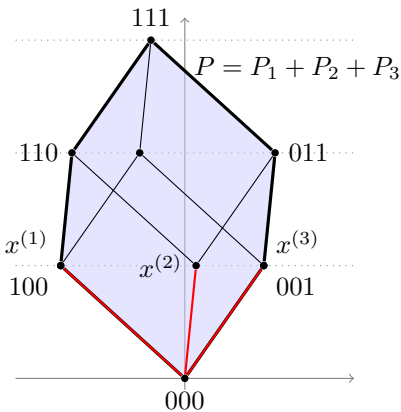

  \section{Details on counting non-empty activation regions with no bad minima}\label{app:sec:non-empty-no-bad-minima}
  We provide the proofs of the statements 
  in Section~\ref{sec:non-empty-activation-regions}. 
  
  \begin{proof}[Proof of Lemma~\ref{lemma:one2one-act-step}] 
  First we establish that the rows of the matrices $A$ corresponding to non-empty activation regions must be step vectors. To this end, assume $A \in \mathcal{S}^A_X$ and for an arbitrary row $i \in [d_1]$ let $A_{i1} = \alpha \in \{0,1\}$. If $A_{ij} = \alpha$ for all $j \in [n]$, then the $i$-th row of $A$ is a step vector equal to either $\xi^{(n+1, 0)}$ or $\xi^{(n+1, 1)}$. Otherwise, there exists a minimal $k \in [n + 1]$ such that $A_{ik} = 1 - \alpha$. We proceed by contradiction to prove in this setting that $A_{ij} = 1 - \alpha$ for all $j \geq k$. Suppose there exists a $j > k$ such that $A_{ij} = \alpha$, then as $A \in \mathcal{S}^A_X$ and $x^{(k)} < x^{(j)}$ this implies that the following two inequalities are simultaneously satisfied,
  \begin{align*}
      &0 <(2A_{ik}-1)(w^{(i)}x^{(k)} + b^{(i)}) = (1 - 2 \alpha) (w^{(i)}x^{(k)} + b^{(i)}) < (1 - 2 \alpha)(w^{(i)}x^{(j)} + b^{(i)}),\\
      &0 < (2A_{ij}-1)(w^{(i)}x^{(j)} + b^{(i)}) = (2\alpha -1)(w^{(i)}x^{(j)} + b^{(i)}) = - (1 - 2 \alpha)(w^{(i)}x^{(j)} + b^{(i)}).
  \end{align*}
  However, these inequalities clearly contradict one another and therefore we conclude $A_{i, \cdot}$ is a step vector equal to either $\xi^{(k,0)}$ or $\xi^{(k,1)}$

  
  
  
  Conversely, assume the rows of $A$ are all step vectors. We proceed to prove $\mathcal{S}^A_X$ is non-empty under this assumption by constructing $(w,b)$ such that $\text{sign}(w^{(i)} x^{(j)} + b^{(i)}) = A_{ij}$ for all $j \in [n]$ and any row $i \in [d_1]$. To this end, let $A_{i1} = \alpha \in \{0,1\}$. First, consider the case where $A_{ij} = \alpha$ for all $j \in [n]$. If $\alpha = 0$ then with $w^{(i)} = 1$ and $b^{(i)}= -2|x^{(n)}|-1$ it follows that 
  \[
  w^{(i)} x^{(j)} + b^{(i)} < x^{(n)} + b^{(i)} < -|x^{(n)}|-1
  \] 
  for all $j \in [n]$. If $\alpha=1$, then with $w^{(i)} = 1$ and $b^{(i)}= 2|x^{(i)}|+1$ we have
  \[w^{(i)} x^{(j)} + b^{(i)}  >  w^{(i)} x^{(1)} + b^{(i)} > |x^{(1)}| + 1\]
  for all $j \in [n]$. Otherwise, suppose $A_{i, \cdot}$ is not constant: then by construction there exists a $k\in \{2, \cdots, n\}$ such that $A_{ij} = \alpha$ for all $j \in [1,k-1]$ and $A_{ij} = 1-\alpha$ for all $j \in [k,n]$. Letting
  \begin{align*}
      w^{(i)} &= (2\alpha - 1),\\
      b^{(i)} &= -(2\alpha - 1)\left(\frac{x^{(k - 1)} + x^{(k)}}{2}\right),
  \end{align*}
  then for any $j \in [n]$
  \begin{align*}
      \sgn(w^{(i)}x^{(j)} + b^{(i)}) = \sgn\left((2\alpha - 1)\left(x^{(j)} - \frac{x^{(k - 1)} + x^{(k)}}{2} \right) \right) = A_{ij}.
  \end{align*}
  Thus, for any $A$ with step vector rows, given a dataset consisting distinct one dimensional points we can construct network whose preactivations correspond to the activation pattern encoded by $A$.

  In summary, given a fixed, distinct one dimensional dataset we have established a one-to-one correspondence between the non-empty activation regions and the set of binary matrices whose rows are step vectors. For convenience we refer to these as row-step matrices. As there are $2n$ step vector rows of dimension $n$ and $d_1$ rows in $A$, then there are $(2n)^{d_1}$ binary row-step matrices and hence $(2n)^{d_1}$ non-empty activation regions. 
  \end{proof}
  
  \begin{proof}[Proof of Lemma~\ref{coupon-collector}] 
      By the union bound,
      \begin{align*}
          \P([n] \not\subseteq \{C_1, \ldots, C_d\}) &\leq \sum_{j=1}^n \P(j \notin \{C_1, \cdots, C_d\})\\
          &= \sum_{j=1}^n \left(1 - \delta\right)^d\\
          &\leq \sum_{j=1}^n e^{-\delta d}\\
          &= ne^{-\delta d}.
      \end{align*}
      So if $d > \frac{1}{\delta}\log(\frac{n}{\epsilon})$,
      \begin{align*}
          \P([n] \subseteq \{C_1, \ldots, C_d\}) &\geq 1 - ne^{-\delta d}\\
          &\geq 1 - \epsilon.
      \end{align*}
  This concludes the proof. 
  \end{proof}
  
  \begin{proof}[Proof of Theorem~\ref{one-dimension-no-bad-minima}]
  Consider a dataset $(X, y)$ consisting of distinct data points. Without loss of generality we may index these points such that 
  \[
  x^{(1)} < x^{(2)} < \cdots < x^{(n)}. 
  \] 
  Now consider a matrix $A \in \{0, 1\}^{d_1 \times n}$ whose rows are step vectors and which therefore corresponds to a non-empty activation region by Lemma~\ref{lemma:one2one-act-step}. 
  As in the proof of Theorem~\ref{possibly-empty-regions}, denote its columns by $a^{(j)} \in \{0, 1\}^{d_1}$ for $j \in [n]$. On $\mc{S}^A_X$, the Jacobian of $F$ with respect to $b$ is given by
  \begin{align*}
      \nabla_b F(w, b,v, X) &= (v \odot a^{(1)}, \ldots, v \odot a^{(n)}).
  \end{align*}
  We claim that if $A$ has full rank, then $\nabla_b F(w, b,v, X)$ has full rank for all $w, b,v, X$ with the entries of $v$ nonzero. To see this, suppose that $A$ is of full rank, and that
  \begin{align*}
      \sum_{j=1}^n \alpha_j (v \odot a^{(j)}) &= 0
  \end{align*}
  for some $\alpha_1, \ldots, \alpha_n \in \R$. Then for all $i \in [d_1]$,
  \begin{align*}
      \sum_{j=1}^n \alpha_j v_i A_{ij} &= 0,\\
      \sum_{j=1}^n \alpha_i A_{ij} &= 0.
  \end{align*}
  But $A$ is of full rank, so this implies that $\alpha_i = 0$ for all $i \in [n]$. 
  As a result, if $A$ is full rank then $\nabla_b F(w, b, v,X)$ is full rank, and in particular $\nabla_{(w, b, v )} F(w, b, v,X)$ is rank. Therefore, to show most non-empty activation regions have no bad local minima, it suffices to show most non-empty activation regions are defined by a full rank binary matrix $A$.
  
  To this end, if $A$ is a binary matrix with step vector rows, we say that $A$ is \emph{diverse} if it satisfies the following properties:
  \begin{enumerate}
      \item For all $k \in [n]$, there exists $i \in [d_1]$ such that $A_{i, \cdot} \in \{\xi^{(k,0)}, \xi^{(k,1)}\}$.
      \item There exists $i \in [d_1]$ such that $A_{i,\cdot} = \xi^{(1, 1)}$.
  \end{enumerate}
  We proceed by i) showing all diverse matrices are of full rank and then ii) non-empty activation regions are defined by a diverse matrix. Suppose $A$ is diverse and denote the span of the rows of $A$ by $\operatorname{row}(A)$. 
  Then $\xi^{(1, 1)} = (1,\ldots,1) \in \text{row}(A)$ and for each $k \in [n]$ either $\xi^{(k, 0)}$ or $\xi^{(k, 1)}$ is in $\text{row}(A)$. Observe if $\xi^{(k, 0)} \in \text{row}(A)$ then $1 - \xi^{(k, 0)} = \xi^{(k, 1)} \in \text{row}(A)$, therefore for all $k \in [n]$ $\xi^{(k, 1)} \in \text{row}(A)$. As the set of vectors
  \[
  \{ \xi^{(k, 1)}: k \in [n]\} 
  \]
  forms a basis of $\R^n$ we conclude that all diverse matrices are full rank. 
  
  Now we show most binary matrices with step vector rows are diverse. Let $A$ be a random binary matrix whose rows are selected mutually iid from the set of all step vectors. For $i \in [d_1]$, let $C_i \in [n + 1]$ be defined as follows: if $A_{i, \cdot} \in \{\xi^{(k,0)}, \xi^{(k, 1)}\}$ for some $k \in \{2, 3, \ldots, n\}$, we define $C_i = k$; if $A_{i, \cdot} = \xi^{(1, 1)}$, then we define $C_i = 1$; otherwise, we define $C_i = n+1$. By definition $A$ is diverse if and only if
  \[
  [n] \subseteq \{C_1, \ldots, C_{d_1}\}. 
  \]
  As the rows of $A$ are chosen uniformly at random from the set of all step vectors, the $C_i$ are iid and
  \[\P(C_1 = k) \geq \frac{1}{2n}\]
  for all $k \in [n]$, then by Lemma \ref{coupon-collector}, if $d \geq 2n \log(\frac{n}{\epsilon})$,
  \begin{align*}
      \P(A \text{ is diverse}) &= \P([n] \subseteq \{C_1, \ldots, C_{d_1}\})\\
      &\geq 1 - \epsilon.
  \end{align*}
  This holds for a randomly selected matrix with step vector rows. Translating this into a combinatorial statement, we see for all but at most a fraction $\epsilon$ of matrices with step vector rows are diverse. Furthermore, in each activation region $\mc{S}^A_X$ corresponding to a diverse $A$ the Jacobian is full rank and every differentiable critical point of $L$ is a global minimum.
  \end{proof}

  \begin{proof}[Proof of Theorem~\ref{one-dim-global-minima}]
      Let $\epsilon > 0$. We define the following two sets of neurons based on the sign of their output weight,
      \begin{align*}
          \mc{D}_1 &= \{i \in [d_1]: v^{(i)} > 0\},\\
          \mc{D}_0 &= \{i \in [d_1]: v^{(i)} < 0\}.
      \end{align*}
      Suppose that $|\mc{D}_1|, |\mc{D}_0| \geq 2n\log(\frac{2n}{\epsilon})$. Furthermore, without loss of generality, we index the points in the dataset such that 
      \[
      x^{(1)} < x^{(2)} < \cdots < x^{(n)}. 
      \]
      Now consider a matrix $A \in \{0, 1\}^{d_1 \times n}$ whose rows are step vectors, then by Lemma \ref{lemma:one2one-act-step} $A$ corresponds to a non-empty activation region. We say that $A$ is \emph{complete} if for all $k \in \{0, \ldots, n - 1\}$ and $\beta \in \{0, 1\}$ there exists $i \in [d_1]$ such that $A_{i, \cdot} = \xi^{(k, 1)}$ and $\sgn(v^{(i)}) = \beta$. We first show that if $A$ is complete, then there exists a global minimizer in $\mc{S}^A_X$. Consider the linear map $\varphi: \R^{d_1} \times \R^{d_1} \to \R^{1 \times n}$ defined by 
      \[
      \varphi(w, b) = \left[\sum_{i = 1}^{d_1} v^{(i)} (2A_{ij} - 1)(w^{(i)}x^{(j)} + b^{(i)})\right]_{j = 1, \ldots, n}. 
      \]
      Note for $(w, b) \in \mc{S}^A_X$ then $\varphi(w, b) = F(w, b, v, X)$. As proved in Lemma~\ref{surjective-within-region}, for every $z \in \R^{1 \times n}$ there exists $(w, b) \in \mc{S}^A_X$ such that $F(w, b, v, X) = z$. In particular, this means that $\varphi$ is surjective and therefore $\mc{S}^A_X$ contains zero-loss global minimizers. Define
      \[\mc{V}_y = \{(w, b) \in \R^{d_1} \times \R^{d_1}: \varphi(w, b) = y \}, \]
      then $\mc{G}_{X, y}\cap \mc{S}^A_X  = \mc{V}_y \cap \mc{S}^A_X$ and by the rank-nullity theorem
      \begin{align*}
          \dim(\mc{V}_y) &= \dim(\varphi^{-1}(\{y\}))\\
          &= 2d_1 - n.
      \end{align*}
      We therefore conclude that if $A$ is complete, then $\mc{G}_{X, y}\cap \mc{S}^A_X$ is the restriction of a $(2d_1 - n)$-dimensional linear subspace to the open set $\mc{S}^A_X$. This is equivalent to $\mc{G}_{X, y} \cap \mc{S}^A_X$ being an affine set of codimension $n$ as claimed.
  
      To prove the desired result it therefore suffices to show that most binary matrices with step vector rows are complete. Let $A$ be a random binary matrix whose rows are selected mutually iid uniformly at random from the set of all step vectors. For $\beta \in \{0, 1\}$ and $i \in \mc{D}_{\beta}$ let $C_{\beta, i} \in [n]$ be defined as follows: if $A_{i, \cdot} = \xi^{(k - 1, 1)}$ for some $k \in [n]$ then $C_{\beta, i} = k$, otherwise $C_{\beta, i} = n + 1$. Observe that $A$ is complete if and only if
      \[[n] \subseteq \{C_{0, i}: i \in \mc{D}_{0}\} \text{ and } [n] \subseteq \{C_{1, i}: i \in \mc{D}_{1}\}. \]
      Since there are $2n$ step vectors,
      \[\P(C_{\beta, i} = k) = \frac{1}{2n} \]
      for all $k \in [n]$.
      So by the union bound and Lemma \ref{coupon-collector},
      \begin{align*}
          \P(A \text{ is complete}) &= \P([n] \subseteq \{C_{0, i}: i \in \mc{D}_0\}) \text{ and } [n] \subseteq \{C_{1, i}: i \in \mc{D}_1\})\\
          &\geq 1 - \P([n] \not\subseteq \{C_{0, i}: i \in \mc{D}_0\}) - \P([n] \not\subseteq \{C_{1, i}: i \in \mc{D}_1\})\\
          &\geq 1 - \frac{\epsilon}{2} - \frac{\epsilon}{2}\\
          &= 1-\epsilon.
      \end{align*}
      As this holds for a matrix with step vectors chosen uniformly at random, it therefore follows that all but at most a fraction $\epsilon$ of such matrices are complete. This concludes the proof. 
      \end{proof}
      
  \begin{lemma}\label{surjective-within-region}
      Let $A \in \{0, 1\}^{d_1 \times n}$ be complete, and let $X \in \R^{d_0 \times n}$ be an input dataset with distinct points. Then for any $y \in \R^{1 \times n}$, there exists $(w, b) \in \mc{S}^A_X$ such that $F(w, b, v, X) = y$.
  \end{lemma}
  \begin{proof}
      If $A$ is complete then there exist row indices $i_{0, 0}, i_{1, 0}, \ldots, i_{n-1, 0} \in [d_1]$ and $i_{0, 1}, i_{1, 1}, \ldots, i_{n - 1, 1} \in [d_1]$ such that for all $k \in \{0, \ldots, n - 1\}$ and $\beta \in \{0, 1\}$ the $i_{k, \beta}$-th row of $A$ is equal to $\xi^{(k, 1)}$ with  $v^{(i_{k, \beta})} = \beta$. For $\beta \in \{0, 1\}$, we define
      \[
      \mc{I}_{\beta} = \{i_{0, \beta}, \ldots, i_{n - 1, \beta}\}. 
      \]
      We will construct weights $w^{(1)}, \ldots, w^{(d_1)}$ and biases $b^{(1)}, \ldots, b^{(d_1)}$ such that $(w, b) \in \mc{S}^A_X$ and $F(w, b, v, X) = y$. For $i \notin \mc{I}_0 \cup \mc{I}_1$, we define $w^{(i)}$ and $b^{(i)}$ arbitrarily such that
      \[
      \sgn(w^{(i)}x^{(j)} + b^{(i)}) = A_{ij}
      \]
      for all $j \in [n]$. Note by Lemma \ref{lemma:one2one-act-step} that this is possible as each $A_{i, \cdot}$ is a step vector. Therefore, in order to show the desired result it suffices only to appropriately construct $w^{(i)}, b^{(i)}$ for $i \in \mc{I}_0 \cup \mc{I}_1$. To this end we proceed using the following sequence of steps.
      \begin{enumerate}
          \item First we separately consider the contributions to the output of the network coming from the \textit{slack} neurons, those with index $i \not \in \mc{I}_0 \cup \mc{I}_1$, and the \textit{key} neurons, those with index $i \in \mc{I}_0 \cup \mc{I}_1$. For $j \in [n]$, we denote the residual of the target $y^{(j)}$ leftover after removing the corresponding output from the slack part of the network as $z^{(j)}$. We then recursively build a sequence of functions $(g^{(l)})_{l=0}^{n-1}$ in such a manner that $g^{(l)}(x^{(j)}) = z^{(j)}$ for all $j\leq n$.
          \item Based on this construction, we select the parameters of the key neurons, i.e., $(w^{(i)}, b^{(i)})$ for $i \in \mc{I}_0 \cup \mc{I}_1$, and prove $(w,b) \in \mc{S}_X^A$.
          \item Finally, using these parameters and the function $g^{(n - 1)}$ we show $F(w, b, v, X) = y$.
      \end{enumerate}
      
      \textbf{Step 1:} for $j \in [n]$ we define the residual
      \begin{align}\label{defn-of-z}
      z^{(j)} = y^{(j)} - \sum_{i \notin \mc{I}_0 \cup \mc{I}_1} v^{(i)}\sigma(w^{(i)}x^{(j)} + b^{(i)}). 
      \end{align}
      For $j \in \{0, \ldots, n - 1\}$, consider the values $\beta^{(j)}, \tilde{b}^{(j)},$  $\tilde{w}^{(j)} \in \R$, and the functions $g^{(j)}: \R \to \R$, defined recursively across the dataset as
      \begin{align*}
          \beta^{(0)} &= \sgn(z^{(1)})\\
          \tilde{w}^{(0)} &= 1\\
          \tilde{b}^{(0)} &= |z^{(1)}| - \tilde{x}^{(1)}\\
          g^{(0)}(x) &= (2\beta^{(0)} - 1)\sigma(\tilde{w}^{(0)}x + \tilde{b}^{(0)})\\
          \beta^{(j)} &= \sgn( z^{(j+1)} - g^{(j-1)}(x^{(j+1)}) ) & (1 \leq j \leq n-1)\phantom{.}\\
          \tilde{w}^{(j)} &= \frac{2|z^{(j+1)} - g^{(j-1)}(x^{(j+1)})  |}{x^{(j+1)} - x^{(j)}} & (1 \leq j \leq n-1)\phantom{.}\\
          \tilde{b}^{(j)} &= -\frac{|z^{(j+1)} - g^{(j-1)}(x^{(j+1)})|(x^{(j+1)} + x^{(j)})}{x^{(j+1)} - x^{(j)}} & (1 \leq j \leq n-1)\phantom{.}\\
          g^{(j)}(x) &= \sum_{\ell = 0}^j (2\beta^{(\ell)} - 1)\sigma(\tilde{w}^{(\ell)}x + \tilde{b}^{\ell}) & (1 \leq j \leq n-1).
      \end{align*}
      Observe for all $j \in \{1, \ldots, n - 1\}$ that $\tilde{w}^{(j)} \geq 0$ and
      \begin{align*}\tilde{w}^{(j)}\left(\frac{x^{(j)} + x^{(j+1)}}{2} \right) + b^{(j)} &= 0.
      \end{align*}
      In particular, $\tilde{w}^{(j)}x^{(j')} + \tilde{b}^{(j)} < 0$ if $j' \leq j$ and $\tilde{w}^{(j)}x^{(j')} + b^{(j)} > 0$ otherwise. Moreover, for all $j' \in [n]$,
      \begin{align*}
          \tilde{w}^{(0)}x^{(j')} + b ^{(0)}&= x^{(j')} + |\tilde{y}^{(1)}| - x^{(1)}\\
          &> 0.
      \end{align*}
       We claim that for all $j \in [n]$, $g^{(n-1)}(x^{(j)}) = z^{(j)}$. For the case $j = 1$, we compute
      \begin{align*}
          g^{(n-1)}(x^{(1)}) &= \sum_{\ell = 0}^{n-1} (2\beta^{(\ell)} - 1)\sigma(\tilde{w}^{(\ell)}x^{(1)} + \tilde{b}^{(\ell)})\\
  &=        (2 \beta^{(0)} - 1)\sigma(\tilde{w}^{(0)}x^{(1)} + \tilde{b}^{(0)})\\
          &= (2\sgn(\tilde{z}^{(1)}) - 1)\sigma(x^{(1)} + |\tilde{z}^{(1)}| - x^{(1)})\\
          &= \tilde{z}^{(1)}.
      \end{align*}
      For $j \in \{2, \cdots, n - 1\}$,
      \begin{align*}
          &g^{(n-1)}(x^{(j)})\\
          &= \sum_{\ell = 0}^{n-1}(2 \beta^{(\ell)} - 1)\sigma(\tilde{w}^{(\ell)} x^{(j)} + \tilde{b}^{(\ell)})\\
          &= \sum_{\ell = 0}^{j-1}(2 \beta^{(\ell)} - 1)\sigma(\tilde{w}^{(\ell)} x^{(j)} + \tilde{b}^{(\ell)})\\
          &= (2 \beta^{(j-1)} - 1)\sigma(\tilde{w}^{(j-1)} x^{(j)} + \tilde{b}^{(j-1)}) + g^{(j - 2)}(x^{(j)})\\
          &= (2 \beta^{(j-1)} - 1)(\tilde{w}^{(j-1)} x^{(j)} + \tilde{b}^{(j-1)}) + g^{(j - 2)}(x^{(j )})\\
          &= (2 \beta^{(j-1)} - 1)(\tilde{w}^{(j-1)} x^{(j)} + \tilde{b}^{(j-1)}) + g^{(j - 2)}(x^{(j)})\\
          &= (2\beta^{(j-1)} - 1)\left(\frac{2|z^{(j)} - g^{(j-2)}(x^{(j)})|}{x^{(j)} - x^{(j-1)}}x^{(j)} - \frac{|z^{(j)} - g^{(j-2)}(x^{(j)})|(x^{(j)} + x^{(j-1)}) }{x^{(j)} - x^{(j-1)}}\right) \\
          &\phantom{=} + g^{(j-2)}(x^{(j)})\\
          &= (2\beta^{(j - 1)} - 1)|z^{(j)} - g^{(j - 2)}(x^{(j)})| + g^{(j - 2)}(x^{(j)})\\
          &= (z^{(j)} - g^{(j - 2)}(x^{(j)})) + g^{(j-2)}(x^{(j)})\\
          &= z^{(j)}.
      \end{align*}
      Hence, $g^{(n - 1)}(x^{(j)}) = z^{(j)}$ for all $j \in [n]$.
  
      \textbf{Step 2:} based on the construction above we define $(w^{(i)}, b^{(i)})$ for $i \in \mc{I}_0 \cup \mc{I}_1$. For $k \in \{0, \ldots, n - 1\}$ and $\beta \in \{0, 1\}$, we define
      \begin{align*}
          w^{(i_{k, \beta})} &= \frac{3 + (2\beta^{(k)} - 
  1)(2 \beta - 1)}{2}  \frac{\tilde{w}^{(k)}}{|v^{(i_{k, \beta})}|}\\
  b^{(i_{k, \beta})} &= \frac{3 + (2\beta^{(k)} - 
  1)(2 \beta - 1)}{2}  \frac{\tilde{b}^{(k)}}{|v^{(i_{k, \beta})}|}.
      \end{align*}
      Now we show that this pair $(w, b)$ satisfies the desired properties. By construction, we have
      \[\sgn(w^{(i)}x^{(j)} + b^{(i)}) = A_{ij} \]
      for $i \notin \mc{I}_0 \cup \mc{I}_1$, $j \in [n]$. If $i \in \mc{I}_0 \cup \mc{I}_1, j \in [n]$, then we can write $i = i_{k, \beta}$ for some $k \in \{0, \ldots, n - 1\}$, $\beta \in \{0, 1\}$. Then
      \begin{align*}
          \sgn(w^{(i)}x^{(j)} + b^{(i)}) &= \sgn\left(\frac{3 + (2\beta^{(k)} - 
  1)(2 \beta - 1)}{2|v^{(i_{k, \beta})}|}(\tilde{w}^{(k)}x^{(j)} + \tilde{b}^{(k)}) \right)\\
  &= \sgn(\tilde{w}^{(k)}x^{(j)} + \tilde{b}^{(k)})\\
  &= 1_{k < j}\\
  &= A_{ij},
      \end{align*}
      where the second-to-last line follows from the construction of $\tilde{w}$ and $\tilde{b}$ and the last line follows from the fact that $A_{i, \cdot} = \xi^{(k, 1)}$.
  
  \textbf{Step 3:} finally, we show that $F(w, b, v, X) = y$. For $j \in [n]$,
  \begin{align*}
      F(w, b, x^{(j)}) &= \sum_{i = 1}^{d_1} v^{(i)}\sigma(w^{(i)}x^{(j)} + b^{(i)})\\
      &= \sum_{i \notin \mc{I}_0 \cup \mc{I}_1}v^{(i)}\sigma(w^{(i)}x^{(j)} + b^{(i)}) + \sum_{i \in \mc{I}_0 \cup \mc{I}_1} v^{(i)}\sigma(w^{(i)}x^{(j)} + b^{(i)})\\
      &= z^{(j)} - y^{(j)} + \sum_{k = 0}^{n - 1}\sum_{\beta = 0}^1 v^{(i_{k, \beta})} \sigma(w^{(i_{k, \beta})} x^{(j)} + b^{(i_{k, \beta})}),
  \end{align*}
  where the last line follows from (\ref{defn-of-z}). By the definition of $w^{(i_{k, \beta})}$ and $b^{(i_{k, \beta})}$, this is equal to
  \begin{align*}
      &z^{(j)} - y^{(j)} + \sum_{k = 0}^{n - 1}\sum_{\beta = 0}^1 \frac{v^{(i_k, \beta)}}{|v^{(i_k, \beta)}|}\frac{3 + (2\beta^{(k)} - 1)(2 \beta - 1)}{2}\sigma(\tilde{w}^{(k)} x^{(j)} + \tilde{b}^{(k)}).
  \end{align*}
  By construction $\sgn(v^{(i_k, \beta)}) = \beta$, so the above is equal to
  \begin{align*}
      &y^{(j)} - z^{(j)} + \sum_{k = 0}^{n - 1}\sum_{\beta = 0}^1 (2\beta - 1)\frac{3 + (2\beta^{(k)} - 1)(2 \beta - 1)}{2}\sigma(\tilde{w}^{(k)} x^{(j)} + \tilde{b}^{(k)})\\
      &= y^{(j)} - z^{(j)} + \sum_{k = 0}^{n - 1}(2\beta^{(k)} - 1) \sigma(\tilde{w}^{(k)} x^{(j)} + \tilde{b}^{(k)})\\
      &= y^{(j)} - z^{(j)} + g^{(n - 1)}(x^{(j)})\\
      &= y^{(j)}.
  \end{align*}
  In conclusion, we have therefore successfully identified weights and biases $(w, b) \in \mc{S}^A_X$ such that $F(w, b, v, X) = y$.
  \end{proof}
\begin{proof}[Proof of Theorem \ref{thm:nonsmooth-landscape}]
    We say that a binary matrix $A$ with step vector rows is \emph{diverse} if for all $k \in [n]$, there exists $i \in [d_1]$ such that $A_{i, \cdot} = \xi^{(k, 1)}$. Suppose that $A$ is a binary matrix uniformly selected among all binary matrices with step vector rows. Define the random variables $C_1, \cdots, C_{d_1} \in [n + 1]$ by $C_i = k$ if $A_{i, \cdot} = \xi^{(k, 1)}$ for some $k \in [n]$, and $C_i = n + 1$ otherwise. Since the rows of $A$ are iid, the $C_i$ are iid. Moreover, for each $k \in [n]$, we have $\P(C_i = k) \geq \frac{1}{2n}$. So by Lemma \ref{coupon-collector}, if
    \[d_1 \geq 2n \log\left(\frac{n}{\epsilon}\right),\]
    then with probability at least $1 - \epsilon$,
    \[[n] \subseteq \{C_1, \cdots, C_{d_1}\}. \]
    This means that for each $k \in [n]$, there exists $i$ such that $C_i = \xi^{(k, 1)}$. In other words, $A$ is diverse. Since we chose $A$ uniformly at random among all binary matrices with step vector rows, it follows that all but a fraction $\epsilon$ of such matrices are diverse.
    
    Now it suffices to show that when $A$ is diverse, every local minimum of $L$ in $\mc{S}^A_X \times \R^{d_1}$ is a global minimum. Note that $F$ is continuously differentiable with respect to $v$ everywhere. We will show that $\nabla_v F(w, b, v, X)$ has rank $n$ for all $(w, b, v) \in \mc{S}^A_X \times \R^{d_1}$. Since $A$ is diverse, there exist $i_1, \cdots, i_n \in [d_1]$ such that for all $k \in [n]$, $A_{i_k, \cdot} = \xi^{(k, 1)}$. Consider the $n \times n$ submatrix $M$ of $\nabla_v F(w, b, v, X)$ generated by the rows $i_1, \cdots, i_n$. That is,
    \begin{align*}
        M_{pq} &= \frac{\partial F}{\partial v^{(i_p)}}(w, b, v, x^{(q)}).
    \end{align*}
    Then
    \begin{align*}
        M_{pq} &= \sigma(w^{(i_p)}x^{(q)} + b^{(i_p)}),
    \end{align*}
    so the entries of $M$ are non-negative, and
    \begin{align*}
        \sgn(M_{pq}) &= \sgn(w^{(i_p)}x^{(q)} + b^{(i_p)})\\
        &= A_{i_p, q}\\
        &= (\xi^{(p, 1)})_q\\
        &= 1_{q \geq p}.
    \end{align*}
    Hence, $M$ is upper triangular with positive entries on its diagonals, implying that $\rank(M) = n$. Since $M$ is a submatrix of $\nabla_v F(w, b, v, X)$, we have $\rank(\nabla_v F(w, b, v, X)) = n$ as well. Now suppose that $(w, b, v) \in \mc{S}^A_X \times \R^{d_1}$ is a local minimum of $L$. Then
    \[\nabla_v L(w, b, v, X) = \nabla_vF(w,b, v, X) \cdot (F(w, b, v, X) - y) = 0.\]
    Since $\nabla_v F(w, b, v, X)$ has rank $n$, this implies that $F(w, b, v, X) = y$, and so $(w, b, v)$ is a global minimizer of $F$. So whenever $A$ is diverse, the region $\mc{S}^A_X \times \R^{d_1}$ has no bad local minima. This concludes the proof.
\end{proof}
  

   \section{Function space on one-dimensional data}
   \label{app:sec:function-space} 
  \newcommand{\F}{\mathcal{F}}
  
  We have studied activation regions in parameter space over which the Jacobian has full rank. 
  We can give a picture of what the function space looks like as follows. 
  We describe the function space of a ReLU over the data and based on this the function space of a network with a hidden layer of ReLUs. 
  
  Consider a single ReLU with one input with bias on $n$ input data points in $\mathbb{R}$. Equivalently, this is as a ReLU with two input dimensions and no bias on $n$ input data points in $1\times\mathbb{R}$. 
  For fixed $X$, denote by $F_X = [f(x^{(1)}),\ldots, f(x^{(n)})]$ the vector of outputs on all input data points, and by $\mathcal{F}_X = \{F_X(\theta) \colon \theta\}$ the set of all such vectors for any choice of the parameters. 
  For a network with a single output coordinate, this is a subset of $\mathbb{R}^X$. 
  As before, without loss of generality we sort the input data as $x^{(1)}<\cdots<x^{(n)}$ (according to the non-constant component). We will use notation of the form $X_{\geq i} = [0,\ldots, 0, x^{(i)},\ldots, x^{(n)}]$. 
  Further, we write $\bar x^{(i)} = [x^{(i)}_2,-1]$, which is a solution of the linear equation $\langle w, x^{(i)}\rangle=0$.  
  Recall that a polyline is a list of points with line segments drawn between consecutive points. 
  
  Since the parametrization map is piecewise linear, the function space is the union of the images of the Jacobian over the parameter regions where it is constant. 
  In the case of a single ReLU one quickly verifies that, for $n=1$, $\F_X = \mathbb{R}_{\geq 0}$, and for $n=2$, $\F_X = \mathbb{R}_{\geq0}^2$. 
  For general $n$, there will be equality and inequality constraints, coming from the bounded rank of the Jacobian and the boundaries of the linear regions in parameter space. 
  %

  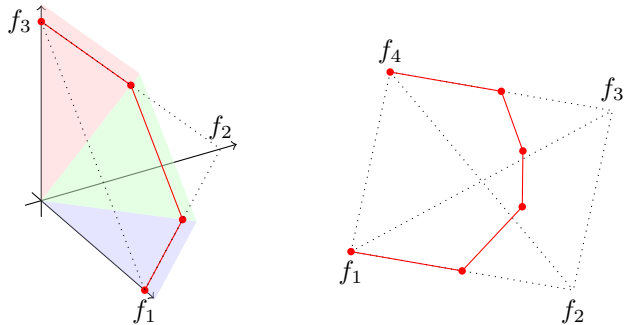
\begin{figure}[h]
  \centering  
  %
  \tdplotsetmaincoords{60}{60}
  \begin{tikzpicture}[tdplot_main_coords, scale = 2.5] 
  \draw[->] (-.1,0,0) -- (1.2,0,0); 
  \draw[->] (0,-.1,0) -- (0,1.2,0); 
  \draw[->] (0,0,-.1) -- (0,0,1.2); 
  \coordinate (O) at (0,0,0); 
  
  \coordinate[, label=below:{$f_1$}] (e1) at (1.1,0,0) {}; 
  \coordinate[, label=above:{$f_2$}] (e2) at (0,1.1,0) {}; 
  \coordinate[, label=left:{$f_3$}] (e3) at (0,0,1.1) {}; 
  
  \coordinate (v1) at (e1) {}; 
  \coordinate (v2) at ($(e1)!0.5!(e2)$) {}; 
  \coordinate (v3) at ($(e2)!0.5!(e3)$) {}; 
  \coordinate (v4) at (e3) {}; 
  
  \fill[blue!20, opacity=.5] (O) -- ([scale=1.1]v1) -- ([scale=1.1]v2) -- (O); 
  \fill[green!20, opacity=.5] (O) -- ([scale=1.1]v2) -- ([scale=1.1]v3) -- (O); 
  \fill[red!20, opacity=.5] (O) -- ([scale=1.1]v3) -- ([scale=1.1]v4) -- (O); 
  
  \node[circle, fill, inner sep=1pt, red] at (v1) {}; 
  \node[circle, fill, inner sep=1pt, red] at (v2) {}; 
  \node[circle, fill, inner sep=1pt, red] at (v3) {}; 
  \node[circle, fill, inner sep=1pt, red] at (v4) {}; 
  \draw[red] (v1) -- (v2) -- (v3) -- (v4); 
  \draw[dotted,] (e1) -- (e2) -- (e3) -- (e1); 
  \end{tikzpicture}
  \qquad 
  \tdplotsetmaincoords{10}{10}
  \begin{tikzpicture}[tdplot_main_coords, scale=1.5]
  \coordinate[label=below:{$f_1$}] (e1) at (-1,-1,1) {}; 
  \coordinate[label=below:{$f_2$}] (e2) at (1,-1,1) {}; 
  \coordinate[label=above:{$f_3$}] (e3) at (1,1,-1) {}; 
  \coordinate[label=above:{$f_4$}] (e4) at (-1,1,-1) {}; 
  \draw[dotted,] (e1) -- (e2) -- (e3) -- (e4) -- (e1) -- (e3) (e4) --(e2); 
  
  \coordinate (v1) at (e1) {}; 
  \coordinate (v2) at ($(e1)!0.5!(e2)$) {}; 
  \coordinate (v3) at ($(v2)!0.4!(e3)$) {}; 
  \coordinate (v6) at (e4) {}; 
  \coordinate (v5) at ($(e4)!0.5!(e3)$) {}; 
  \coordinate (v4) at ($(v5)!0.3!(e2)$) {}; 
  \node[circle, fill, inner sep=1pt, red] at (v1) {}; 
  \node[circle, fill, inner sep=1pt, red] at (v2) {}; 
  \node[circle, fill, inner sep=1pt, red] at (v3) {}; 
  \node[circle, fill, inner sep=1pt, red] at (v4) {}; 
  \node[circle, fill, inner sep=1pt, red] at (v5) {}; 
  \node[circle, fill, inner sep=1pt, red] at (v6) {}; 
  
  \draw[red] (v1) -- (v2) -- (v3) -- (v4) -- (v5) -- (v6); 
  \end{tikzpicture}
  \caption{Function space of a ReLU on $n$ data points in $\mathbb{R}$, for $n=3,4$. The function space is a polyhedral cone in the non-negative orthant of $\mathbb{R}^n$. We can represent this, up to scaling by non-negative factors, by functions $f=(f_1,\ldots, f_n)$ with $f_1+\cdots+f_n=1$. These form a polyline, shown in red, inside the $(n-1)$-simplex. The sum of $m$ ReLUs corresponds to non-negative multiples of convex combinations of any $m$ points in the polyline, and arbitrary linear combinations of $m$ ReLUs correspond to arbitrary scalar multiples of affine combinations of any $m$ points in this polyline.}
  \label{fig:function-space}
  \end{figure}
  
  The function space of a single ReLU on $3$ and $4$ data points is illustrated in Figure~\ref{fig:function-space}.  
  
  For a single ReLU, there are $2n$ non-empty activation regions in parameter space. One of them has Jacobian rank 0 and is mapped to the 0 function, two others have Jacobian rank 1 and are mapped to non-negative scalar multiples of the coordinate vectors $e_1$ and $e_n$, 
  and the other $2n-3$ regions have Jacobian rank 2 and are each mapped to the set of non-negative scalar multiples of a line segment in the polyline. 
  Vertices in the list \eqref{eq:vertices} correspond to the extreme rays of the linear pieces of the function space of the ReLU. They are the extreme rays of the cone of non-negative convex functions on $X$. Here a function on $X$ is convex if $\frac{f(x^{(j+1)})-f(x^{(i)})}{x^{(i+1)}-x^{(i)}}\leq \frac{f(x^{(i+2)})-f(x^{(i+1)})}{x^{(i+2)}-x^{(i+1)}}$. 
  A non-negative sum of ReLUs is always contained in the cone of non-negative convex functions, which over $n$ data points is an $n$-dimensional convex polyhedral cone. 
  For an overparameterized network, if an activation region in parameter space has full rank Jacobian, then that region maps to an $n$-dimensional polyhedral cone. It contains a zero-loss minimizer if and only if the corresponding cone intersects the output data vector $y\in\mathbb{R}^X$. 
  
  

  \begin{proposition}[Function space on one-dimensional data]
  \label{prop:function-space-1d}
  Let $X$ be a list of $n$ distinct points in $1\times \mathbb{R}$ sorted in increasing order with respect to the second coordinate. 
  \begin{itemize}[leftmargin=*]
      \item 
  Then the set of functions a ReLU represents on $X$ is a polyhedral cone consisting of functions $\alpha f$, where $\alpha\geq0$ and $f$ is an element of the polyline with vertices 
  \begin{gather}
  \bar x^{(i)} X_{\leq i} ,\; i=1,\ldots, n \quad \text{and} \quad 
  -\bar x^{(i)} X_{\geq i},\; i=1,\ldots, n. 
  \label{eq:vertices}
  \end{gather}
  \item 
  The set of functions represented by a sum of $m$ ReLUs consists of non-negative scalar multiples of convex combinations of any $m$ points on this polyline. 
  \item 
  The set of functions represented by arbitrary linear combinations of $m$ ReLUs consists of arbitrary scalar multiples of affine combinations of any $m$ points on this polyline. 
  \end{itemize}
  \end{proposition}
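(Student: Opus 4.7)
The plan is to prove the three claims in order, reducing the latter two to the first by linearity. The core of the argument is the explicit description of the single-ReLU function space as the cone over the stated polyline; once that is in hand, the $m$-ReLU statements follow by combining individual contributions.

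For the single-ReLU case, I would view the parameter space as $\mathbb{R}^2$ (since the input lives in $1\times\mathbb{R}$) and study its subdivision into activation cones. Applying Lemma~\ref{lemma:one2one-act-step} with $d_1=1$, the non-empty activation regions are in bijection with the $2n$ step vectors; they are cut out by the $n$ lines $\{\langle w,x^{(i)}\rangle=0\}$, each spanned by $\bar x^{(i)}$. The resulting $2n$ cones have extreme rays in the directions $\pm\bar x^{(i)}$. Since $\bar x^{(i)}=(x^{(i)}_2,-1)$ rotates monotonically counterclockwise as $x^{(i)}_2$ increases, these rays appear around the origin in the cyclic order $\bar x^{(1)},\ldots,\bar x^{(n)},-\bar x^{(1)},\ldots,-\bar x^{(n)}$. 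A short sign check on each side of each ray identifies the step-vector activation pattern of the cone between each pair of consecutive rays, giving an explicit correspondence between cones and step vectors.

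Next, since $F(\cdot,X)$ is linear on each activation cone, the image of a cone in $\mathbb{R}^n$ is the two-dimensional convex cone generated by the images of its two bounding rays. Because $\langle\bar x^{(i)},x^{(k)}\rangle=x^{(i)}_2-x^{(k)}_2$ is nonnegative precisely for $k\leq i$, the ReLU output at $w=\bar x^{(i)}$ equals $\bar x^{(i)}X_{\leq i}$; symmetrically, the output at $w=-\bar x^{(i)}$ equals $-\bar x^{(i)}X_{\geq i}$. Consecutive cones share one extreme ray, so consecutive line segments of the polyline with the vertices listed in \eqref{eq:vertices} correspond exactly to the images of consecutive activation cones. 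Taking the union of the $2n$ two-dimensional image cones yields precisely $\{\alpha f : \alpha\geq 0,\ f\text{ lies on the polyline}\}$, which establishes the first claim.

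For a sum of $m$ ReLUs, each summand lies in the single-ReLU function space just described, so the sum has the form $\sum_{i=1}^m \alpha_i f_i$ with $\alpha_i\geq 0$ and $f_i$ on the polyline; factoring out $\alpha=\sum_i\alpha_i\geq 0$ expresses this as $\alpha$ times a convex combination of the $m$ points $f_i$, and the converse direction is immediate by choosing parameters that realize any prescribed $(\alpha_i,f_i)$. For arbitrary linear combinations of $m$ ReLUs, the same reasoning with scalar output weights $v_i\in\mathbb{R}$ gives $\sum_i v_i\alpha_i f_i$ whose combined coefficients $\beta_i=v_i\alpha_i$ range over all of $\mathbb{R}$, so the sum is an arbitrary real scalar multiple of an affine combination of $m$ polyline points (the degenerate case $\sum_i\beta_i=0$ is handled using that the origin is itself one of the listed polyline vertices). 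The main obstacle I anticipate is the bookkeeping in the single-ReLU step: verifying that the cyclic order of the extreme rays $\pm\bar x^{(i)}$ matches the vertex ordering in \eqref{eq:vertices} so that consecutive cones produce consecutive polyline segments and the cone images glue together correctly along shared rays.
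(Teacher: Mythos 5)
Your proposal follows essentially the same route as the paper's proof. For the single ReLU you subdivide the parameter plane by the lines $\{\langle w,x^{(i)}\rangle=0\}$, identify the $2n$ cones with step-vector activation patterns, and push the bounding rays $\pm\bar x^{(i)}$ through the (locally linear) parametrization to get the polyline vertices; your computation that the output at $w=\bar x^{(i)}$ is $\bar x^{(i)}X_{\leq i}$ and at $w=-\bar x^{(i)}$ is $-\bar x^{(i)}X_{\geq i}$, together with the cyclic ordering of the rays, is exactly the content of the paper's region-by-region table, including the degenerate rank-$0$ and rank-$1$ regions. For $m$ units, your decomposition into per-unit contributions and renormalization of coefficients is the paper's argument $f=\sum_i v_i\|f^{(i)}\|_1\, f^{(i)}/\|f^{(i)}\|_1$, with the convex case when all output weights are non-negative.

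One concrete caveat: your parenthetical fix for the degenerate case $\sum_i\beta_i=0$ does not work as stated. Placing one of the $m$ points at the origin converts ``scalar multiple of an affine combination of $m$ polyline points'' into ``arbitrary linear combination of $m-1$ polyline points,'' which is too small. Already for $m=2$, a function of the form $f_1-f_2$ with $f_1,f_2$ linearly independent points of the polyline is not proportional to any single polyline point: polyline points are entrywise non-negative, while $f_1-f_2$ has mixed signs. Moreover, re-choosing both points within the plane spanned by $f_1,f_2$ cannot help, since the coefficient-sum functional is linear on that plane and vanishes on $f_1-f_2$. What actually rescues the claim is that the polyline contains points of different $\ell_1$-norms, in particular along the two radial end segments through $e_1$ and $e_n$: for instance $e_1-e_n=t^{-1}(t\,e_1)-s^{-1}(s\,e_n)$ with $s\neq t$ both admissible gives a representation with nonzero coefficient sum, and for interior points one re-decomposes across different linear pieces of the polyline in the same spirit. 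To be fair, the paper's own proof also elides this case, treating (after scaling) only functions with $f_1+\cdots+f_n=1$, i.e., nonzero coordinate sum; so your write-up is correct in substance and matches the paper, but this edge case needs the varying-norm argument rather than the origin vertex.
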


  \begin{proof}[Proof of Proposition~\ref{prop:function-space-1d}]
  Consider first the case of a single ReLU. 
  We write $x^{(i)}$ for the input data points in $1\times \mathbb{R}$. 
  The activation regions in parameter space are determined by the arrangement of hyperplanes $H_i = \{w \colon \langle w, x^{(i)}\rangle =0 \}$. 
  Namely, the unit is active on the input data point $x^{(i)}$ if and only if the parameter is contained in the half-space $H_i^+ =\{w \colon \langle w, x^{(i)}\rangle >0 \}$ and it is inactive otherwise. 
  We write $\bar x^{(i)} = [x^{(i)}_2,-1]$, which is a row vector that satisfies $\langle \bar x^{(i)}, x^{(i)}\rangle=0$. 
  We write $\mathbf{1}_S$ for a vector in $\mathbb{R}^{1\times n}$ with ones at the coordinates $S$ and zeros elsewhere, 
  and write $X_{S} = \mathbf{1}_{S} \ast X$ for the matrix in $\mathbb{R}^{d_0\times n}$ where we substitute columns of $X$ whose index is not in $S$ by zero columns. 
  
  With these notations, in the following table we list, for each of the non-empty activation regions, the rank of the Jacobian, the activation pattern, the description of the activation region as an intersection of half-spaces, the extreme rays of the activation region, and the extreme rays of the function space represented by the activation region, which is simply the image of the Jacobian over the activation region. 
  \begin{gather*}
  \begin{matrix}
  \text{rank} & A & \mathcal{S}^A_X & \text{extreme rays of }\mathcal{S}^A_X & \text{extreme rays of } \F^A_X & \\
  \hline 
  \text{0} &\mathbf{1}_\emptyset & H_1^-\cap H_n^- & \bar x^{(1)} , -\bar x^{(n)}& 0 \\ 
  \text{1} & \mathbf{1}_{1} & H_1^+\cap H_2^- & \bar x^{(1)}, \bar x^{(2)}& e_1\\ 
  \text{2} & \mathbf{1}_{\leq i} & H_i^+\cap H_{i+1}^- & \bar x^{(i)}, \bar x^{(i+1)} & \bar x^{(i)} X_{\leq i-1}, \bar x^{(i+1)} X_{\leq i} & (i=2,\ldots, n-1)\\ 
  \text{2} & \mathbf{1}_{[n]} & H_n^+\cap H_1^+ & \bar x^{(n)}, -\bar x^{(1)}& \bar x^{(n)} X_{\leq n-1}, -\bar x^{(1)}X_{\geq 2}\\ 
  \text{2} & \mathbf{1}_{\geq i} &  H_{i-1}^-\cap H_i^+   &-\bar x^{(i)},-\bar x^{(i-1)} & -\bar x^{(i)} X_{\geq i+1}, -\bar x^{(i-1)}X_{\geq i} & (i= 2,\ldots, n-1)\\ 
  1 & \mathbf{1}_{n} &H_{n-1}^-\cap H_n^+   & -\bar x^{(n)},-\bar x^{(n-1)}& e_n 
  \end{matrix}
  \end{gather*}
  The situation is illustrated in Figure~\ref{fig:par-funct} (see also Figure~\ref{fig:function-space}). 
  On one of the parameter regions the unit is inactive on all data points so that the Jacobian has rank 0 and maps to 0. There are precisely two parameter regions where the unit is active on just one data point, $x^{(1)}$ or $x^{(n)}$, so that the Jacobian has rank 1 and maps to non-negative multiples of $e_1$ and $e_n$, respectively. 
  On all the other parameter regions the unit is active at least on two data points. 
  On those data points where the unit is active it can adjust the slope and intercept by local changes of the bias and weight and these are all the available degrees of freedom, so that the Jacobian has rank 2. These regions map to two-dimensional cones in function space. 
  To obtain the extreme rays of these cones, we just evaluate the Jacobian on the two extreme rays of the activation region. 
  This gives us item 1 in the proposition.

  \begin{figure}
  \begin{center}
  \begin{tabular}{m{8cm}m{4cm}}
  \begin{tikzpicture}[scale=2]
  \node at (0,1.5) {Parameter space};
  \draw[->,very thin]  (-1.75,0) -- (1.75,0) node[sloped,right=-0.1cm] {$w_2$}; 
  \draw[->,very thin]  (0,-1) -- (0,1.2) node[sloped,above=-0.1cm] {$w_1$}; 
  \draw[-,dotted]  (-1.75,1) -- (1.75,1); 
  \coordinate[label=above:{\textcolor{black}{$x^{(1)}$}}] (x1) at (-1,1) {}; 
  \node[circle, fill, inner sep=1pt, black] at (x1) {}; 
  \coordinate[label=above:{\textcolor{black}{$x^{(2)}$}}] (x2) at (.5,1) {}; 
  \node[circle, fill, inner sep=1pt, black] at (x2) {}; 
  \coordinate[] (bx1) at (-1,-1) {}; 
  \coordinate[] (mbx2) at (1,-.5) {}; 
  \coordinate[] (bx2) at (-1,.5) {}; 
  \coordinate[] (mbx1) at (1,1) {}; 
  \fill[blue!20, opacity=.5] (O) -- ([scale=1.1]bx1) -- ([scale=1.1]mbx2) -- (O); 
  \fill[green!20, opacity=.5] (O) -- ([scale=1.1]bx1) -- ([scale=1.1]bx2) -- (O); 
  \fill[red!20, opacity=.5] (O) -- ([scale=1.1]mbx2) -- ([scale=1.1]mbx1) -- (O); 
  \fill[orange!20, opacity=.5] (O) -- ([scale=1.1]bx2) -- ([scale=1.1]mbx1) -- (O); 
  \coordinate[label=below:{$\bar x^{(1)}$}] (bx1) at (-1,-1) {}; 
  \coordinate[label=below:{$-\bar x^{(2)}$}] (mbx2) at (1,-.5) {}; 
  \coordinate[label=above:{$\bar x^{(2)}$}] (bx2) at (-1,.5) {}; 
  \coordinate[label=above:{$-\bar x^{(1)}$}] (mbx1) at (1,1) {}; 
  \draw[->] (0,0) -- (bx1);
  \draw[->] (0,0) -- (mbx2);
  \draw[->] (0,0) -- (bx2);
  \draw[->] (0,0) -- (mbx1);
  \node[orange!90!black] at (0,.5) {$H_1^+\cap H_2^+$}; 
  \node[green!60!black] at (-1,-.125) {$H_1^+\cap H_2^-$}; 
  \node[red!90!black] at (1,.125) {$H_1^- \cap H_2^+$}; 
  \node[blue] at (0,-.5) {$H_1^-\cap H_2^-$}; 
  \end{tikzpicture}
  &
  \begin{tikzpicture}[scale=2.5]
  \node at (0,1.5) {Function space};
  \draw[->]  (-.1,0) -- (1.1,0) node[sloped,right=-0.1cm] {$f_1$}; 
  \draw[->]  (0,-.1) -- (0,1.1) node[sloped,above=-0.1cm] {$f_2$}; 
  \coordinate[] (e1) at (1,0) {}; 
  \coordinate[] (e2) at (0,1) {}; 
  \fill[orange!20, opacity=.5] (O) -- (e1) -- (e2) -- (O); 
  \draw[-, very thick, green!60!black]  (0,0) -- (e1) node[sloped,midway,below=0cm] {$\F^{10}$}; 
  \draw[-, very thick, red!90!black]  (0,0) -- (e2) node[sloped,midway,above=-0cm] {$\F^{01}$}; 
  \node[orange!90!black] at (.6,.6) {$\F^{11}$}; 
  \node[circle, fill, inner sep=1.3pt, blue] at (0,0) {}; 
  \node[blue] at (-.125,-.125) {$\F^{00}$};
  \end{tikzpicture}
  \end{tabular}
  \end{center}
  \caption{Subdivision of the parameter space of a single ReLU on two data points $x^{(1)}, x^{(2)}$ in $1\times \mathbb{R}^1$ by values of the Jacobian (left) and corresponding pieces of the function space in $\mathbb{R}^2$ (right). 
  The activation regions are intersections of half-spaces with activation patterns indicating the positive ones or, equivalently, the indices of data points where the unit is active. 
  } 
  \label{fig:par-funct}
  \end{figure}
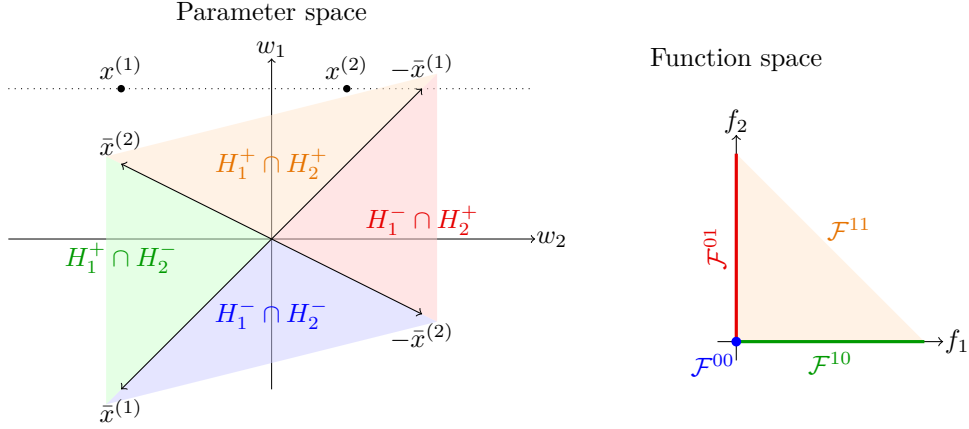
  
  Consider now $d_1$ ReLUs. 
  Recall that the Minkowski sum of two sets $M,N$ in a vector space is defined as $M+N = \{f+g \colon f\in M, g\in N\}$. 
  An activation region $\S_X^A$ with activation pattern $A$ for all units corresponds to picking one region with pattern $a^{(i)}$ for each of the units, $i=1,\ldots, d_1$. 
  Since the parametrization map is linear on the activation region, the overall computed function is simply the sum of the functions computed by each of the units, 
  $$
  F(W,v,X) = \sum_{i\in d_1} v_i F(w^{(i)},X). 
  $$
  Here $F(W,v,X) = \sum_i v_i \sigma(W X)$ is the overall function and $F(w^{(i)},X) = \sigma( w^{(i)} X)$ is the function computed by the $i$th unit. 
  The parameters and activation regions of all units are independent of each other. Thus 
  $$
  \mathcal{F}_X^A = \sum_{i\in[d_1]} v_i \mathcal{F}_X^{a^{(i)}}. 
  $$
  Here we write $\mathcal{F}_X^{a^{(i)}} = \{ (a^{(i)}_j  w^{(i)} x^{(j)} )_j \in\mathbb{R}^X \colon w^{(i)} \in \mathcal{S}_X^{a^{(i)}} \}$ for the function space of the $i$th unit over its activation region $\mathcal{S}_X^{a^{(i)}}$. This is a cone and thus it is closed under nonegative scaling, 
  $$
  \mathcal{F}_X^{a^{(i)}} 
  = \alpha_i \mathcal{F}_X^{a^{(i)}} \quad \text{for all }\alpha_i\geq 0. 
  $$ 
  Thus, for an arbitrary linear combination of $d_1$ ReLUs we have 
  \begin{equation*}
  f  
  = \sum_{i\in[d_1]} v_i f^{(i)} = \sum_{i\in[d_1]\colon f^{(i)}\neq 0} v_i\|f^{(i)}\|_1 \frac{f^{(i)}}{\|f^{(i)}\|_1} .  
  \end{equation*} 
  Here $f^{(i)}$ is an arbitrary function represented by the $i$th unit. 
  We have $\sum_j f^{(i)}_j = \|f^{(i)}\|_1$ and $\sum_j f_j = \sum_i v_i \|f^{(i)}\|_1$. 
  Thus if $f$ satisfies $f_1+\cdots +f_n = 1$, then $\sum_i v_i\|f^{(i)}\| =1$, and hence $f$ is an affine combination of the functions $\frac{f^{(i)}}{\|f^{(i)}\|}$. 
  If all $v_i$ are non-negative, then $v_i\|f^{(i)}\|\geq 0$ and the affine combination is a convex combination. 
  Each of the summands is an element of the function space of a single ReLU with entries adding to one. 
  
  In conclusion, the function space of a network with one hidden layer of $d_1$ ReLUs with non-negative output weights is the set of non-negative scalar multiples of functions in the convex hull of any $d_1$ functions in the normalized function space of a single ReLU. 
  For a network with arbitrary output weights we obtain arbitrary scalar multiples of the affine hulls of any $d_1$ functions in the normalized function space of a single ReLU. 
  This is what was claimed in items 2 and 3, respectively. 
  \end{proof}

\section{Details on loss landscapes of deep networks}
\label{app:deep-case}
We provide details and proofs of the results in Section \ref{sec:deep-case}.
  We say that a binary matrix $B$ is \emph{non-repeating} if its columns are distinct.
  \begin{proposition}\label{prop:deep-case-landscape}
      Let $X \in \R^{d_0 \times n}$ be an input dataset with distinct points. Suppose that $A$ is an activation pattern such that $A_{L - 1}$ has rank $n$, and such that $A_l$ is non-repeating for all $l \in [L - 2]$.  Then for all $W \in \mc{S}^A_X$, $\nabla_W F(W, X)$ has rank $n$.
  \end{proposition}
  \begin{proof}
      Suppose that $A$ is an activation pattern satisfying the stated properties. We claim that for all $W \in \mc{S}^A_X$, $l \in \{0, \cdots, L - 2\}$, and $j,k \in [n]$ with $j \neq k$, that $f_l(W, x^{(j)}) \neq f_l(W, x^{(k)})$. We prove by induction on $l$. The base case $l = 0$ holds by assumption. Suppose that the claim holds for some $l \in \{0, \cdots, L - 3\}$. By assumption, $A_{l + 1}$ is non-repeating, so the columns $(A_{l+1})_{\cdot, j}$ and $(A_{l + 1})_{\cdot, k}$ are not equal. Let $i \in [d_{l + 1}]$ be such that $(A_{l + 1})_{ij} \neq (A_{l + 1})_{ik}$. Then, since $W \in \mc{S}^A_X$,
      \begin{align*}
          \sgn(\langle w_{l + 1}^{(i)}, f_l(W, x^{(j)}) \rangle) \neq \sgn(\langle w_{l + 1}^{(i)}, f_l(W, x^{(k)}) \rangle).
      \end{align*}
      This implies that
      \begin{align*}
          \sigma(\langle w_{l + 1}^{(i)}, f_l(W, x^{(j)}) \rangle) \neq \sigma(\langle w_{l + 1}^{(i)}, f_l(W, x^{(k)}) \rangle),
      \end{align*}
      or in other words
      \begin{align*}
          (f_{l + 1}(W, x^{(j)}))_i \neq (f_{l + 1}(W, x^{(k)}))_i.
      \end{align*}
      So $f_{l + 1}(W, x^{(j)}) \neq f_{l + 1}(W, x^{(k)})$, proving the claim by induction.

      Now we consider the gradient of $F$ with respect to the $(L - 1)$-th layer. Let $\tilde{X} \in \R^{d_{L - 2} \times n}$ be defined by $\tilde{X} := f_{L - 2}(W, X)$, and for $j \in [n]$ let $\tilde{x}^{(j)}$ denote the $j$-th column of $\tilde{X}$. Let $a^{(1)}, \cdots, a^{(n)}$ denote the rows of $A_{L - 1}$. Then for all $W \in \mc{S}^A_X$,
      \[\nabla_{W_{L - 1}}F(W, X) = ((v \odot a^{(1)}) \otimes \tilde{x}^{(1)}, \cdots, (v \odot a^{(n)}) \otimes \tilde{x}^{(n)}). \]
      By Lemma \ref{cancel-v}, the rank of this matrix is equal to the rank of the matrix
      \[(a^{(1)} \otimes \tilde{x}^{(1)}, \cdots, a^{(n)} \otimes \tilde{x}^{(n)}). \]
      But $A_{L - 1}$ has rank $n$ by assumption, so the set $a^{(1)}, \cdots, a^{(n)}$ is linearly independent, implying that the above matrix is full rank. Hence, $\nabla_{W_{L - 1}}F(W, X)$ has full rank, and so $\nabla_W F(W, X)$ has full rank.
      \end{proof}
      Now we count the number of activation patterns which satisfy the assumptions of Proposition \ref{prop:deep-case-landscape} and hence correspond to regions with full rank Jacobian.
      \begin{lemma}\label{lemma:non-repeating-prob}
          Suppose that $B \in \R^{d \times n}$ has entries chosen iid uniformly from $\{0, 1\}$. If
          \[d = \Omega\left(\log \frac{n}{\delta}\right), \]
          then with probability at least $1 - \delta$,
          $B$ is non-repeating.
      \end{lemma}
      \begin{proof}
          For any $j, k \in [n]$ with $j \neq k$,
          \begin{align*}
              \P(B_{\cdot, j} = B_{\cdot, k}) &= \P(B_{ij} = B_{ik} \text{ for all } i \in [d])\\
              &= 2^{-d}.
          \end{align*}
          So
          \begin{align*}
              \P(B \text{ is non-repeating}) &= \P(B_{\cdot, j} \neq B_{\cdot, k} \text{ for all $j, k \in [n]$ with $j \neq k$})\\
              &\geq 1 - \sum_{\substack{j, k \in [n]\\j\neq k}} \P(B_{\cdot, j} = B_{\cdot, k})\\
              &\geq 1 - n^2 2^{-d}.
          \end{align*}
          If 
          \[d \geq \frac{2 \log n + \log \frac{1}{\delta}}{\log 2}, \]
          then the above expression is at least $1 - \delta$.
      \end{proof}
      \begin{lemma}\label{lemma:nearly-square-binary-matrix}
          Suppose that $B \in \R^{d \times n}$ has entries chosen iid uniformly from $\{0, 1\}$. If
          \[d = n + \Omega\left(\log \frac{1}{\delta}\right), \]
          then with probability at least $1 - \delta$, $B$ has rank $n$.
      \end{lemma}
      \begin{proof}
          Suppose that $d \geq n$. Let $B'$ be a $d \times d$ matrix selected uniformly at random from $\{0, 1\}^{d \times d}$, and let $B$ be the top $d \times n$ minor of $B'$. Note that $B$ has entries chosen iid uniformly from $\{0, 1\}$. Moreover, $B$ has rank $n$ whenever $B'$ is invertible. 
          Moreover, by Theorem \ref{binarymatrix}, $B'$ will be singular with probability at most $C(0.72)^d$, where $C \geq 1$ is a universal constant. Then
          \begin{align*}
              \P(\rank(B) = n) &\geq \P(B' \text{ is invertible})\\
              &\geq 1 - C(0.72)^d.
          \end{align*}
          Setting
          \[d \geq n + \frac{\log \frac{C}{\delta}}{\log \frac{1}{0.72}}, \]
          we get that the above expression is at least
          \begin{align*}
              1 - C(0.72)^{\log(C/\delta) / \log(1/0.72)} &= 1 - \delta.
          \end{align*}
          Hence, if $d = n + \Omega(\log \frac{1}{\delta})$, then $B$ has rank $n$ with probability at least $1 - \delta$.
      \end{proof}
      \begin{proof}[Proof of Theorem \ref{thm:loss-landscape-deep}]
          By Proposition \ref{prop:deep-case-landscape}, it suffices to count the fraction of activation patterns $A$ such that $A_l$ is non-repeating for $l \in [L - 2]$ and $A_{L - 1}$ has rank $n$. Let $A$ be an activation pattern whose entries are chosen iid uniformly from $\{0, 1\}$. Fix $l \in [L - 2]$. Since $d_l = \Omega(\log(\frac{n}{\epsilon L}))$, by Lemma \ref{lemma:non-repeating-prob} we have with probability at least $1 - \frac{\epsilon}{2L}$ that $A_l$ is non-repeating. Hence, with probability at least $1 - \frac{\epsilon}{2}$, all of the $A_l$ for $l \in [L - 2]$ are non-repeating. Since $d_{L - 1} = n + \Omega( \log \frac{1}{\epsilon})$, by Lemma \ref{lemma:nearly-square-binary-matrix} we have with probability at least $1 - \frac{\epsilon}{2}$ that $A_{L - 1}$ has rank $n$. Putting everything together, we have with probability at least $1 - \epsilon$ that $A_l$ is non-repeating for $l \in [L - 2]$ and $A_{L - 1}$ has rank $n$. So with this probability, $\nabla_W F(W, X)$ has rank $n$. Since we generated the activation pattern uniformly at random from all activation regions, the fraction of patterns $A$ with full rank Jacobian is at least $1 - \epsilon$.
      
  \end{proof}
  \section{Details on the volume of activation regions}
\label{app:volume-of-regions}
We provide details and proofs of the statements in Section \ref{sec:volume}.
\subsection{One-dimensional input data}
\begin{proof}[Proof of Proposition \ref{prop:neuron-volume}]
    Let us define $\tilde{\psi}: [-\infty, \infty] \to [0, 2]$ by
    \[\tilde{\psi}(x) := \mu\left(\left\{(w, b) \in (0, 1] \times [-1, 1]: \frac{b}{w} \leq x \right\}\right). \]
    For $x \in \R$,
    \begin{align*}
        \tilde{\psi}(x) &= \int_{0}^1 \int_{-1}^1 1_{b \leq wx}dbdw\\
        &= \int_0^1 (1_{w \leq 1/|x|}(1 + wx) + 1_{wx \geq 1}(2))dw\\
        &= \begin{cases}
            -\frac{1}{2x} & \text{if } x \leq -1\\
            1 + \frac{x}{2} & \text{if } -1 < x \leq 1\\
            2 - \frac{1}{2x} & \text{if } x \geq 1
        \end{cases}\\
        &= \psi(x).
    \end{align*}
    Moreover, $\tilde{\psi}(\infty) = 2 = \psi(\infty)$ and $\tilde{\psi}(-\infty) = 0 = \psi(\infty)$. So $\psi = \tilde{\psi}$. For all $k \in [n + 1]$, a neuron $(w, b)$ has activation pattern $\xi^{(k, 0)}$ if and only if $wx^{(k - 1)} + b > 0$ and $wx^{(k)} + b < 0$. So
    \begin{align*}
        \mu(\mc{N}_{k, 0} \cap ([-1, 1]\times [-1, 1])) &= \mu\left(\left\{(w, b) \in [-1, 1] \times [-1, 1]: wx^{(k - 1)} + b < 0, wx^{(k)} + b > 0  \right\}\right)\\
        &= \mu\left(\left\{(w, b) \in [0, 1] \times [-1, 1]: wx^{(k - 1)} + b < 0, wx^{(k)} + b > 0\right\}\right)\\
        &= \mu\left(\left\{(w, b) \in [0, 1] \times [-1, 1]: wx^{(k - 1)} - b < 0, wx^{(k)} - b > 0\right\}\right)\\
        &= \mu\left(\left\{(w, b) \in [0, 1] \times [-1, 1]: x^{(k - 1)} < \frac{b}{w} < x^{(k)}\right\}\right)\\
        &= \psi(x^{(k)}) - \psi(x^{(k - 1)}).
    \end{align*}
    Similarly, a neuron $(w, b)$ has activation pattern $\xi^{(k, 1)}$ if and only if $wx^{(k - 1)} + b < 0$ and $wx^{(k)} + b > 0$. So
    \begin{align*}
        \mu(\mc{N}_{k, 1} \cap ([-1, 1] \times [-1, 1])) &= \mu\left(\left\{(w, b) \in [-1, 1] \times [-1, 1]: wx^{(k - 1)} + b > 0, wx^{(k)} + b < 0  \right\}\right)\\
        &= \mu\left(\left\{(w, b) \in [-1, 0] \times [-1, 1]: wx^{(k - 1)} + b > 0, wx^{(k)} + b < 0  \right\}\right)\\
        &= \mu\left(\left\{(w, b) \in [0, 1] \times [-1, 1]: -wx^{(k - 1)} + b > 0, -wx^{(k)} + b < 0  \right\}\right)\\
        &= \mu\left(\left\{(w, b) \in [0, 1] \times [-1, 1]: x^{(k - 1)} < \frac{b}{w} < x^{(k)}  \right\}\right)\\
        &= \psi(x^{(k)}) - \psi(x^{(k - 1)}).
    \end{align*}
    This establishes the result.
\end{proof}
If the amount of separation between data points is large enough, the volume of the activation regions separating them should be large. The following proposition formalizes this.
\begin{proposition}\label{prop:1d-volume-bound}
    Let $n \geq 2$. Suppose that for all $j, k \in [n]$ with $j \neq k$, we have $|x^{(j)}| \leq 1$ and $|x^{(j)} - x^{(k)}| \geq \phi$. Then for all $k \in [n + 1]$ and $\beta \in \{0, 1\}$,
    \[
    \mu(\mc{N}_{k, \beta} \cap ([-1, 1] \times [-1, 1])) \geq \frac{\phi}{4} . 
    \] 
    Moreover, for all $A \in \{0, 1\}^{d_1 \times n}$ whose rows are step vectors,
    \[\mu(\mc{S}^A_X \cap ([-1, 1]^{d_1} \times [-1, 1]^{d_1})) \geq \left(\frac{\phi}{4}\right)^{d_1}. \]
\end{proposition}
\begin{proof}
    Since $n \geq 2$, $|x_1|, |x_2| \leq 1$, and $|x_1 - x_2| \geq \phi$, we have
    \begin{align*}
        \phi &\leq |x_1 - x_2|\\
        &\leq |x_1| + |x_2|\\
        &\leq 2.
    \end{align*}
    Let $\psi$, $x^{(0)}, x^{(n + 1)}$ be defined as in Proposition \ref{prop:neuron-volume}. Fix $\beta \in \{0, 1\}$. If $k \in \{2, 3, \cdots, n\}$, then by Proposition \ref{prop:neuron-volume} and the assumption that $|x^{(j)}| \leq 1$ for $j \in [n]$,
    \begin{align*}
        \mu(\mc{N}_{k, \beta} \cap ([-1, 1] \times [-1, 1])) &= \psi(x^{(k)}) - \psi(x^{(k - 1)})\\
        &= \frac{x^{(k)} - x^{(k - 1)}}{2}\\
        &\geq \frac{\phi}{2}\\
        &\geq \frac{\phi}{4}.
    \end{align*}
    If $k = 1$, then
    \begin{align*}
       \mu(\mc{N}_{k, \beta} \cap ([-1, 1] \times [-1, 1])) &= \psi(x^{(1)}) - \psi(x^{(0)})\\
        &= 1 + \frac{x^{(1)}}{2}\\
        &\geq \frac{1}{2}\\
        &\geq \frac{\phi}{4}.
    \end{align*}
    If $k = n + 1$, then
    \begin{align*}
        \mu(\mc{N}_{k, \beta} \cap ([-1, 1] \times [-1, 1])) &= \psi(x^{(n + 1)}) - \psi(x^{(n)})\\
        &= 1 - \frac{x^{(n)}}{2}\\
        &\geq \frac{1}{2}\\
        &\geq \frac{\phi}{4}.
    \end{align*}
    Hence, for all $k \in [n + 1]$ and $\beta \in \{0, 1\}$, $\mu(\mc{N}_{k, \beta}) \geq \frac{\phi}{4}$.

    The rows of $A$ are step vectors, so for each $i \in [d_1]$, there exists $k_i \in [n + 1]$ and $\beta_i \in \{0, 1\}$ such that
    \[A_{i, \cdot} = \xi^{(k_i, \beta_i)}. \]
    Then
    \begin{align*}
        &\mu(\mc{S}^A_X \cap ([-1, 1]^{d_1} \times [-1, 1]^{d_1})) \\&= \mu(\{(w, b) \in [-1, 1]^{d_1} \times [-1, 1]^{d_1}: (2A_{ij} - 1)(w^{(i)}x^{(j)} + b^{(i)}) > 0 \text{ for all } i \in [d_1], j \in [n]\})\\
        &= \prod_{i = 1}^{d_1} \mu(\{(w, b) \in [-1, 1] \times [-1, 1]: (2A_{ij} - 1)(wx^{(j)} + b) > 0 \text{ for all } j \in [n]\})\\
        &= \prod_{i = 1}^{d_1} \mu(\{(w, b) \in [-1, 1] \times [-1, 1]: (2\xi^{(k_i, \beta_i)}_j - 1)(wx^{(j)} + b) > 0 \text{ for all } j \in [n]\})\\
        &= \prod_{i = 1}^{d_1} \mu(\mc{N}_{k_i, \beta_i})\\
        &\geq \prod_{i = 1}^{d_1} \frac{\phi}{4}\\
        &= \left(\frac{\phi}{4}\right)^{d_1}.
    \end{align*}
\end{proof}
\begin{proof}[Proof of Proposition \ref{prop:volume-full-rank-regions}]
    We use a probabilistic argument similar to the proof of Theorem \ref{one-dimension-no-bad-minima}. Let us choose a parameter initialization $(w, b) \in [-1, 1]^{d_1} \times [-1, 1]^{d_1}$ uniformly at random. Then the $i$-th row $A_{i, \cdot}$ of the activation matrix is a random step vector. By Proposition \ref{prop:1d-volume-bound}, for all $k \in [n + 1]$ and $\beta \in \{0, 1\}$,
    \begin{align}\P(A_{i, \cdot} = \xi^{(k, \beta)}) \geq \frac{\phi}{4}.\label{align:1d-probability-bound} \end{align}
    By the proof of Theorem \ref{one-dimension-no-bad-minima}, the Jacobian of $\mc{S}^A_X$ is full rank when $A$ is a diverse matrix. That is, for all $k \in [n]$, there exists $i \in [d_1]$ such that $A_{i, \cdot} \in \{\xi^{(k, 0)}, \xi^{(k, 1)}\}$, and there exists $i \in [d_1]$ such that $A_{i, \cdot} = \xi^{(1, 1)}$. For $i \in [d_1]$, let $C_i \in [n + 1]$ be defined as follows. If $A_{i, \cdot} \in \{\xi^{(k, 0)}, \xi^{(k, 1)}\}$ for some $k \in \{2, 3, \cdots, n\}$, then we define $C_i = k$. If $A_{i, \cdot} = \xi^{(1, 1)}$, then we define $C_i = 1$. Otherwise, we define $C_i = n + 1$. By definition, $A$ is diverse if and only if
    \[
    [n] \subseteq \{C_i: i \in [d_1]\}.
    \]
    By (\ref{align:1d-probability-bound}), if $d_1 \geq \frac{4}{\phi}\log(\frac{n}{\epsilon})$, then for all $j \in [n]$,
    \begin{align*}
        \P(C_i = j) \geq \frac{\phi}{4}.
    \end{align*}
    Thus, by Lemma \ref{coupon-collector},
    \begin{align*}
        \P(A \text{ is diverse}) &= \P(n \subseteq \{C_i: i \in [d_1]\})\\
        &\geq 1 - \epsilon.
    \end{align*}
    This holds for $(w, b)$ selected uniformly from $[-1, 1]^{d_1} \times [-1, 1]^{d_1}$. Hence, the volume of the union of regions with full rank Jacobian is at least
    \begin{align*}
        (1 - \epsilon)\mu([-1, 1]^{d_1} \times [-1, 1]^{d_1}) &= (1 - \epsilon)2^{2d_1}.
    \end{align*}
\end{proof}

\subsection{Arbitrary dimension input data} 
 Suppose that the entries of  $W$ and $v$ are sampled iid from the standard normal distribution $\mc{N}(0, 1)$. We wish to show that the Jacobian of the network will be full rank with high probability. Our strategy will be to think of a random process which successively adds neurons to the network, and we will bound the amount of time necessary until the network has full rank with high probability.


\begin{definition}
For $\gamma \in (0, 1)$, we say that a distribution $\mc{D}$ on $\{0, 1\}^n$ is \emph{$\gamma$-anticoncentrated} if for all nonzero $u \in \R^n$,
\[
\P_{a \sim \mc{D}}(u^Ta = 0) \leq 1- \gamma. 
\]
\end{definition}

    \begin{lemma}
        Let $\gamma, \epsilon \in (0, 1)$. Suppose that $A \in \{0, 1\}^{d \times n}$ is a random matrix whose rows are selected iid from a distribution $\mc{D}$ on $\{0, 1\}^n$ which is $\gamma$-anticoncentrated.
        If
        \[d \geq \frac{8 \log(\epsilon^{-1})}{\gamma^2} + \frac{2n}{\gamma}, \]
        then $A$ has rank $n$ with probability at least $1 - \epsilon$.
    \end{lemma}
    \begin{proof}
        Suppose that $a^{(1)}, a^{(2)}, \cdots \in \{0, 1\}^n$ are selected iid from $\mc{D}$. Define the filtration $(\mc{F}_t)_{t \in \mathbb{N}}$ by letting $\mc{F}_t$ be the $\sigma$-algebra generated by $a^{(1)}, \cdots, a^{(t)}$. For $t \in \mathbb{N}$, let $D_t$ denote the dimension of the vector space spanned by $a^{(1)}, \cdots, a^{(t)}$, and let \[R_t := D_t - \gamma t.\] Let $\omega \in \mc{F}_t$ be an event in which $a^{(1)}, \cdots, a^{(t)}$ do not span $\R^n$. Then there exists $u(\omega) \in \R^n$ nonzero such that $u^Ta^{(s)} = 0$ for all $s \leq t$. Then
        \begin{align*}
            \P(D_{t + 1} - D_t = 1 \mid \mc{F}_t)(\omega)
            &= \P(a^{(t + 1)} \notin \text{span}(a^{(1)}, \cdots, a^{(t)}) \mid \mc{F}_t)(\omega)
            \\&\geq \P(u^T a^{(t + 1)} \neq 0 \mid \mc{F}_t)(\omega)\\
            &= \P(u^Ta^{(t + 1)} \neq 0)\\
            &\geq \gamma,
        \end{align*}
        where the third line from the independence of the $a^{(s)}$. Hence, for all $t \in \mathbb{N}$,
        \begin{align}\label{martingale-eqn}
            \E[1_{D_t \neq n}(D_{t + 1} - D_t) \mid \mc{F}_t] &= 1_{D_t \neq n}\E[D_{t + 1} - D_t \mid \mc{F}_t]\notag\\
            &= 1_{D_t \neq n}\P(D_{t + 1} - D_t = 1 \mid \mc{F}_t)\\
            &\geq 1_{D_t \neq n}\gamma \notag
        \end{align}
        Let $\tau \in \mathbb{N}$ be a stopping time with respect to $(\mc{F}_t)_{t \in \mathbb{N}}$ defined by
        \[\tau := \min (\{\infty\} \cup \{t \in \mathbb{N}: D_t = n  \}). \]
        We also define the sequence $(M_t)_{t \in \mathbb{N}}$ by $M_t := R_{\min(t, \tau)}$. Then for all $t \in \mathbb{N}$,
        \begin{align*}
            \E[M_{t + 1} \mid \mc{F}_t] &= \E[R_{\min(t + 1, \tau)} \mid \mc{F}_t]\\
            &= \E[1_{\tau \leq t} R_{\tau} + 1_{\tau > t}R_{t+1} \mid \mc{F}_t]\\
            &= 1_{\tau \leq t}R_{\tau} + \E[1_{\tau > t}R_{t + 1} \mid \mc{F}_t]\\
            &= 1_{\tau \leq t}R_{\tau} + \E\left[1_{D_t \neq n}\left(D_{t + 1} - \gamma(t + 1) \right) \middle| \mc{F}_t \right]\\
            &= 1_{\tau \leq t}R_{\tau} - 1_{D_t \neq n}\gamma(t + 1) + \E[1_{D_t \neq n}D_{t + 1} \mid \mc{F}_t]\\
            &\geq  1_{\tau \leq t}R_{\tau} - 1_{D_t \neq n}\gamma (t + 1) + 1_{D_t \neq n}\gamma + \E[1_{D_t \neq n}D_{t} \mid \mc{F}_t]\\
            &= 1_{\tau \leq t}R_{\tau} - 1_{D_t \neq n}\gamma t + 1_{D_t \neq n}D_t\\
            &= 1_{\tau \leq t}R_{\tau} + 1_{D_t \neq n}R_t\\
            &= 1_{\tau \leq t}R_{\tau} + 1_{\tau > t}R_t\\
            &= R_{\min(t, \tau)}\\
            &= M_t,
        \end{align*}
        where we used (\ref{martingale-eqn}) in the sixth line with properties of the conditional expectation. Moreover, we have $\E|M_t| < \infty$ for all $t \in \mathbb{N}$. Hence, the sequence $(M_t)_{t \in \mathbb{N}}$ is a submartinagle with respect to the filtration $(\mc{F}_t)_{t \in \mathbb{N}}$. We also have $|M_{t + 1} - M_t| \leq 1$
        for all $t \in \mathbb{N}$. By Azuma-Hoeffding, for all $\delta > 0$ and $t \in \mathbb{N}$,
        \begin{align*}
            \P(M_t \leq - \delta) &\leq \P(M_t - M_1 \leq -\delta) \\&\leq \exp\left(-\frac{\delta^2}{2(t - 1)} \right)\\
            &\leq \exp\left(-\frac{\delta^2}{2t}\right).
        \end{align*}
        So for any $\epsilon \in (0, 1)$,
        \[\P(M_t \leq -\sqrt{2 t \log(\epsilon^{-1})}) \leq \epsilon. \]
        We also have
        \begin{align*}
            \P(D_t \leq n - 1) &= \P(D_t \leq n - 1, \tau > t)\\
            &= \P(R_t \leq n - 1 - \gamma t, \tau > t)\\
            &= \P(M_t \leq n - 1 - \gamma t, \tau > t)\\
            &\leq \P(M_t \leq n - 1 - \gamma t)\\
            &\leq \P(M_t \leq n - \gamma t)
        \end{align*}
        So we have $D_t \leq n - 1$ with probability at most $\epsilon$ when
        \begin{align*}
            n - \gamma t \leq -\sqrt{2t \log(\epsilon^{-1})}.
        \end{align*}
        If
        \[t \geq \frac{8}{\gamma^2}\log(\epsilon^{-1}) + \frac{2n}{\gamma}, \]
        then
        \begin{align*}
            \gamma t - \sqrt{2t \log(\epsilon^{-1})} &= \frac{\gamma }{2}t + \frac{\gamma \sqrt{t}}{2}\sqrt{t} - \sqrt{2 t \log(\epsilon^{-1})}\\
            &\geq \frac{\gamma}{2}\left(\frac{2n}{\gamma}\right) + \frac{\gamma \sqrt{t}}{2}\sqrt{\frac{8}{\gamma^2}\log(\epsilon^{-1})} - \sqrt{2t \log(\epsilon^{-1})}\\
            &= n.
        \end{align*}
        Hence, for such values of $t$, we have $D_t \geq n$ with probability at least $1 - \epsilon$. In other words, with probability at least $1 - \epsilon$, the vectors $a^{(1)}, \cdots, a^{(t)}$ span $\R^n$. We can rephrase this as follows. If $A \in \{0, 1\}^{d \times n}$ is a random matrix whose columns are selected iid from $\mc{D}$ and
        \[d \geq \frac{8}{\gamma^2}\log(\epsilon^{-1}) + \frac{2n}{\gamma}, \]
        then with probability at least $1 - \epsilon$, $A$ has full rank. This proves the result.
\end{proof}
Now we apply this lemma to study the rank of the Jacobian of the network. We rely on the observation that an input dataset $X$ is $\gamma$-anticoncentrated exactly when $\mc{D}_X$ is $\gamma$-anticoncentrated.
\begin{proof}[Proof of Theorem \ref{thm:anticoncentrated-volume-bound}]
    We employ the strategy used in the proof of Theorem \ref{possibly-empty-regions}. The Jacobian of $F$ with respect to $W$ is given by
    \[
    \nabla_W F(W, v, X) = ((v \odot a^{(1)}) \otimes x^{(1)}, \cdots, (v \odot a^{(n)}) \otimes x^{(n)}), 
    \] 
    where $a^{(j)}$ is the activation pattern of the $j$-th data point. With probability 1, none of the entries of $v$ are 0. So by Lemma \ref{cancel-v}, this Jacobian is of full rank when the set 
    \[
    \{a^{(i)} \otimes x^{(i)}: i \in [n] \}  
    \]
    consists of linearly independent elements of $\R^{d_1 \times d_0}$. We may partition $[n]$ into $d_0$ subsets $S_1, \cdots, S_{d_0}$ such that $|S_k| \leq \left\lceil\frac{n}{d_0}\right\rceil \leq \frac{n}{d_0} + 1$ for all $k \in [r]$ and partition the columns of $A$ accordingly into blocks $(a^{(s)})_{s \in S_k}$ for each $k \in [r]$. Let $M_k$ denote the $d_1 \times |S_k|$ matrix whose columns are the $a^{(s)}$, $s \in k$. Then the rows of $M_k$ are activation patterns of individual neurons of the network. So the rows of $M_k$ are iid and distributed according to $\mc{D}_X$. Since $\mc{D}_X$ is $\gamma$-anticoncentrated and
    \begin{align*}
        d_1 &\geq \frac{8}{\gamma^2}\log\left(\frac{d_0}{\epsilon}\right) + \frac{2}{\gamma}\left(\frac{n}{d_0} + 1\right)\\
        &\geq \frac{8}{\gamma^2}\log\left(\frac{d_0}{\epsilon}\right) + \frac{2|S_k|}{\gamma},
    \end{align*}
    we have by Lemma \ref{lemma:nearly-square-binary-matrix} that with probability at least $1 - \frac{\epsilon}{d_0}$, $M_k$ has rank $|S_k|$. So with probability at least $1 - \epsilon$, all of the $M_k$ have full rank. This means that for each $k$,
    \[\rank(\{a^{(i)}: i \in S_k\}) = |S_k|. \]
    Then by Lemma \ref{lemma:khatri-rao-rank},
    \[\rank(\{a^{(i)} \otimes x^{(i)}: i \in [n]\}) = n, \]
    so the Jacobian has rank $n$.
\end{proof}
   \section{Details on the experiments}  \label{app:experiments}
  We provide details on the experiments presented in Section~\ref{sec:experiments}. 
  In addition, we provide experiments evaluating the number of activation regions that contain global optima, illustrating Theorem~\ref{one-dim-global-minima}. 
  Experiments were implemented in Python using PyTorch \citep{paszke2019pytorch}, numpy \citep{harris2020array}, and mpi4py \citep{dalcin2011parallel}. The plots were created using Matplotlib \citep{hunter_matplotlib_2007}.
  The experiments in Section ~\ref{subsec:exp_jacobian} were run on the CPU of a MacBook Pro with an M2 chip and 8GB RAM.  The experiments in Section~\ref{subsec:exp_global} were run on a CPU cluster that uses Intel Xeon IceLake-SP processors (Platinum 8360Y) with 72 cores per node and 256 GB RAM. 
  The computer implementation of the scripts needed to reproduce our experiments can be found at 
  \url{https://anonymous.4open.science/r/loss-landscape-4271}. 

  \subsection{Non-empty activation regions with full rank Jacobian}
  \label{subsec:exp_jacobian}
  
  We sample a dataset $X \in \R^{d_0 \times n}$ whose entries are sampled iid Gaussian with mean 0 and variance 1. We use a two-layer network with weights $W \in \R^{d_1 \times d_0}$ and biases $b \in \R^{d_1}$ initialized iid uniformly on $\left[ -\frac{1}{\sqrt{d_1}}, \frac{1}{\sqrt{d_1}}\right]$. We choose a random activation region by evaluating $F$ at parameters $(W, b)$ and dataset $X$. Then we compute the Jacobian of $F$ on this activation region, and record whether or not it is of full rank. We repeat this 100 times to estimate the probability of the Jacobian being of full rank for a randomly sampled activation region. We calculate this probability for various values of $n$, $d_0$, and $d_1$. The results are reported in Figure~\ref{fig:jacobian_rank} and Figure~\ref{fig:jacobian_rank_linear_scaling}.

  \subsection{Non-empty activation regions with global minimizers}
  \label{subsec:exp_global}
  
  We sample data $X,y$ as follows. The input data $X$ is generated as independent samples from a uniform distribution on the cube $[-1, 1]^{d_0}$. 
  We consider three types of samples for the labels $y$: 
  \begin{itemize}
  \item Polynomial: We construct a polynomial of degree $2$ with the coefficients sampled from a uniform distribution on the interval $[-1, 1]$. The labels are then the evaluations of this polynomial at the points from $X$.
  \item Teacher: We construct a teacher network with an identical architecture to the main network used in the experiment. Then we initialize it with the same procedure as described below for the main network, which acts as a student network in this setup, and we take the outputs of the teacher network on $X$ as the labels.
  \item Random output: We sample labels from a uniform distribution on the interval $[-1, 1]$. 
  \end{itemize}
  
  We sample activation regions as follows.
  For each experiment trial, we construct a ReLU fully-connected network with $d_0$ inputs, $d_1$ hidden units, and $1$ output. Weights and biases of the hidden units are sampled iid from the uniform distribution on the interval $[- \sqrt{6/ \text{fan-in}}, \sqrt{6/ \text{fan-in}}]$ according to the uniform-He initialization \citep{he2015delving}. The weights of the output layer are initialized as alternating $1$ and $-1$ and look like $[1, -1, 1, -1, \ldots]$. Additionally, we generate a new dataset $X, y$ for each experiment trial. Afterward, we consider an activation pattern $A$ corresponding to the network and the input data $X$. 
  
  For a given dataset $(X,y)$ and activation pattern $A$, we look for a zero-loss global minimizer to the linear regression problem: 
  $\min_{\theta \in \mathbb{R}^{1 \times (d_0 + 1) d_1}} \frac{1}{2} \| \tilde{X} \theta - y^T \|_2^2$
  subject to $\theta \in \mathcal{S}_X^A$, where $\theta \in \mathbb{R}^{(d_0 + 1) d_1 \times 1}$ is a flattened matrix of the first layer weights and $\tilde{X} \in \mathbb{R}^{n \times (d_0 + 1) d_1}$ is a vector with entries $\tilde{X}_{jk} = v_{k \ {\operatorname{mod}} \ d_1} A_{j {(k \ {\operatorname{mod}} \ d_1)}} X_{j {\lfloor k / d_1 \rfloor}}$, $y \in \mathbb{R}^{1 \times n}$. 
  Here we appended the first layer biases to the weight rows and appended $1$ to each network input.
  Following the descriptions given in Section~\ref{sec:counting-bad}, the second condition is a system of linear inequalities.
  Thus, this linear regression problem corresponds to the next quadratic program:
  \begin{align*}
      &\min_{\theta \in \mathbb{R}^{1 \times (d_0 + 1) d_1}} \frac{1}{2} \theta^T P \theta + q^T \theta, \quad \text{where } P = \tilde{X}^T \tilde{X}, q = -\tilde{X}^T y^T \\
       &(2A_{ij}-1) \langle w^{(i)}, x^{(j)} \rangle >0 \quad \text{for all }i\in[d_1], j\in[n].
  \end{align*}
  We record the percentage of sampled regions with a zero-loss minimizer.
  This is reported in Figures~\ref{fig:percentage-global-min}, \ref{fig:percentage-global-min-2}, \ref{fig:percentage-global-min-5} for the three different types of data and $d_0 = 1, 2, 5$. 
  %
  %
  The result is consistent with Theorem~\ref{one-dim-global-minima} in that, for $d_0 = 1$, the minimal width $d_1$, needed for most regions to have a global minimum, 
  is close to 
  $d_1 \sim n \log n$ predicted by the theorem. 
  We believe that this dependence will be relatively precise for most datasets. However, for specific data distributions, the probability of being in a region with a global minimum might be larger. In Figure~\ref{fig:percentage-global-min}, we see that, for instance, for data on a parabola, there are more interpolating activation regions than for data from a teacher network or random data.
  For higher input dimensions, we observe in Figures~\ref{fig:percentage-global-min-2} and \ref{fig:percentage-global-min-5} that most of the regions contain a global minimum which is consistent with Theorem~\ref{possibly-empty-regions} and Corollary~\ref{cor:non-empty-good}, by which any differentiable critical point in most non-empty activation regions is a global minimum. 
  
  Our figures are reminiscent of figures in the work of \citet{Oymak2019TowardMO}. They showed for shallow ReLU networks that if the number of hidden units is large enough, $\sqrt{d_1d_0}\geq Cn^2/d_0$, then gradient descent from random initialization converges to a global optimum. Empirically they observed a phase transition at roughly (but not exactly) $d_0d_1 = n$ for the probability that gradient descent from a random initialization successfully fits $n$ random labels. Note that, in contrast, we are recording the number of regions that contain global optima.
  
  

  \begin{figure}[!htb]
      \centering
      \begin{subfigure}[c]{.32\textwidth}  
      \centering
          \includegraphics[width=\textwidth]{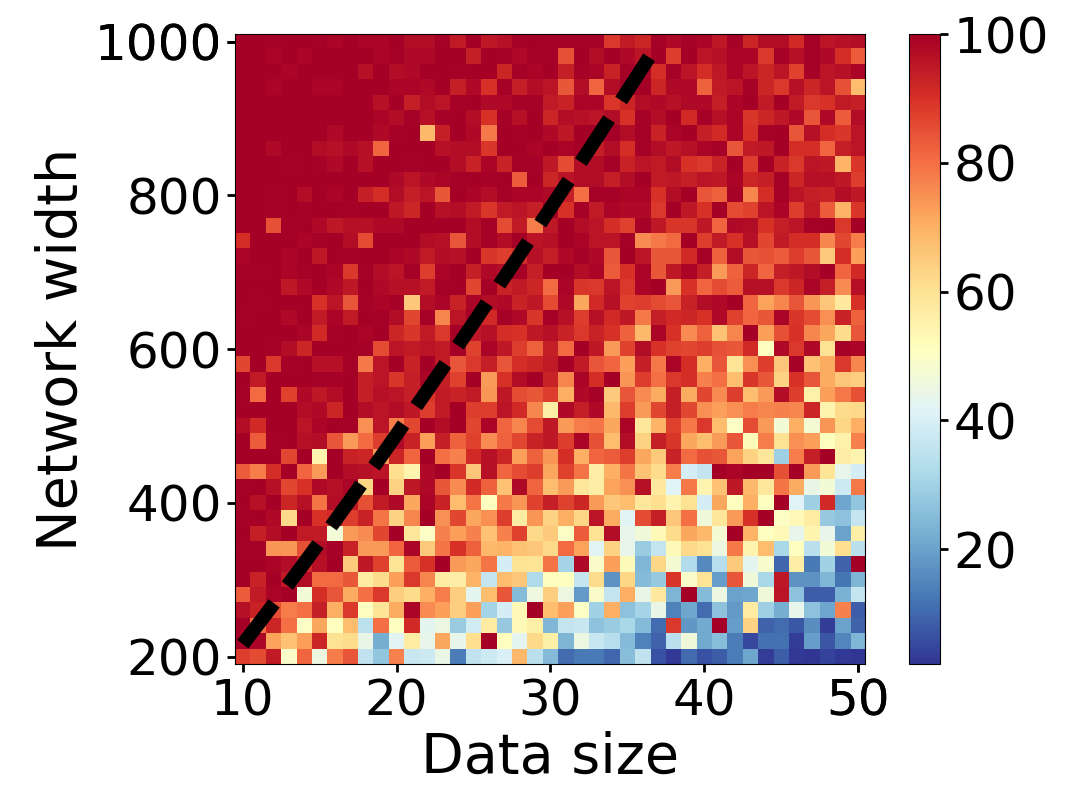}
          \caption{Polynomial data, degree $1$.}
      \end{subfigure}
      \begin{subfigure}[c]{.32\textwidth}  
      \centering
          \includegraphics[width=\textwidth]{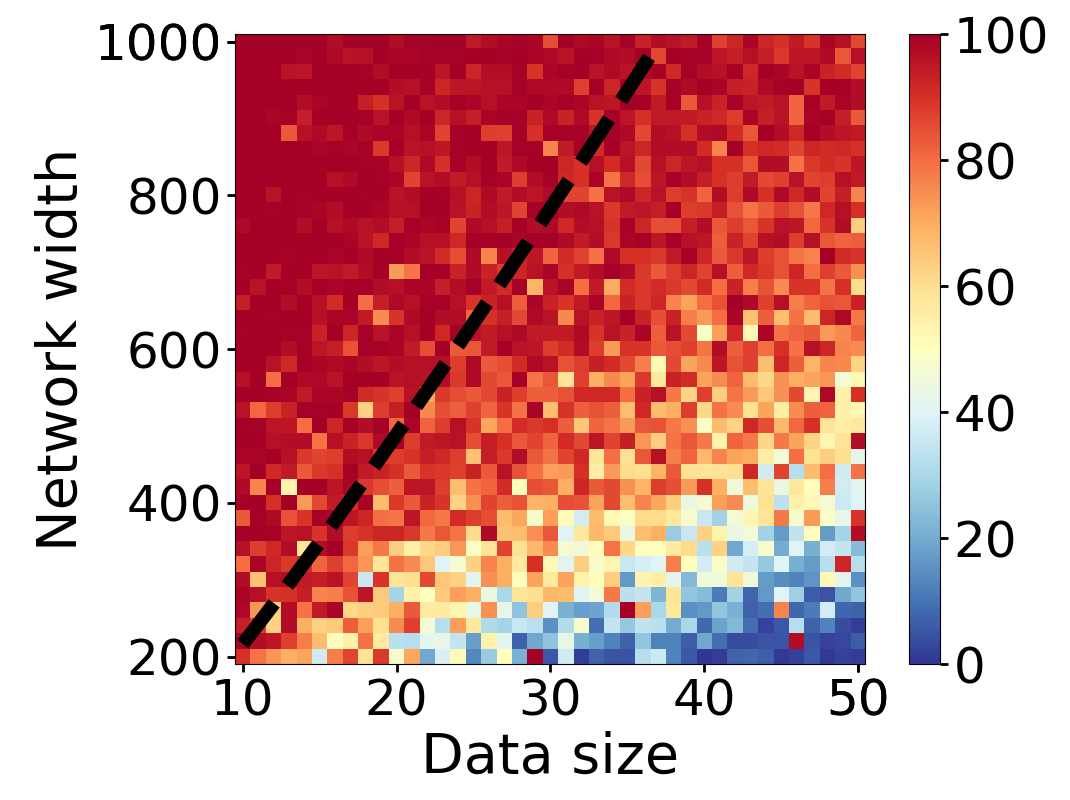}
          \caption{Polynomial data, degree $2$.}
      \end{subfigure}
      \begin{subfigure}[c]{.32\textwidth}  
      \centering
          \includegraphics[width=\textwidth]{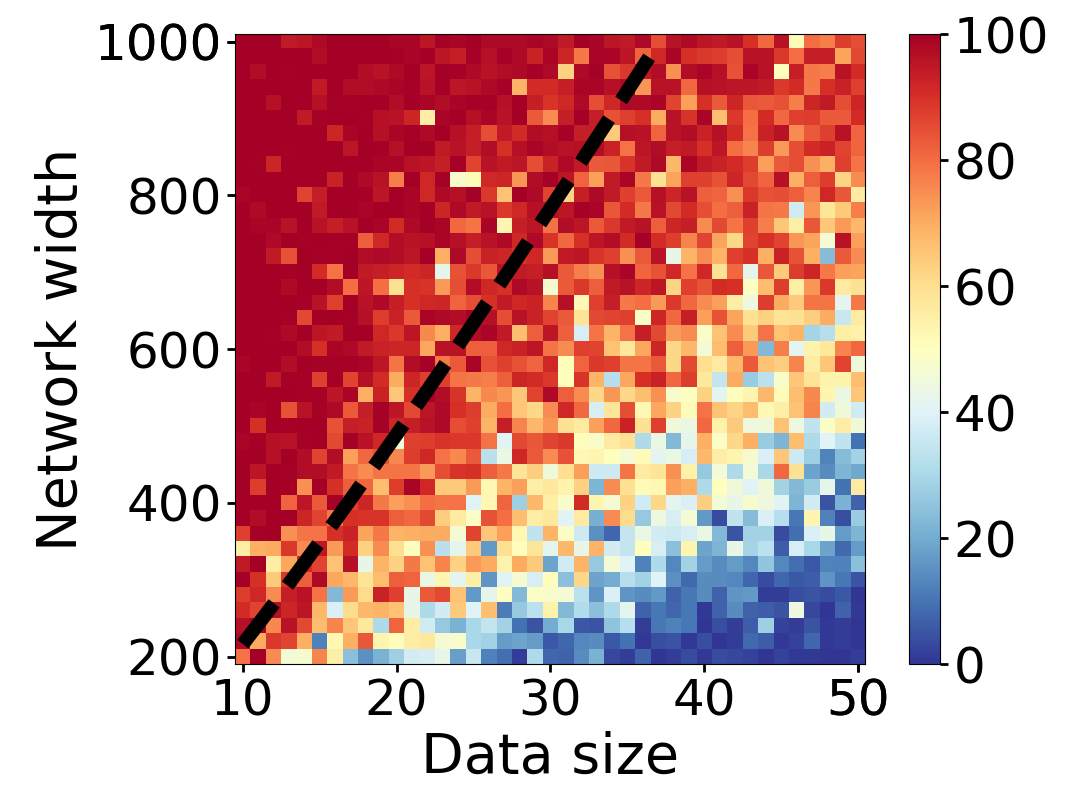}
          \caption{Polynomial data, degree $10$.}
      \end{subfigure}\\
      \vspace{.1cm}
      \begin{subfigure}[c]{.32\textwidth}  
      \centering
          \includegraphics[width=\textwidth]{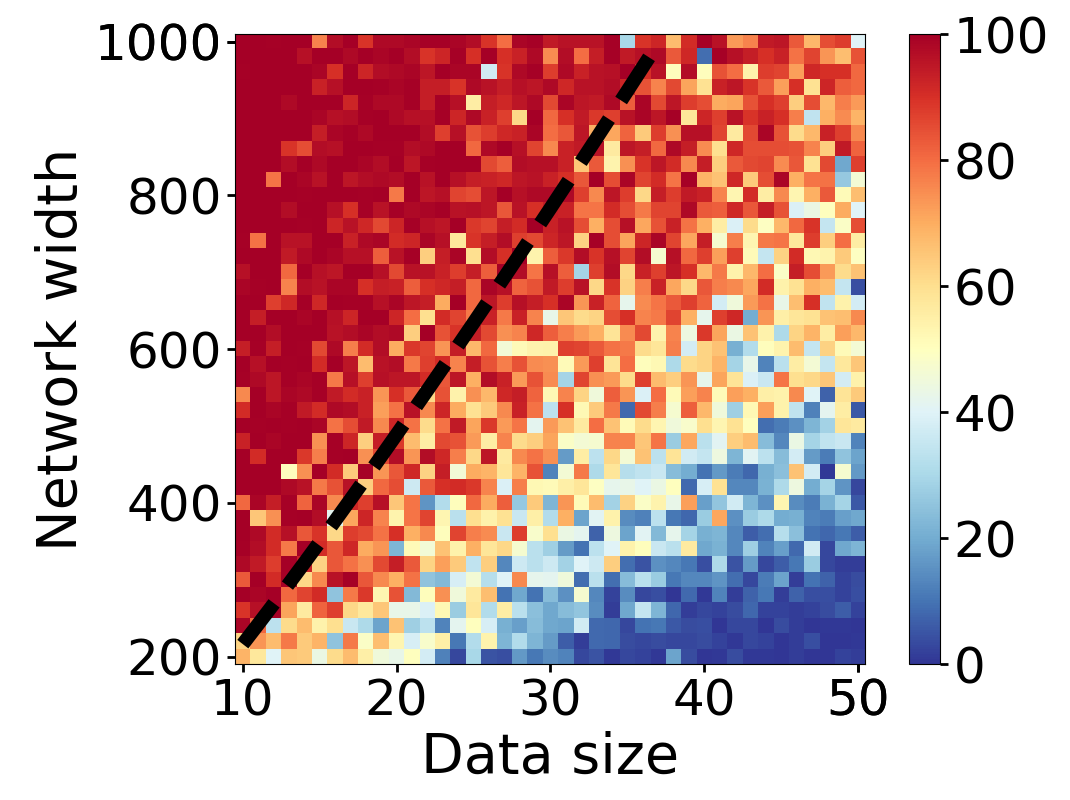}
          \caption{Polynomial data, degree $100$.}
      \end{subfigure}
      \begin{subfigure}[c]{.32\textwidth}
      \centering
          \includegraphics[width=\textwidth]{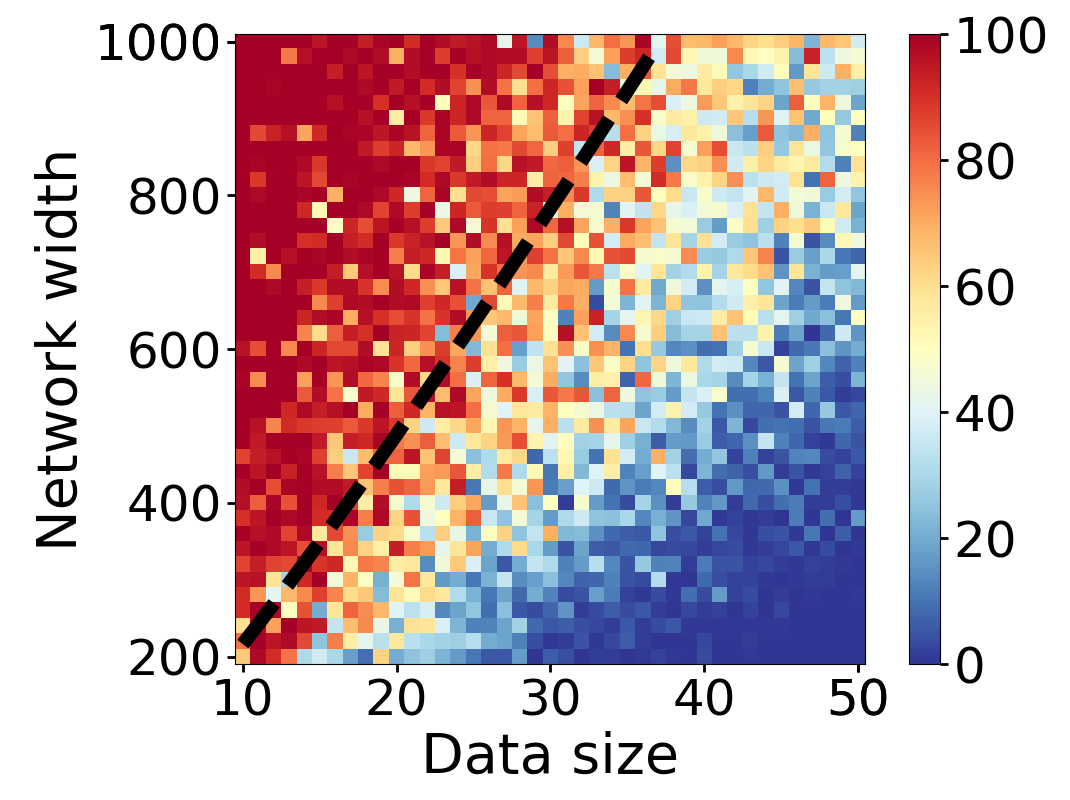}
          \caption{Teacher-student setup.}
      \end{subfigure}    
      \begin{subfigure}[c]{.32\textwidth}
      \centering
          \includegraphics[width=\textwidth]{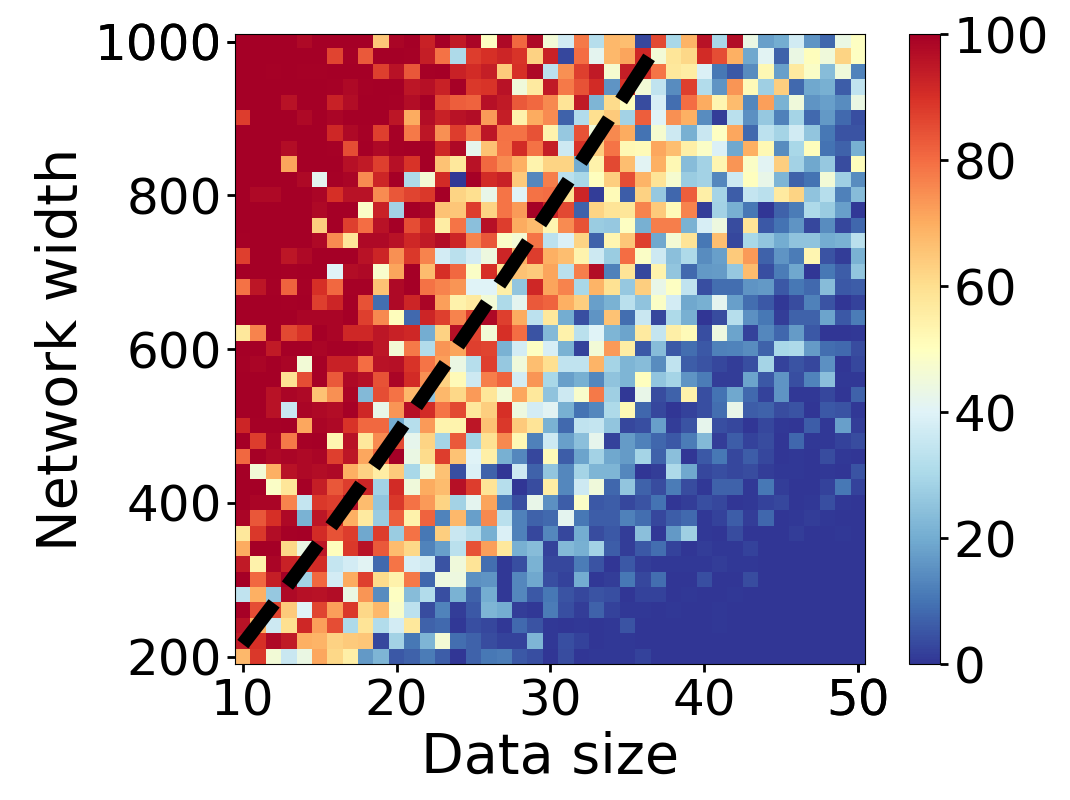}
          \caption{Random output.}
      \end{subfigure}
      \caption{Percentage of randomly sampled activation regions that contain a global minimum of the loss for the networks with input dimension $d_0=1$, depending on the size $n$ of the dataset and the width $d_1$ of the hidden layer. The results are based on $140$ random samples of the activation region for each fixed value of $n,d_1$. The target data is the same for each random network initialization for the same combination of $n$ and $d_1$. The black dashed line corresponds to the lower bound on $d_1$ estimated for a given $n$ and $\epsilon = 0.1$ based on the condition on the number of the negative and positive weights in the last network layer from Theorem~\ref{one-dim-global-minima}. 
      Precisely, it represents the function $d_1 = 4 n \log(2n / \epsilon)$. 
      } 
      \label{fig:percentage-global-min}
  \end{figure}
  
  \begin{figure}[!htb]
      \centering
      \begin{subfigure}[c]{.32\textwidth}  
      \centering
          \includegraphics[width=\textwidth]{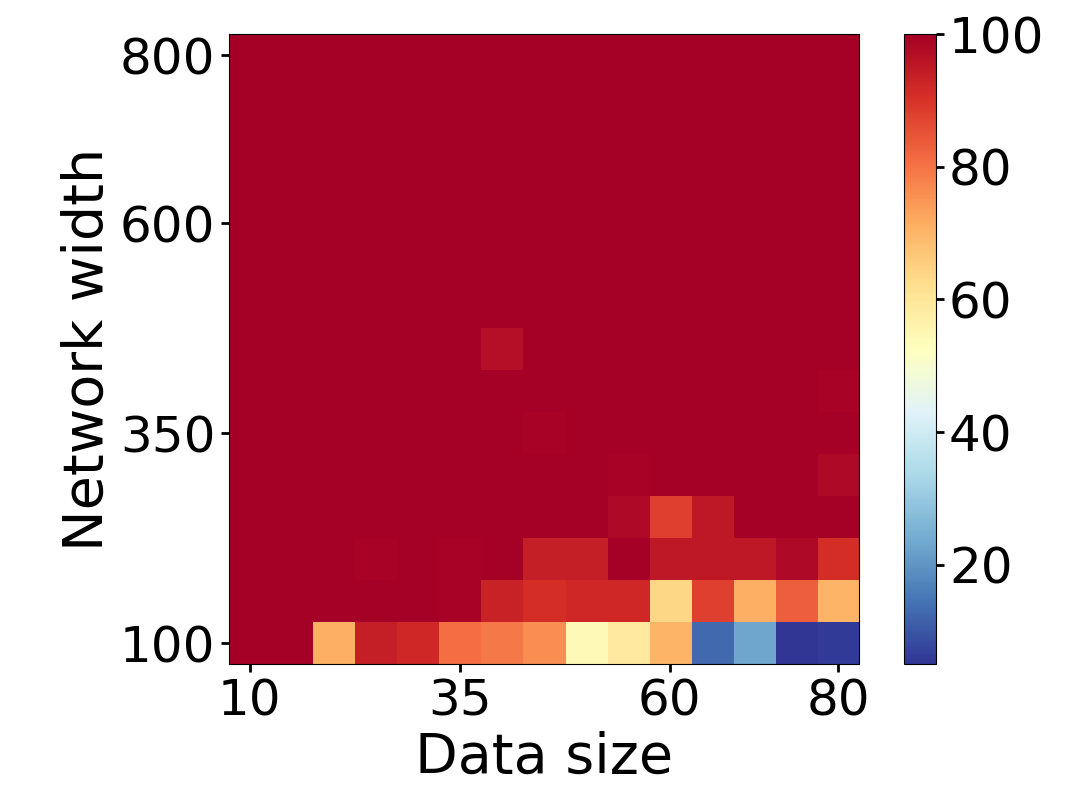}
          \caption{Polynomial data, degree $2$.}
      \end{subfigure}
      \begin{subfigure}[c]{.32\textwidth}
      \centering
          \includegraphics[width=\textwidth]{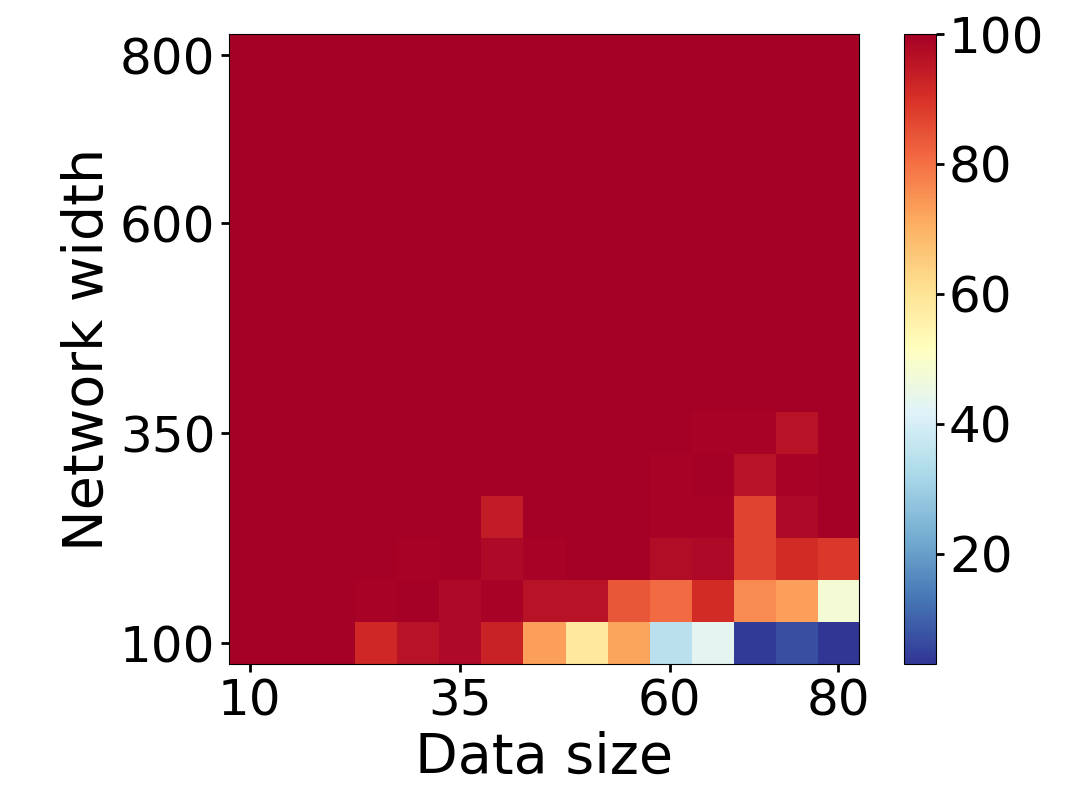}
          \caption{Teacher-student setup.}
      \end{subfigure}    
      \begin{subfigure}[c]{.32\textwidth}
      \centering
          \includegraphics[width=\textwidth]{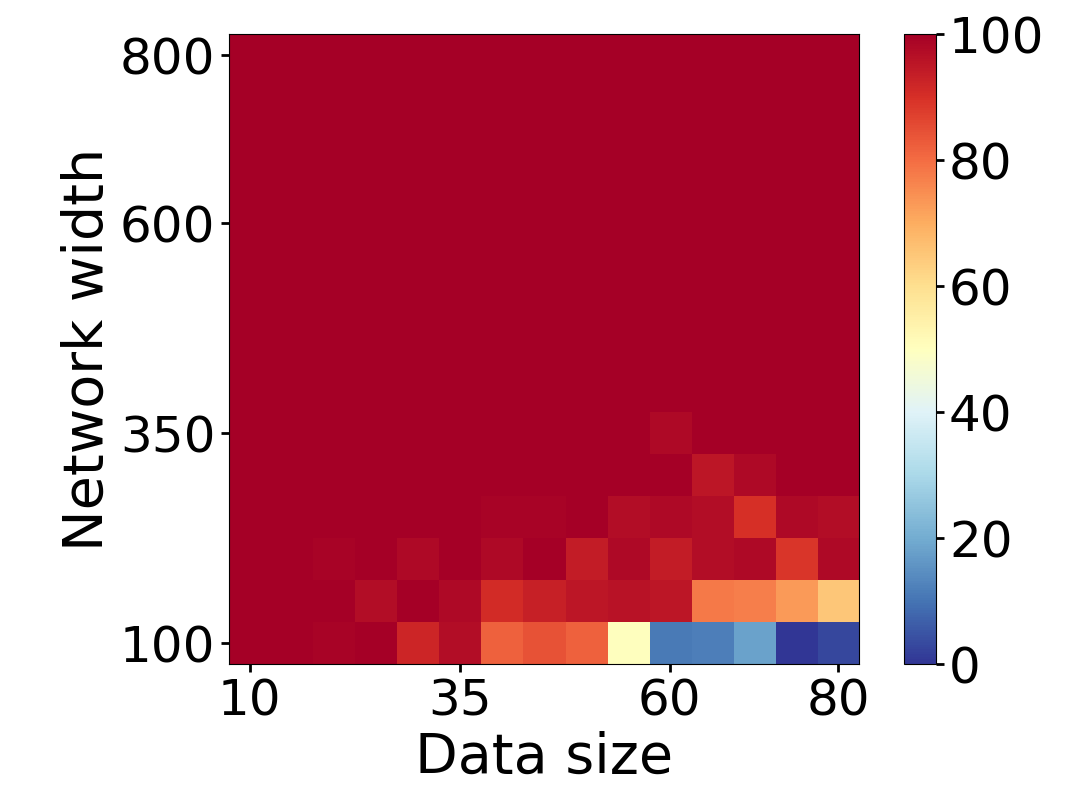}
          \caption{Random output.}
      \end{subfigure}
      \caption{Percentage of randomly sampled activation regions that contain a global minimum of the loss for the networks with input dimension $d_0=2$, depending on the size $n$ of the dataset and the width $d_1$ of the hidden layer. The results are based on $100$ random samples of the activation region for each fixed value of $n,d_1$. The target data is the same for each random network initialization for the same combination of $n$ and $d_1$.
      } 
      \label{fig:percentage-global-min-2}
  \end{figure}
  
  \begin{figure}[!htb]
      \centering
      \begin{subfigure}[c]{.32\textwidth}  
      \centering
          \includegraphics[width=\textwidth]{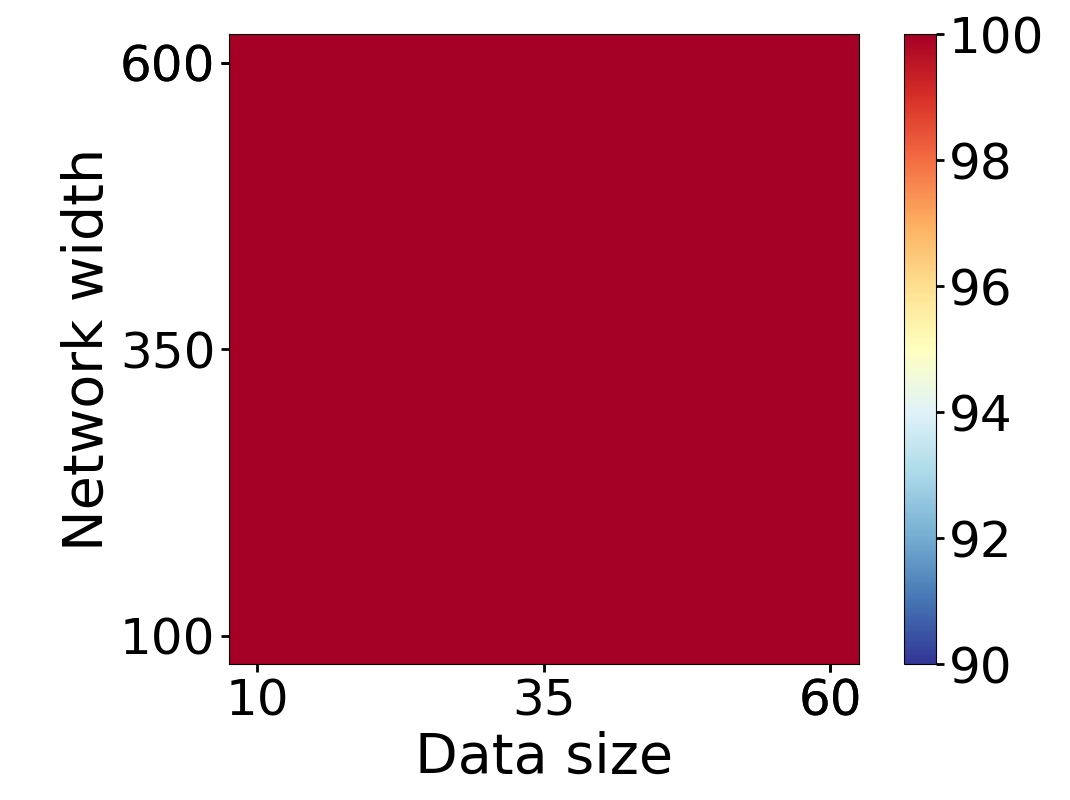}
          \caption{Polynomial data, degree $2$.}
      \end{subfigure}
      \begin{subfigure}[c]{.32\textwidth}
      \centering
          \includegraphics[width=\textwidth]{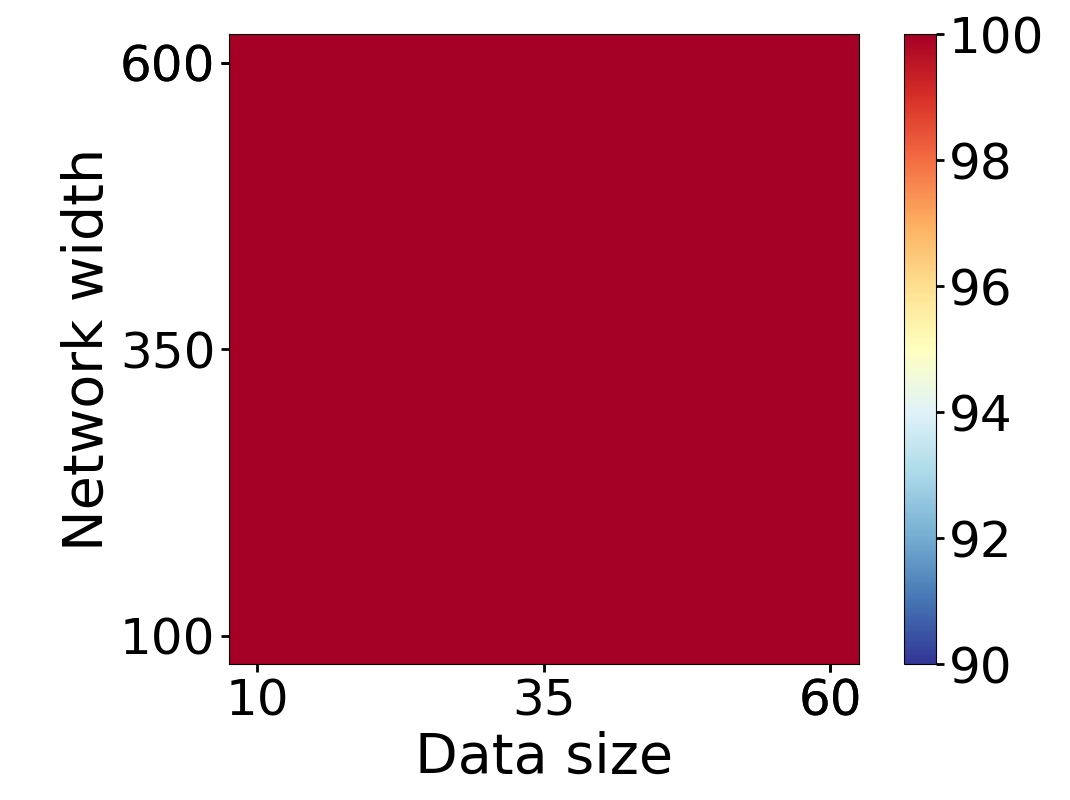}
          \caption{Teacher-student setup.}
      \end{subfigure}    
      \begin{subfigure}[c]{.32\textwidth}
      \centering
          \includegraphics[width=\textwidth]{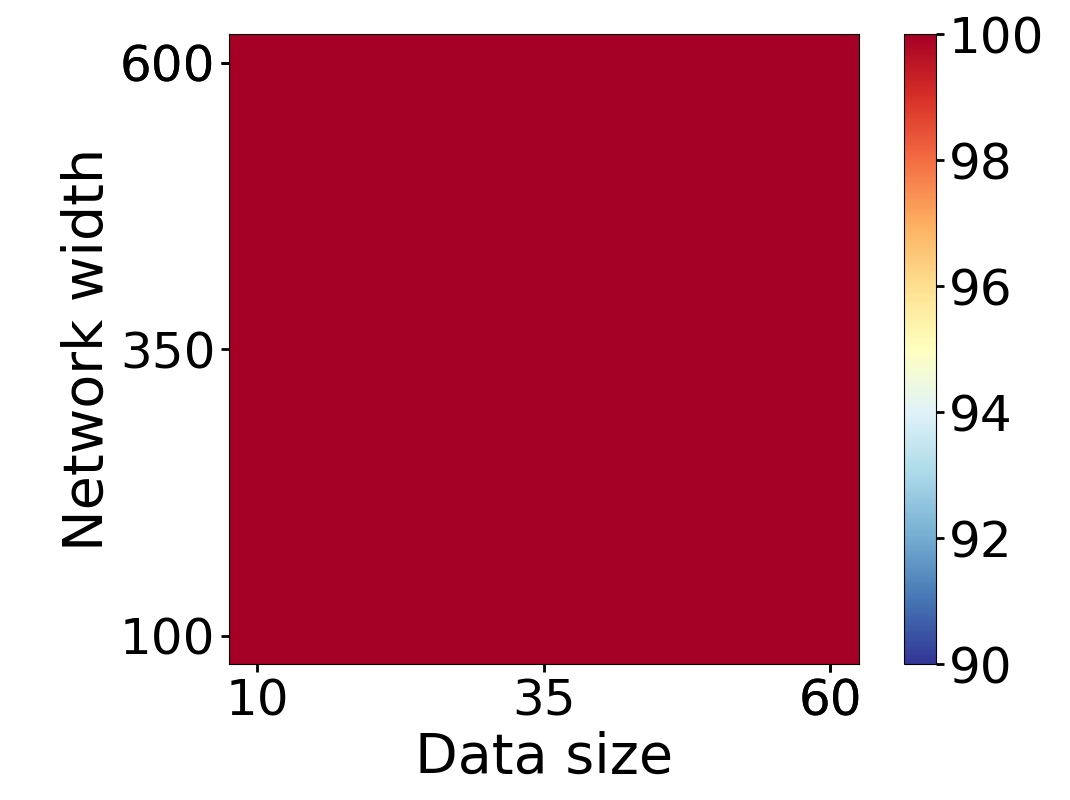}
          \caption{Random output.}
      \end{subfigure}
      \caption{Percentage of randomly sampled activation regions that contain a global minimum of the loss for the networks with input dimension $d_0=5$, depending on the size $n$ of the dataset and the width $d_1$ of the hidden layer. The results are based on $100$ random samples of the activation region for each fixed value of $n,d_1$. The target data is the same for each random network initialization for the same combination of $n$ and $d_1$.
      } 
      \label{fig:percentage-global-min-5}
  \end{figure}


    \begin{figure}
\centering
\begin{subfigure}{.5\textwidth}
  \centering
  \includegraphics[width=.75\linewidth]{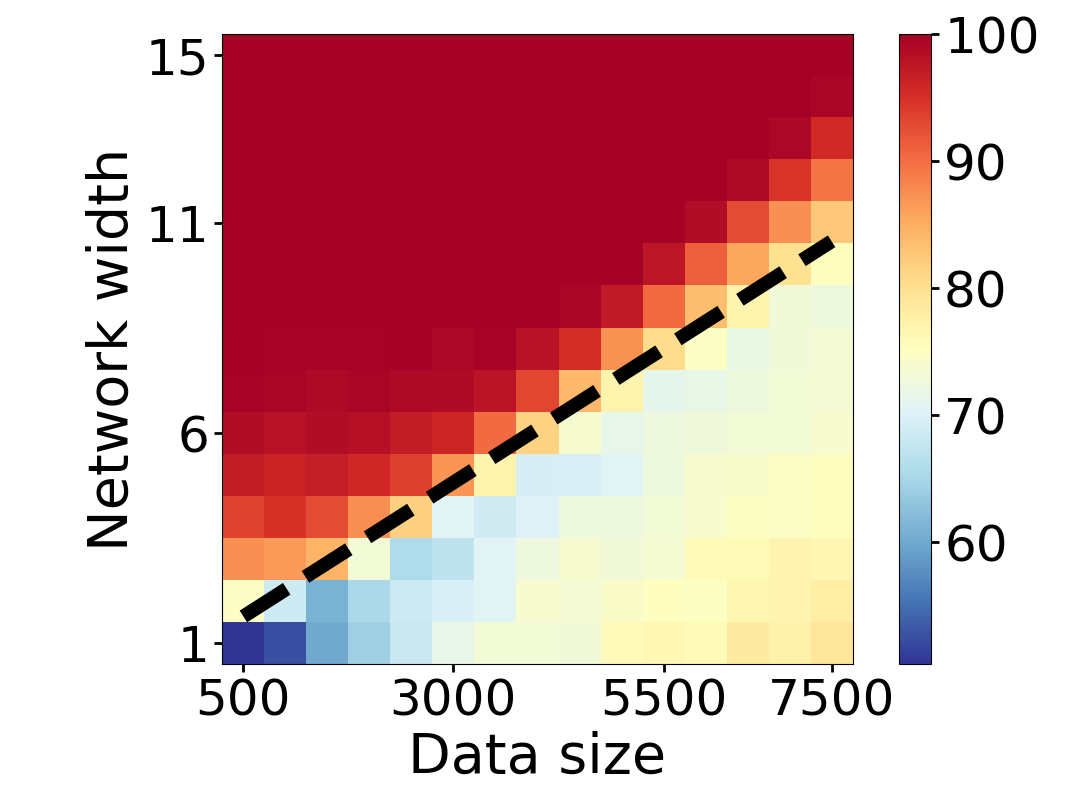}
  \caption{(\%) Full rank Jacobian at initialization}
  \label{fig:sub1}
\end{subfigure}%
\begin{subfigure}{.5\textwidth}
  \centering
  \includegraphics[width=.75\linewidth]{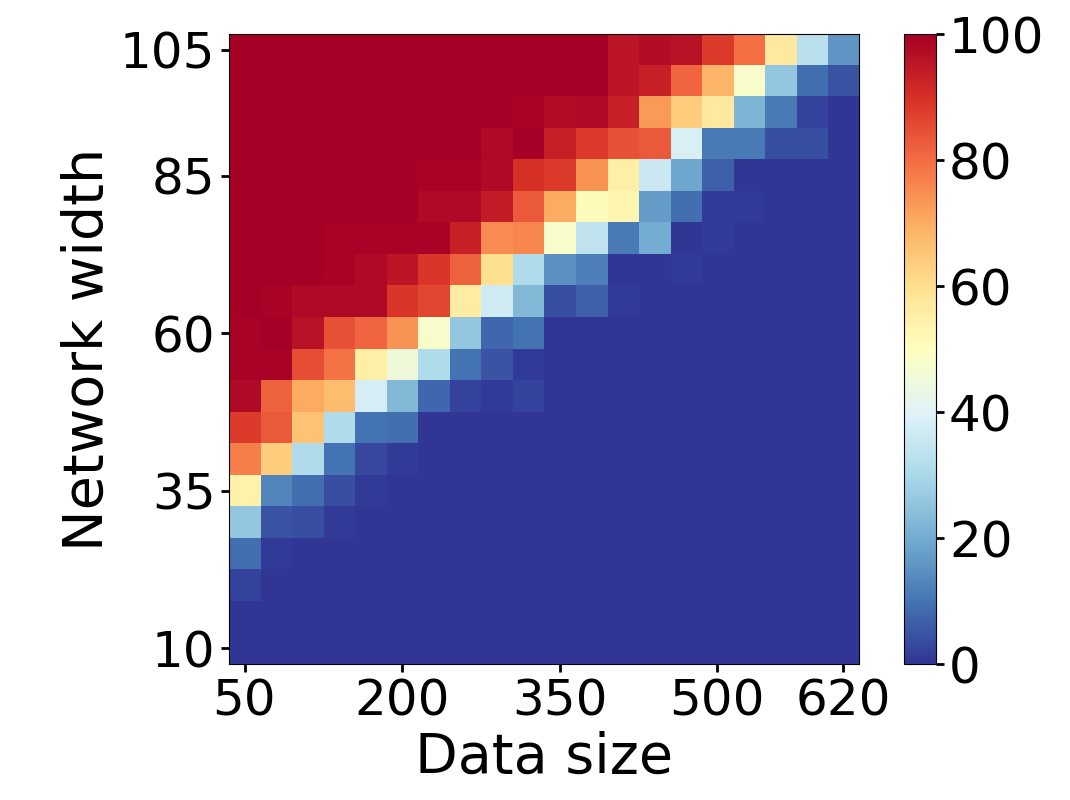}
  \caption{(\%) Convergence of GD to global minimizer}
  \label{fig:sub2}
\end{subfigure}
\caption{Classification task on MNIST to predict one hot binary class vectors. These heatmaps show percentages out of 100 trials of networks trained with GD from a random Gaussian initialization which have a full rank Jacobian at initialization (a) and achieve a cross-entropy loss of at most $10^{-2}$ after 1000 epochs (b). 
The number of network parameters matches the training set size $n$ when the width satisfies $d_1=n/d_0$, where for MNIST the input dimension is $d_0=784$. 
As $d_0$ is large our results predict that there should exist some linear scaling of the network width $d_1$ and the data size $n$ such that in all but a very small fraction of regions every critical point is a global minimum.}
\label{fig:test}
\end{figure} \label{figure:phase-transitions}
  
  \end{document}